\definecolor{mydarkblue}{rgb}{0,0.08,0.45}
\definecolor{dmorange500}{HTML}{FF5F19}
\definecolor{dmblue300}{HTML}{2267EB}
\definecolor{dmred300}{HTML}{FF617B}
\theoremstyle{plain}
\newtheorem{theorem}{Theorem}[section]
\newtheorem{lemma}[theorem]{Lemma}
\newtheorem{corollary}[theorem]{Corollary}
\theoremstyle{definition}
\newtheorem{assumption}{\textbf{H}\hspace{-3pt}}
\theoremstyle{remark}
\newtheorem{example}[theorem]{Example}
\definecolor{aurometalsaurus}{rgb}{0.43, 0.5, 0.5}
\definecolor{britishracinggreen}{rgb}{0.0, 0.26, 0.15}
\definecolor{burntumber}{rgb}{0.54, 0.2, 0.14}
\definecolor{cobalt}{rgb}{0.0, 0.28, 0.67}
\definecolor{bulgarianrose}{rgb}{0.28, 0.02, 0.03}
\definecolor{ceruleanblue}{rgb}{0.16, 0.32, 0.75}
\definecolor{darkgreen}{RGB}{0,128,0}
\newcommand{\N}{\mathbb{N}}
\newcommand{\R}{\mathbb{R}}
\newcommand{\E}{\mathbb{E}}
\newcommand{\XC}{\mathbb{R}^{d}}
\newcommand{\YC}{\mathcal{Y}}
\newcommand{\ZC}{\mathcal{Z}}
\newcommand{\argmin}{\operatornamewithlimits{\arg\min}}
\newcommand{\Oh}{\operatorname{\mathrm{O}}}
\newcommand{\prob}{\mathbb{P}}
\newcommand{\rmd}{\mathrm{d}}
\newcommand{\1}{\mathds{1}}
\newcommand{\pr}[1]{\left({#1}\right)}
\newcommand{\prt}[1]{({\textstyle{#1}})}
\newcommand{\prn}[1]{({#1})}
\newcommand{\prbig}[1]{\big({#1}\big)}
\newcommand{\prBig}[1]{\Big({#1}\Big)}
\newcommand{\br}[1]{\left[{#1}\right]}
\newcommand{\brbig}[1]{\big[{#1}\big]}
\newcommand{\ac}[1]{\left\{{#1}\right\}}
\newcommand{\act}[1]{\{{\textstyle{#1}}\}}
\newcommand{\abs}[1]{\left\lvert{#1}\right\rvert}
\newcommand{\gauss}{\mathcal{N}}
\newcommand{\q}[1]{Q_{#1}}
\renewcommand{\algorithmiccomment}[1]{\textbf{\textcolor{darkgreen}{// \texttt{#1}}}}
\newcommand{\tv}{\mathrm{d}_{\mathrm{TV}}}
\def\iid{i.i.d.}
\newcommand{\confReg}{\mathcal{R}}
\newcommand{\emptysymbol}{\emptyset}
\newcommand{\pred}{\mathrm{pred}}
\newcommand{\algo}{\texttt{CP}$^2$}
\newcommand{\hpdalgo}{\texttt{CP}$^2$\texttt{-HPD}}
\newcommand{\pcpalgo}{\texttt{CP}$^2$\texttt{-PCP}}
\newcommand{\PZ}{\Pi_{Z\mid X}}
\newcommand{\distr}{\Pi}
\newcommand{\gammayx}{\Pi_{Y\mid X}} 
\newcommand{\pdfdistr}{\gamma} 
\newcommand{\adjfunc}[1]{f_{#1}}
\newcommand{\tcount}{n}
\newcommand{\thmspace}{\vspace{0.4em}}
\newcommand{\paragraphformat}[1]{\paragraph{#1}}  % {\vspace{.2em}\textbf{#1}\hspace{0.1em}} {\paragraph{#1}}
\newcommand{\measure}{\mu}
\begin{document}
% !TEX root = main.tex

\title{\center Probabilistic Conformal Prediction with \\ Approximate Conditional Validity}

\author{
  \hspace*{-1.2em}
  Vincent Plassier$^{1}$ % \footnote{\texttt{vincent.plassier@ens-paris-saclay.fr}}
  \,
  % \And
  Alexander Fishkov$^{2,3}$
  \,  
  % \And
  Mohsen Guizani$^{3}$
  \,  
  % \AND
  Maxim Panov$^{3}$
  \,  
  % \And
  Eric Moulines$^{3,4}$
  \vspace*{.3em}
  \\
  \hspace*{-2.3em}
  {}$^{1}$ Lagrange Mathematics and Computing Research Center 
  {}$^{2}$ Skolkovo Institute of Science and Technology
  \vspace*{.3em}
  \\
  \hspace*{.7em}
  {}$^{3}$ Mohamed bin Zayed University of Artificial Intelligence
  {}$^{4}$ CMAP, Ecole Polytechnique
}

\maketitle

\begin{abstract}
  We develop a new method for generating prediction sets that combines the flexibility of conformal methods with an estimate of the conditional distribution $\textup{P}_{Y \mid X}$. Existing methods, such as conformalized quantile regression and probabilistic conformal prediction, usually provide only a marginal coverage guarantee. In contrast, our approach extends these frameworks to achieve approximately conditional coverage, which is crucial for many practical applications. Our prediction sets adapt to the behavior of the predictive distribution, making them effective even under high heteroscedasticity.
  While exact conditional guarantees are infeasible without assumptions on the underlying data distribution, we derive non-asymptotic bounds that depend on the total variation distance of the conditional distribution and its estimate. Using extensive simulations, we show that our method consistently outperforms existing approaches in terms of conditional coverage, leading to more reliable statistical inference in a variety of applications.
 \end{abstract}

\section{Introduction}
\label{sec:introduction}
% !TEX root = ../main.tex

Conformal predictions are commonly used to construct prediction sets. Under minimal assumptions, they offer finite-sample validity~\citep{vovk2005algorithmic,shafer2008tutorial}. However, significant challenges arise with high heteroskedasticity, often leading to incorrect inferences~\citep{dewolf2023heteroskedastic}. The split-conformal approach uses a set of $\tcount$ calibration data points $\{(X_k,Y_k)\}_{k\in[\tcount]}$ with $X_k \in \XC$ and $Y_k \in \YC$ to create a prediction set $\mathcal{C}_{\alpha}(x)$ where $\alpha \in (0,1)$. For each $x\in\XC$, the prediction set based on a conformity score function $V\colon \XC\times\YC\to \R$, is given by
\begin{equation}
  \mathcal{C}_{\alpha}(x)
  = \ac{y\in\YC\colon V(x,y)\le \q{1-\alpha}\pr{\frac{1}{\tcount+1} \sum\nolimits_{k=1}^{\tcount} \delta_{V(X_k,Y_k)} + \frac{1}{\tcount+1} \delta_{\infty}}},
\label{eq:split_pred_set}
\end{equation}
where \(\delta_x\) is the Dirac mass and $\q{1-\alpha}$ is the $(1 - \alpha)$-quantile of the adjusted empirical score distribution $\frac{1}{\tcount+1} \sum_{k=1}^{\tcount} \delta_{V(X_k,Y_k)} + \frac{1}{\tcount+1} \delta_{\infty}$.
If the calibration data $\{(X_k,Y_k)\}_{k\in[\tcount]}$ is drawn \iid\ from a population distribution $\textup{P}_{X,Y}$, then for any new data point $(X_{\tcount+1},Y_{\tcount+1})\sim \textup{P}_{X,Y}$ sampled independently of the calibration data, the conformal theory ensures the \textit{marginal validity} of $\mathcal{C}_{\alpha}(X_{\tcount+1})$, meaning that
$\prob\pr{Y_{\tcount+1}\in \mathcal{C}_{\alpha}(X_{\tcount+1})} \ge 1 - \alpha$.
%
%Initially, most of the conformal methods focused on estimating a mean regression function for $Y \mid X$, to then construct a fixed-width band around it; see~\citet{vovk1999machine,vovk2005algorithmic}. 
This marginal guarantee can hide significant discrepancies in the coverage of different regions of the input space $\XC$; see e.g.~\citet{izbicki2022cd}. %Conditional conformal prediction methods allow to construct a confidence interval that adapt to the test point under consideration $X_{\tcount+1}$.
%For example, if a patient has certain characteristics, the goal is to guarantee that their treatment will be successful with a confidence of $1-\alpha$. Therefore, we want to adapt the guarantees to the profile of the individual instead of giving guarantees that only apply to the population as a whole.
Conditional validity is a more desirable guarantee than marginal validity:
for any $x \in \XC$, the set $\mathcal{C}_{\alpha}(x)$ is  \textit{conditionally valid} if 
\begin{equation}
\label{eq:conditional-validity}
\prob\pr{Y_{\tcount+1}\in \mathcal{C}_{\alpha}(X_{\tcount+1}) \mid X_{\tcount+1}=x}
  \ge 1 - \alpha.
\end{equation}
However, this property cannot be achieved without further assumptions about the data distribution; see~\citet{vovk2012conditional,lei2014distribution}.
For practical purposes, it is enough to construct sets $\mathcal{C}_{\alpha}$ which approximate~\eqref{eq:conditional-validity}, and ideally achieve it asymptotically under suitable conditions in the limit of large sample size $n$.

\textbf{Related work.} A great deal of research has been devoted to this problem, starting with the case where $\YC=\R$.
For example \citet{romano2019conformalized,kivaranovic2020adaptive} proposed methods based on estimate of the lower and upper conditional quantile function $\hat{q}_{\alpha/2}$ and $\hat{q}_{1-\alpha/2}$, to define a quantile-based conformity score:
$V(x,y) = \max\ac{\hat{q}_{\alpha/2}(x)-y, y-\hat{q}_{1-\alpha/2}(x)}$ which is then conformalized. 
Improvements of this conformity score are  investigated in~\citet{kivaranovic2020adaptive,sesia2020comparison}. \citet{sesia2020comparison} have shown that the constructed interval converges to the narrowest possible interval that achieve conditional coverage under mild assumptions. We stress that these methods are specific to the case $\YC=\R$; in addition, when the conditional distribution $\textup{P}_{Y \mid X}$ is multimodal, restricting  prediction to intervals is suboptimal; see~\citet{wang2023probabilistic} for a discussion and examples.
%\begin{wrapfigure}{r}{0.25\textwidth}
%    \centering
%    % \vskip -.5em
%    \includegraphics[width=.25\textwidth,height=.23\textwidth]{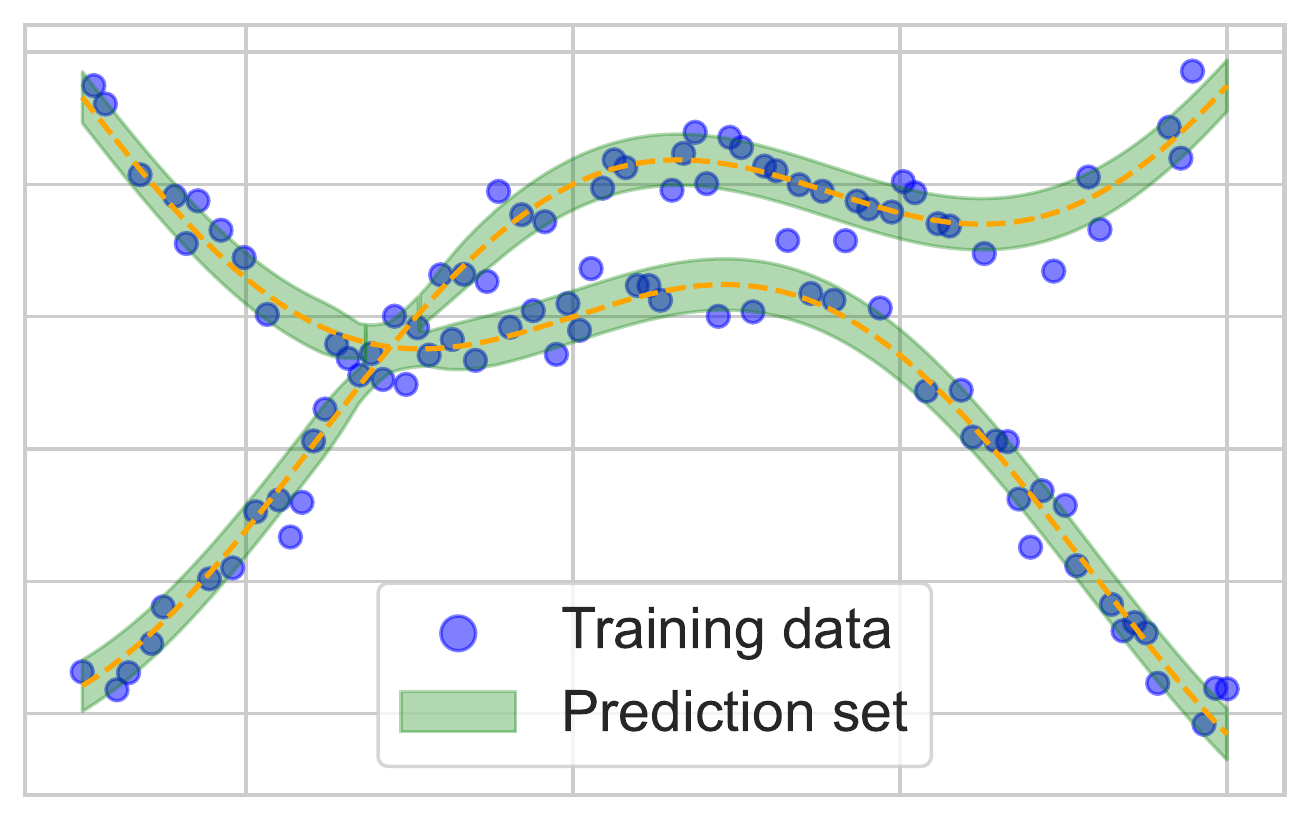}
%    % \vskip -1em
%    \caption{Bimodal example.}
%    \label{fig:bimodal}
%\end{wrapfigure}
%\vspace{-10pt}
It has been suggested in~\citet{guan2023localized,alaa2023conformalized} to partition $\XC$ and learn a specific quantile for each  regions. Splitting the space $\XC$ into multiple regions typically leads to an increase in the length of the prediction set; see~\citet{romano2020malice,melki2023group} for comments.

A number of studies have focused on the construction of prediction sets using an estimator of the conditional distribution $\textup{P}_{Y \mid X}$. Again, in the case where $\YC=\R$, \citet{cai2014adaptive,lei2014distribution} have constructed prediction intervals based on an estimator of  the conditional density and have established the asymptotic validity under appropriate conditions. \citet{han2022split} use kernel density estimation to construct asymmetric prediction bands. However, this method cannot handle bimodality as it generates a single interval.
On the other hand, \citet{sesia2021conformal} partition the domain of $Y$ into bins to create a histogram approximation of $\textup{P}_{Y \mid X}$. The authors showed that their method satisfies the marginal validity while achieving the asymptotic conditional coverage; see also~\citet{lei2018distribution}. Asymptotic conditional coverage is also obtained in~\citet{sesia2020comparison,cauchois2020knowing} using quantile regression-based methods, or using cumulative distribution function estimators~\citep{izbicki2019flexible,chernozhukov2021distributional}. Conditionally valid prediction sets have been shown to improve the robustness to perturbations~\citep{gendler2021adversarially}.
\citet{guha2024conformal} proposed a novel approach that converts regression tasks into classification problems by binning the output space and discretizing the labels. Leveraging this discretization, they approximate the conditional density to construct prediction sets that correspond to regions of Highest Predictive Density (HPD). A key limitation of these methods lies in the discretization process, as the number of labels required for complex scenarios can become computationally prohibitive.  \citet{diamant2024conformalized} introduced an approach that estimates conditional densities using neural networks parameterized by splines, offering a more flexible representation. 

Very few studies have addressed the scenario where the prediction target is multi-dimensional, i.e., \(\mathcal{Y} = \R^q\) with \(q > 1\). 
\citet{wang2023probabilistic} developed the \texttt{PCP} method, based on implicit conditional generative models (CGMs). These CGMs allow for the generation of samples from the conditional distribution without requiring an explicit closed-form expression. The \texttt{PCP} method constructs prediction sets as unions of balls, whose centers are generated from the CGM. However, in \texttt{PCP}, the radius of these balls is fixed across the space, which can introduce significant limitations. In regions with low variability, a fixed radius may result in over-coverage, while for highly dispersed conditional distributions, it may lead to under-coverage, failing to capture the full extent of the relevant space.
We observed that the performance of \texttt{PCP} deteriorates as the number of balls increases, exacerbating the heteroscedasticity problem. This underscores the need for a more adaptive methodology that can dynamically adjust the size of prediction sets in response to local variability in the data. 

Our work addresses these challenges through the following main \textbf{contributions}:
\begin{itemize}
  \item We propose a new \algo\ method for constructing conditional confidence sets that adapts to the local structure of the data distribution, capable of addressing both classical regression problems where $\YC = \R$ and more complex multi-dimensional prediction tasks where $\YC = \R^q$. Our approach is versatile in accommodating scenarios involving either an explicit conditional density estimator or an implicit generative model; see Section~\ref{sec:ccp}.

  \item We develop a theoretical framework to analyze the properties of the proposed \algo\ method, establishing both its marginal and approximate conditional validity. Furthermore, we demonstrate that asymptotic conditional coverage is attainable under a weak consistency assumption on the predictive distribution; see Section~\ref{sec:theory}.

  \item We demonstrate the effectiveness of the proposed method through a series of experiments on synthetic and real-world datasets. The results indicate that our approach consistently outperforms existing methods in terms of conditional coverage. Specifically, it excels in handling classical regression problems, effectively addressing multimodality, and proves robust in the more challenging setting of multidimensional prediction tasks; see Section~\ref{sec:expriments}.
\end{itemize}

\section{The \algo\ framework}
\label{sec:ccp}
% !TEX root = ../main.tex

We want to construct marginally valid predictive sets with approximate conditional validity. We follow the split-conformal approach to conformal inference due to its computational feasibility with large datasets; see along others~\citet{papadopoulos2002inductive,papadopoulos2008inductive,papadopoulos2011regression,romano2019conformalized,kivaranovic2020adaptive}. The first step  involves splitting the data samples into two disjoint subsets, the training set $\mathcal{T} =\{(\tilde{X}_k,\tilde{Y}_k)\}_{k=1}^m$ and calibration set $\mathcal{C} = \{(X_k,Y_k) \}_{k=1}^n$. It is assumed in the sequel that the training and calibration data are mutually independent and \iid\ with distribution $\textup{P}_{X,Y}$ over the feature vectors $X \in \XC$ and response variables $Y\in\YC$.
The target set $\YC$ can be either finite or continuous. We must now predict the unknown value of $Y_{\tcount+1}$ given an input $X_{\tcount+1}$, independent from $\mathcal{T} \cup \mathcal{C}$. Our goal is to construct a prediction set $\mathcal{C}_{\alpha}(X_{\tcount+1})$ that contains the unobserved output $Y_{\tcount+1}$ with probability close to $1-\alpha$, where $\alpha \in (0,1)$ is the user-specified confidence level. 

An estimator $\distr_{Y \mid X}$ of the conditional probability $\textup{P}_{Y \mid X}$ is learnt using the training data. There is a rich body of research on nonparametric conditional density estimation, with the most common methods relying on smoothing techniques such as kernel smoothing and local polynomial fitting.
An alternative approach involves transforming the conditional density estimation task into a regression problem, allowing the application of nonparametric regression methods to approximate the conditional density. More recently, generative methods leveraging deep neural networks have been developed for nonparametric conditional density estimation, which enable sampling from the conditional distribution; see~\citet{abadi2016deep, zhou2021deepgenerative} for examples of these techniques. 
In the sequel, the choice of this algorithm is treated as a black box. There are three main ingredients for our approach:
\begin{enumerate}[leftmargin=1.5em] % , itemsep=0pt, parsep=0pt, partopsep=0pt
  \item In classical conformal prediction methods, the shape of the prediction set is specified by the score function $V(x, y)$; see~\eqref{eq:split_pred_set}. The first element of our construction is a family of explicitly given confidence sets $\confReg_{z}(x; t)$ parameterized by $t \in \mathsf{T}$ where $\mathsf{T}$ is a subset of $\R$ and $z \in \mathcal{Z}$ auxiliary variables. The index set $\mathsf{T}$ can, in most cases, be taken as either $\mathsf{T} = \R$ or $\mathsf{T} = \R_+$. The key assumptions for the confidence set are as follows:
  (a) The size of $\confReg_{z}(x; t)$ increases with $t \in \mathsf{T}$ for any $z \in \ZC$. In addition, by choosing a sufficiently large value of $t$, the entire output space $\YC$ can be covered.
  (b) There exists a form of continuity for $t\mapsto \confReg_{z}(x;t)$. In mathematical terms, the following assumption should hold:

\begin{assumption}\label{ass:confReg}
  For any $(x,z)\in\XC\times \ZC$, the confidence sets $\{ \confReg_{z}(x; t) \}_{t \in \mathsf{T}}$ are non-decreasing, $\distr_{Y|X=x}(\cap_{t\in\mathsf{T}} \confReg_{z}(x; t))=0$, $\cup_{t\in\mathcal{T}} \confReg_{z}(x; t)=\YC$; in addition, for any $t \in \mathsf{T}$, $\cap_{t'>t}\confReg_{z}(x;t')=\confReg_{z}(x; t)$.
\end{assumption}

  \begin{example}
    For instance, $\confReg(x;t)$ can be chosen as a ball centered around an estimate of conditional mean $\textup{P}_{Y \mid X}$ with radius $t$ and there is no auxiliary variables then (we then remove subscript $z$). Examples of confidence intervals specialized to the case where $\mathcal{Y}=\R$ are given in \Cref{table:confReg-examples}.
  \end{example}

  \begin{example}
    If the predictive distribution is multimodal, a ball  centered around the predictive mean often fails to provide an informative prediction set. Ideally, $\confReg_z(x; t)$ should correspond to the set with the highest predictive density (HPD) of $\textup{P}_{Y \mid X}$. However, HPD regions are difficult to determine in practice, even when the conditional predictive density is available. Following \cite{wang2023probabilistic}, we may define the prediction sets  as $\confReg_{z}(x; t):=\cup_{i=1}^{M}\mathrm{B}(y_i, t)$, and they depends on an exogenous variables $z=(y_1,\ldots,y_{M}) \in \ZC$, where each $y_i$ is sampled conditionally independently from the conditional generative model $\distr_{Y \mid X=x}$. 
  \end{example}
  In a general setting, the auxiliary variables $z \in \mathcal{Z}$ are sampled according to a kernel $\bar{\distr}_{Z \mid X=x}$ for a given $x$. 

  \item To localize conformal prediction methods, it is convenient to introduce a function $\adjfunc{\tau}(\lambda)$ parameterized by $\tau$. Such function aims to transform the conformity score $\lambda$ and was introduced (albeit in a slightly different form) in~\citet{deutschmann2023adaptive,han2022split}. Examples of such a function are $\adjfunc{\tau}(\lambda) = \tau \lambda$ and $f_\tau(\lambda)= \tau + \lambda$. We assume that:
  \begin{assumption}\label{ass:tau}
    There exists $\varphi\in\mathsf{T}$ such that $\tau \in\mathsf{T} \mapsto \adjfunc{\tau}(\varphi)$ is increasing and bijective.
    In addition, $\lambda \in\mathsf{T} \mapsto \adjfunc{\tau}(\lambda)$ is increasing for any $\tau\in\mathsf{T}$.
  \end{assumption}
  We define $\tau_{x,z}$ using the estimated predictive density $\distr_{Y \mid X=x}$ according to 
  \begin{equation}\label{eq:def:tau}
    \tau_{x,z} = \inf\ac{\tau \in \mathsf{T} \colon \distr_{Y \mid X=x}(\confReg_{z}(x; \adjfunc{\tau}(\varphi)))\ge 1-\alpha}.
  \end{equation} 
  It is easily shown that for any $\alpha\in(0,1)$, $x\in\XC$, $z\in\ZC$, then $\tau_{x,z} \in \mathsf{T}$ and $\distr_{Y \mid X=x}(\confReg_{z}(x;\adjfunc{\tau_{x,z}}(\varphi)))\ge 1-\alpha$; see~\Cref{lem:tau}.

  \item Finally, we need to introduce the conformity score for the considered setup. The natural choice is the minimal size of the set required to cover the observation $y$ at the input $x$ for the auxiliary variables $z$:  $\lambda_{x,y,z} = \inf\ac{t \in \mathsf{T} \colon y \in \confReg_z(x; t)}$.

  \item The resulting procedure works as follows. For $k \in \{1,\dots,n\}$, we set $\bar{\tau}_{k} := \tau_{X_k,Z_k}$ and $\bar{\lambda}_{k} := \lambda_{X_k,Y_k,Z_k}$ where $\{Z_k\}_{k=1}^n$ are sampled conditionally independently from $\bar{\distr}_{Z \mid X= X_k}$. Given $X_{\tcount+1}\in\XC$, we sample  $Z_{\tcount+1}\sim \bar{\distr}_{Z\mid X=X_{\tcount+1}}$ conditionally independently from $\{(X_k,Y_k,Z_k)\}_{k=1}^{n}$,   
and construct the resulting \algo\ prediction set as
\begin{equation}\label{eq:pred-set}
  \textstyle
  \mathcal{C}_{\alpha}(X_{\tcount+1})
  = \confReg_{Z_{\tcount+1}}\pr{X_{\tcount+1}; \adjfunc{\bar{\tau}_{\tcount+1}}\bigl(\q{1-\alpha}(\measure_{\tcount})\bigr)},
\end{equation}
where $\q{1-\alpha}(\measure_{\tcount})$ is the $1-\alpha$ quantile of the distribution $\measure_{\tcount}$, and is given by
\begin{equation}\label{eq:def:mu}
  \measure_{\tcount} = \frac{1}{\tcount+1}\sum\nolimits_{k=1}^{\tcount} \delta_{\adjfunc{\bar{\tau}_k}^{-1}(\bar{\lambda}_k)} + \frac{1}{\tcount+1} \delta_{\infty}.
\end{equation}

 The transformation $\{v\mapsto \adjfunc{\tau}(v)\}_{\tau\in\mathsf{T}}$ balances the following two factors: (a) The optimal parameter $\lambda_{x,y,z}$ ensuring that $y$ is included in the confidence set $\confReg_{z}(x; \lambda_{x,y,z})$; (b) The parameter $\tau_{x,z}$ obtained from the probabilistic model $\Pi_{Y\mid X=x}$.
\end{enumerate}

\begin{table}[t]
  \centering
  \caption{Confidence sets $\confReg(x;t)$ found in the literature and also discussed in~\cite{gupta2022nested}.}  % \cite[Table~1]{gupta2022nested}
  \makebox[\linewidth]{
  \resizebox{1.0\textwidth}{!}{
  {
  \begin{tabular}{ccc}
    \toprule
    \cite{lei2018distribution} & \cite{lei2018distribution} & \cite{kivaranovic2020adaptive} \\
    $[\pred(x)-t, \pred(x)+t]$
    & $[\pred(x)-t \sigma(x), \pred(x)+t \sigma(x)]$ 
    & $(1+t)[q_{\alpha / 2}(x), q_{1-\alpha / 2}(x)]-t q_{1 / 2}(x)$ 
    \\
    \midrule
    \cite{chernozhukov2021distributional} & \cite{romano2019conformalized} & \cite{sesia2020comparison} \\
    $[q_{t}(x), q_{1-t}(x)]$
    & $[q_{\alpha / 2}(x)-t, q_{1-\alpha / 2}(x)+t]$ 
    & $[q_{\alpha / 2}(x), q_{1-\alpha / 2}(x)] \pm t (q_{1-\alpha / 2}(x)-q_{\alpha / 2}(x))$
    \\
    \bottomrule
  \end{tabular}}
  }
  }
  \label{table:confReg-examples}
\end{table}
We stress that \algo\ is a general framework that can be adapted to many choices for conditional predictive density estimates, constructing the family of confidence sets, and selecting the calibration function $f_\tau(\lambda)$. However, we start with the simple example that shows that \algo\ is more general than the classical split-conformal CP approach.

\begin{wrapfigure}{r}{0.46\textwidth}
  \vskip-1.5em
  \centering
  \includegraphics[width=.47\textwidth]{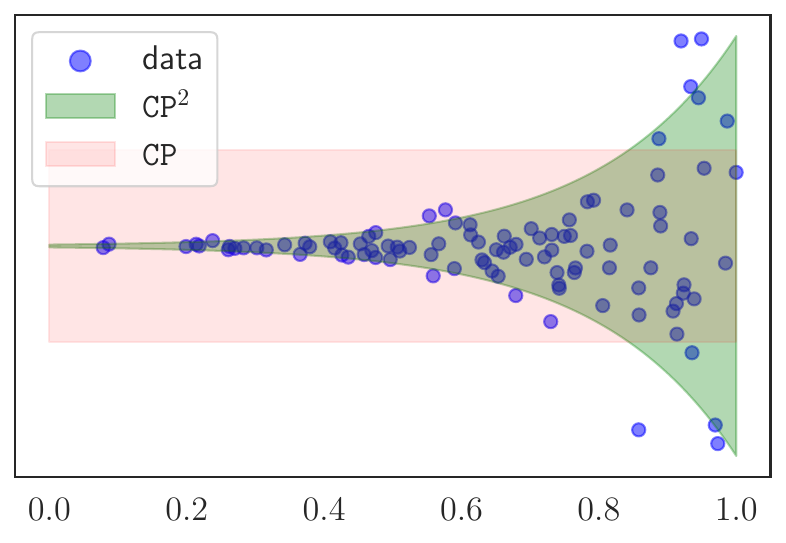}
  \caption{Predictions sets obtained via the standard \texttt{CP} and \algo\ methods.}
  \label{fig:funnel}
  \vskip-1em
\end{wrapfigure}

\paragraphformat{Simple example of \algo.}
We begin with a simple application of \algo\ to highlight its differences from the basic conformal approach, with $\YC=\R$. The calibration sets are defined as: $\mathcal{R}(x; t) = \{y \in \YC \colon |y - \text{pred}(x)| \leq t\}$ for $t \in \R_+$. There are no auxiliary variables $z$ in this case, so we omit $z$ from the notation. Assumption~\Cref{ass:confReg} is easily satisfied with $\mathsf{T} = \R_+$. We then take $f_\tau(\lambda) = \tau \lambda$, $\tau \in \R_+$  and $\varphi = 1$: \Cref{ass:tau} is also satisfied. Note that $f^{-1}_\tau(\lambda)= \lambda/\tau$ for $\tau \in \R_+^*$.
Using the \algo\ approach, we find that $\lambda_{x,y} = |y - \pred(x)|$, which corresponds to a standard conformity score. The classical conformal prediction method defines the $1-\alpha$ quantile based on the associated empirical measure $\nu_n = \frac{1}{n+1} \sum_{k=1}^n \delta_{\bar{\lambda}_k} + \frac{1}{n+1} \delta_\infty$, where $\bar{\lambda}_k = |Y_k - \pred(X_k)|$. \algo\ differs from the basic conformal approach by introducing $\tau_{x} = \argmin\{\tau \in \mathsf{T} \colon \distr_{Y \mid X=x}([\pred(x) \pm \tau] ) \geq 1 - \alpha\}$ as in~\eqref{eq:def:tau}, where $[\pred(x) \pm \tau]= [\pred(x)-\tau,\pred(x)+\tau]$.
The prediction set becomes $[\pred(x) \pm \adjfunc{\tau_{x}}(\q{1-\alpha}(\measure_{\tcount}))]$, where $\measure_{\tcount}= \frac{1}{n+1} \sum_{k=1}^n \delta_{\bar{\lambda}_k/\tau_k} + \frac{1}{n+1} \delta_\infty$.
%%%%%%%%%%%%%%%%%%% REMETTRE %%%%%%%%%%%%%%%%%%%
In this setting, the choice of $\varphi$ is irrelevant. Take $\varphi >0$ and denote $\tau^\varphi_{x} = \argmin\{\tau \in \mathsf{T} \colon \distr_{Y \mid X=x}([\pred(x) \pm \adjfunc{\tau}(\varphi)] ) \geq 1 - \alpha\}$. It is easily seen that $\tau_x^{\varphi}= \tau_x / \varphi$. The prediction set becomes  
$[\pred(x) \pm \adjfunc{\tau^\varphi_{x}}(\q{1-\alpha}(\measure^\varphi_{\tcount}))]$, where $\measure^\varphi_{\tcount}= \frac{1}{n+1} \sum_{k=1}^n \delta_{\bar{\lambda}_k/\bar{\tau}_k^\varphi} + \frac{1}{n+1} \delta_\infty$ with $\bar{\tau}_k^\varphi= \tau^\varphi_{X_k}$. Note that $\q{1-\alpha}(\measure^\varphi_{\tcount}))= \varphi  \q{1-\alpha}(\measure_{\tcount}))$ and thus
$\adjfunc{\tau^\varphi_{x}}(\q{1-\alpha}(\measure^\varphi_{\tcount}))= \adjfunc{\tau_{x}}(\q{1-\alpha}(\measure_{\tcount}))$ showing that, the prediction set does not depend on the choice of $\varphi$. To illustrate the advantage of our method, in~\Cref{fig:funnel} we present the prediction sets obtained with the classical CP method and \algo\ in the case of a Neal's funnel-shaped distribution in 2 dimensions; see~\citet[Section~9]{neal2003slice}.
%%%%%%%%%%%%%%%%%%%%%%%%%%%%%%%%%%%%%%%%%%%%%%%%

The natural approach for the general case is to use the conditional distribution $\Pi_{Y\mid X=x}$ or its estimate to find Highest Predictive Density (HPD) regions and calibrate their size with the help of \algo. We develop the respective general algorithm \hpdalgo\ in Appendix~\ref{sec:hpd}. However, the procedures to find HPDs are usually highly non-trivial and we only investigate this approach experimentally for synthetic data; see Section~\ref{sec:synthetic_data}.  Next, we provide a specific implementation of our general \algo\ framework that is universally applicable.

\paragraph{\algo\ with Implicit Conditional Generative Model: \pcpalgo.}
We also develop a second instance of the \algo\ algorithm, inspired by~\cite{wang2023probabilistic}. 
Unlike \hpdalgo, this approach does not require the conditional density. Instead, it is designed for cases where the conditional generative model (CGM) $\distr_{Y \mid X}$ is implicit, meaning we cannot evaluate it pointwise while being able to sample from it.
For each calibration point $X_k$, we draw $M$ random variables $\{\hat{Y}_{k,i}\}_{i=1}^{M}$ from $\distr_{Y \mid X=X_k}$.
We denote $Z_k=(\hat{Y}_{k,1},\ldots,\hat{Y}_{k,M})$ and consider the confidence sets as the union of spheres centered around the sample points \(\confReg_{Z_k}(X_k; t) = \cup_{i=1}^{M}\mathrm{B}(\hat{Y}_{k,i},t) \). With such choice, we get $\bar{\lambda}_{k}=\min_{i=1}^{M}\|Y_k-\hat{Y}_{k,i}\|$.
We then draw a second sample $\{\tilde{Y}_{k,j}\}_{j=1}^{\tilde{M}}$, and compute $\bar{\tau}_k=\{ t \in \R_+\colon \tilde{M}^{-1} \sum_{j=1}^{\tilde{M}} \1_{\confReg_{Z_k}(X_k;\adjfunc{t}(\varphi))}(\tilde{Y}_{k,j}) \ge 1 - \alpha\}$.
It is easily seen that
\begin{equation*}
  \textstyle
  \bar{\tau}_k
  = \pr{t\mapsto \adjfunc{t}(\varphi)}^{-1} \ac{\q{1-\alpha}\pr{
    \frac{1}{\tilde{M}} \sum_{j=1}^{\tilde{M}} \delta_{\min_{i=1}^{M}\|\tilde{Y}_{k,j}-\hat{Y}_{k,i}\|}
  }}.
\end{equation*}
Given a new input $X_{\tcount+1}\in\XC$, we sample $Z_{\tcount+1}=(\hat{Y}_{\tcount+1,1},\ldots,\hat{Y}_{\tcount+1,M})$ and obtain prediction set as follows
\begin{equation*}
  \textstyle
  \mathcal{C}_{\alpha}(X_{\tcount+1})
  = \ac{y\in\YC\colon \min_{i=1}^{M}\|y-\hat{Y}_{\tcount+1,i}\|\le \adjfunc{\bar{\tau}_{\tcount+1}}\pr{\q{1-\alpha}\pr{\measure_{\tcount}}}},
\end{equation*}
where $\measure_{\tcount}$ is given in~\eqref{eq:def:mu}.
The \pcpalgo\ method employs the same confidence set $\confReg_{z}(x;t)$ as the one used by \texttt{PCP}. This method effectively captures multimodalities using balls centered at likely outputs $\hat{Y}_{\tcount+1,i}$.
Furthermore, the conformity scores used by \texttt{PCP} correspond to our $\lambda_{x,y,z}$. However, the key distinction between the two algorithms lies in the additional parameter $\tau_{x,z}$ for \pcpalgo, which requires the generation of a second random sample from $\distr_{Y\mid X=x}$. This method is especially useful when solving equation~\eqref{eq:def:tau} is intractable. We summarize \pcpalgo\ in Algorithm~\ref{algo:CP2-PCP}.

\begin{algorithm}[t]
  \caption{\pcpalgo}
  \label{algo:CP2-PCP}
  \begin{algorithmic}[]
      \State {\bfseries Input:} dataset $\{(X_k,Y_k)\}_{k\in[\tcount]}$, significance level $\alpha$, conditional distribution $\distr_{Y \mid X}$, function $\adjfunc{t}$.
      \State \algorithmiccomment{Compute the $(1-\alpha)$-quantile}
      \For{$k=1$ {\bfseries to} $\tcount$}
          \State Sample $\{\hat{Y}_{k,i}\}_{i=1}^{M}$ and $\{\tilde{Y}_{k,j}\}_{j=1}^{\tilde{M}}$ from $\distr_{Y \mid X=X_k}$
          \State Set $\bar{\lambda}_{k} = \min_{i=1}^{M}\|Y_k-\hat{Y}_{k,i}\|$
          \State Set $\bar{\tau}_k = (t\mapsto\adjfunc{t}(\varphi))^{-1} \act{\q{1-\alpha}\prt{ \tilde{M}^{-1} \sum_{j=1}^{\tilde{M}} \delta_{\min_{i=1}^{M}\|\tilde{Y}_{k,j}-\hat{Y}_{k,i}\|}}}$
      \EndFor
      \State $\q{1-\alpha}\pr{\measure_{\tcount}} \gets \lceil (1-\alpha) (\tcount+1) \rceil$-th smallest value in $\{\adjfunc{\bar{\tau}_{k}}^{-1}(\bar{\lambda}_k)\}_{k\in[\tcount]}\cup\{\infty\}$
      \State \algorithmiccomment{Compute the prediction set for a new point $x\in\XC$}
      \State Sample $z=\{\hat{Y}_{i}\}_{i=1}^{M}$ and $\{\tilde{Y}_{j}\}_{j=1}^{\tilde{M}}$ from $\distr_{Y \mid X=x}$
      \State Set $\tau_{x,z} = \pr{t\mapsto \adjfunc{t}(\varphi)}^{-1} \act{\q{1-\alpha}\prt{ \tilde{M}^{-1} \sum_{j=1}^{\tilde{M}} \delta_{\min_{i=1}^{M}\|\tilde{Y}_{j}-\hat{Y}_{i}\|}}}$
      \State {\bfseries Output:} $\mathcal{C}_{\alpha}(x) = \cup_{i=1}^{M} \mathrm{B}(\hat{Y}_{i}, \adjfunc{\tau_{x,z}}\prt{\q{1-\alpha}\prt{\measure_{\tcount}}})$.
  \end{algorithmic}
\end{algorithm}

\section{Theoretical Guarantees}
\label{sec:theory}
% !TEX root = ../main.tex

In this section, we provide both marginal and conditional guarantees for the prediction set $\mathcal{C}_{\alpha}(x)$ given in~\eqref{eq:pred-set}.
The validity of these guarantees is ensured by the exchangeability of the calibration data, with the exception of \Cref{corr:coverage-conditional:asymptotic} which relies on a concentration inequality and thus requires i.i.d. calibration data.
The following theorem establishes marginal validity of the predictive set defined by \algo.
\thmspace

\begin{theorem}\label{thm:coverage:marginal}
  Assume \Cref{ass:confReg}-\Cref{ass:tau}.
  Then, for any $\alpha\in(0,1)$, it holds \(    1 - \alpha
    \le \prob\pr{Y_{\tcount+1} \in \mathcal{C}_{\alpha}(X_{\tcount+1})} \).
  Moreover, if the conformity scores $\{\adjfunc{\bar{\tau}_{k}}^{-1}(\bar{\lambda}_{k})\}_{k=1}^{\tcount+1}$ are almost surely distinct, then it also holds that  
 \(
    \prob\pr{Y_{\tcount+1} \in \mathcal{C}_{\alpha}(X_{\tcount+1})}
    < 1 - \alpha + (\tcount+1)^{-1}.
\)
\end{theorem}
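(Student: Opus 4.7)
The strategy is to reduce the statement to the classical split-conformal argument applied to the \emph{adjusted} conformity score from Remark~\ref{rem:ftau}. Write $\tilde{V}_k := \adjfunc{\bar{\tau}_k}^{-1}(\bar{\lambda}_k)$ for $k\in\{1,\dots,\tcount+1\}$. The core reduction is the identity
\[
  \{Y_{\tcount+1}\in\mathcal{C}_{\alpha}(X_{\tcount+1})\}
  = \{\tilde{V}_{\tcount+1}\le \q{1-\alpha}(\mu)\},
\]
after which both bounds follow from exchangeability of $\tilde{V}_1,\dots,\tilde{V}_{\tcount+1}$.

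\textbf{Step 1 (reduction).} From the definition~\eqref{eq:def:lambda} of $\lambda_{x,y,z}$, the monotonicity in~\Cref{ass:confReg}, and~\Cref{lem:lambda-prop}, one checks the equivalence
$y\in\confReg_z(x;t)\iff t\ge \lambda_{x,y,z}$. Applying this with $t=\adjfunc{\bar{\tau}_{\tcount+1}}\bigl(\q{1-\alpha}(\mu)\bigr)$ and using that $\lambda\mapsto\adjfunc{\bar{\tau}_{\tcount+1}}(\lambda)$ is increasing by~\Cref{ass:tau} gives precisely the claimed identity (interpreting $\adjfunc{\bar{\tau}_{\tcount+1}}^{-1}$ as the generalized inverse so that $\adjfunc{\bar{\tau}_{\tcount+1}}(\lambda)\ge \bar{\lambda}_{\tcount+1}\iff \lambda\ge \adjfunc{\bar{\tau}_{\tcount+1}}^{-1}(\bar{\lambda}_{\tcount+1})$). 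This is where the assumptions are used in an essential way, and it is the only step that requires genuine care: the rest is bookkeeping.

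\textbf{Step 2 (exchangeability).} Because $(X_k,Y_k)_{k=1}^{\tcount+1}$ are i.i.d.\ and each $Z_k$ is drawn from $\distr_{Z\mid X=X_k}$, the triples $(X_k,Y_k,Z_k)$ are i.i.d. Since $\tilde{V}_k$ is a measurable function of $(X_k,Y_k,Z_k)$ only, the variables $\tilde{V}_1,\dots,\tilde{V}_{\tcount+1}$ are exchangeable. By~\Cref{lem:lambda-prop} and~\Cref{lem:tau-existence}, each $\tilde{V}_k$ is almost surely finite.

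\textbf{Step 3 (standard conformal bounds).} By construction, $\q{1-\alpha}(\mu)$ is the $\lceil(1-\alpha)(\tcount+1)\rceil$-th order statistic of $\{\tilde{V}_1,\dots,\tilde{V}_\tcount,+\infty\}$. Since $\tilde{V}_{\tcount+1}<\infty$ a.s., the event $\tilde{V}_{\tcount+1}\le \q{1-\alpha}(\mu)$ is equivalent to the rank of $\tilde{V}_{\tcount+1}$ among $\tilde{V}_1,\dots,\tilde{V}_{\tcount+1}$ being at most $\lceil(1-\alpha)(\tcount+1)\rceil$. Exchangeability forces this rank to be stochastically dominated by the uniform law on $\{1,\dots,\tcount+1\}$, yielding
\[
  \prob\pr{\tilde{V}_{\tcount+1}\le \q{1-\alpha}(\mu)}
  \ge \frac{\lceil(1-\alpha)(\tcount+1)\rceil}{\tcount+1}
  \ge 1-\alpha.
\]
When the $\tilde{V}_k$ are a.s.\ distinct, the rank of $\tilde V_{\tcount+1}$ is exactly uniform on $\{1,\dots,\tcount+1\}$, so the above probability equals $\lceil(1-\alpha)(\tcount+1)\rceil/(\tcount+1)<1-\alpha+1/(\tcount+1)$, giving the upper bound. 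Combining with Step~1 concludes the proof.

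The main obstacle, as noted, is Step~1: one must be careful about whether $\adjfunc{\bar{\tau}_{\tcount+1}}$ is a true bijection on $\R$ or only a monotone map, and whether the infimum in~\eqref{eq:def:lambda} is attained — both handled by~\Cref{ass:confReg}, \Cref{ass:tau}, and \Cref{lem:lambda-prop}. Once the reduction is established, Steps~2--4 are the textbook split-conformal argument.
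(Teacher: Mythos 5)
Your proof is correct and follows essentially the same route as the paper: reduce to the adjusted conformity score $\tilde V_k=\adjfunc{\bar\tau_k}^{-1}(\bar\lambda_k)$ via the monotonicity in \Cref{ass:confReg}--\Cref{ass:tau}, then invoke the textbook split-conformal coverage bounds via exchangeability (the paper phrases the last step as an average over index swaps, you phrase it via the rank of $\tilde V_{\tcount+1}$ being stochastically dominated by a uniform on $\{1,\dots,\tcount+1\}$, which are the same argument).
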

The proof is postponed to \Cref{subsec:proof:marginal-conditional}. Moreover, the upper bound on the coverage always holds when the distribution of $\adjfunc{\bar{\tau}_{k}}^{-1}(\bar{\lambda}_{k})$ is continuous.
Now, we will investigate the conditional validity. 
Denote by $\tv$ the total variation distance and by $\prob^{\mathcal{T}}$ the conditional probability given the training data.
\thmspace

\begin{theorem}\label{thm:coverage:conditional}
  Assume \Cref{ass:confReg}-\Cref{ass:tau}, and let $\alpha\in(0,1)$.
  For any $x\in\R^d$ and $z\in\ZC$, it holds
  \begin{equation*}
    \prob^{\mathcal{T}}\pr{Y_{\tcount+1} \in \mathcal{C}_{\alpha}(x) \mid (X_{\tcount+1}, Z_{\tcount+1})= (x,z)}
    \ge 1 - \alpha - \tv(\textup{P}_{Y\mid X=x}; \distr_{Y \mid X=x})
    - p_{\tcount+1}^{(x,z)},
  \end{equation*}
  where
  $
    p_{\tcount+1}^{(x,z)} = \prob^{\mathcal{T}}\pr{\q{1-\alpha}(\measure_{\tcount}) < \adjfunc{\bar{\tau}_{\tcount+1}}^{-1}(\bar{\lambda}_{\tcount+1}) \le \varphi \,\vert\, (X_{\tcount+1},Z_{\tcount+1})=(x,z)}
  $.
\end{theorem}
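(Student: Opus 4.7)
The plan is to condition throughout on $\{X_{\tcount+1}=x,\,Z_{\tcount+1}=z\}$ — which makes $\bar\tau_{\tcount+1}=\tau_{x,z}$ deterministic — reduce the coverage event to a one‑dimensional quantile comparison, and then split that comparison at the threshold $\varphi$, handling one side with the definition of $p_{\tcount+1}^{(x,z)}$ and the other with \Cref{lem:tau-existence} plus a total‑variation bound.

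\textbf{Step 1 (reduction).} First I would use \Cref{lem:lambda-prop} together with \Cref{ass:confReg} — in particular the right‑continuity $\cap_{t'>t}\confReg_{z}(x;t')=\confReg_{z}(x;t)$ — to establish the equivalence
\[
    y\in \confReg_{z}(x;t) \iff t\ge \lambda_{x,y,z},
\]
for every $y\in\YC$. Applied with $y=Y_{\tcount+1}$ and $t=\adjfunc{\tau_{x,z}}(\q{1-\alpha}(\mu))$, together with the strict monotonicity of $\adjfunc{\tau_{x,z}}$ from \Cref{ass:tau}, this turns the event $\{Y_{\tcount+1}\in\mathcal{C}_{\alpha}(x)\}$ into $\{\q{1-\alpha}(\mu)\ge S_{\tcount+1}\}$, where I set $S_{\tcount+1}:=\adjfunc{\bar\tau_{\tcount+1}}^{-1}(\bar\lambda_{\tcount+1})$.

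\textbf{Step 2 (decomposition at $\varphi$).} Writing the complementary event and splitting according to whether $S_{\tcount+1}\le\varphi$ or $S_{\tcount+1}>\varphi$ gives
\[
    \prob\prbig{\q{1-\alpha}(\mu)< S_{\tcount+1}\,\vert\,\cdot} = \prob\prbig{\q{1-\alpha}(\mu)<S_{\tcount+1}\le\varphi\,\vert\,\cdot} + \prob\prbig{S_{\tcount+1}>\varphi,\ \q{1-\alpha}(\mu)<S_{\tcount+1}\,\vert\,\cdot}.
\]
The first summand is exactly $p_{\tcount+1}^{(x,z)}$. The second is bounded above by $\prob(S_{\tcount+1}>\varphi\mid X_{\tcount+1}=x, Z_{\tcount+1}=z)$.

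\textbf{Step 3 (control of the tail $S_{\tcount+1}>\varphi$).} Using the equivalence from Step~1 in reverse, $\{S_{\tcount+1}>\varphi\}=\{\bar\lambda_{\tcount+1}>\adjfunc{\tau_{x,z}}(\varphi)\}=\{Y_{\tcount+1}\notin \confReg_{z}(x;\adjfunc{\tau_{x,z}}(\varphi))\}$. Since $Z_{\tcount+1}$ is exogenous (sampled from $\distr_{Z\mid X=x}$ independently of $Y_{\tcount+1}$), conditioning on $Z_{\tcount+1}=z$ in addition to $X_{\tcount+1}=x$ does not alter the law of $Y_{\tcount+1}$, so this probability equals $\textup{P}_{Y\mid X=x}\bigl(\confReg_{z}(x;\adjfunc{\tau_{x,z}}(\varphi))^{c}\bigr)$. \Cref{lem:tau-existence} ensures that $\distr_{Y\mid X=x}$ assigns mass at least $1-\alpha$ to $\confReg_{z}(x;\adjfunc{\tau_{x,z}}(\varphi))$, so its complement has $\distr_{Y\mid X=x}$‑mass at most $\alpha$; the standard inequality $|\textup{P}-\distr|(A)\le \tv(\textup{P};\distr)$ then yields the bound $\alpha + \tv(\textup{P}_{Y\mid X=x};\distr_{Y\mid X=x})$.

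Summing the two pieces and subtracting from $1$ delivers the claimed lower bound. The only subtle point is Step~1: without the right‑continuity clause in \Cref{ass:confReg}, one could not guarantee $Y_{\tcount+1}\in \confReg_{z}(x;\bar\lambda_{\tcount+1})$ at the infimum, and the key equivalence of events would fail at the boundary. Everything else is a deterministic decomposition of a probability plus one application of the total‑variation inequality; no concentration or exchangeability argument is required (the exchangeability is only used for the marginal result in \Cref{thm:coverage:marginal}).
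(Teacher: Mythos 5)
Your proof is correct and follows essentially the same path as the paper: both arguments reduce the coverage event to the comparison $\q{1-\alpha}(\mu)\ge S_{\tcount+1}$ with $S_{\tcount+1}=\adjfunc{\tau_{x,z}}^{-1}(\bar\lambda_{\tcount+1})$ via \Cref{lem:lambda-prop} and monotonicity of $\adjfunc{\tau_{x,z}}$, split around the threshold $\varphi$, and control the tail with \Cref{lem:tau-existence} together with the total-variation inequality. The only difference is presentational — you decompose the non-coverage event directly, whereas the paper writes an exact identity for the coverage probability and then discards the nonnegative term $\prob\pr{\varphi<S_{\tcount+1}\le\q{1-\alpha}(\mu)\mid X_{\tcount+1}=x,Z_{\tcount+1}=z}$; your bound on the second summand by $\prob\pr{S_{\tcount+1}>\varphi\mid X_{\tcount+1}=x,Z_{\tcount+1}=z}$ drops precisely that same slack, so the two routes are algebraically equivalent.
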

The proof is postponed to \Cref{subsec:proof:marginal-conditional}.
The more accurately the estimator $\distr_{Y \mid X=x}$ approximates the true conditional distribution, the closer the result will be to $1-\alpha$.  The second term in the lower bound is $p_{\tcount+1}^{(x,z)}$. Its expected value is upper bounded by $\E[p_{\tcount+1}^{(X,Z)}]\le \alpha$, and non-asymptotic bounds for this error term are developed in~\Cref{subsect:proof:cond-coverage}.

We will now briefly discuss the asymptotic conditional coverage guarantee; details are provided in the supplementary paper. Assuming the availability of an oracle for the predictive distribution, i.e., $\textup{P}_{Y\mid X=x} = \distr_{Y \mid X=x}$,we get under \Cref{ass:confReg} and \Cref{ass:tau}, that for any $t \in \mathbb{R}$,
\begin{align*}
  \prob\pr{\lambda_{X,Y,Z}\le \adjfunc{\tau_{X,Z}}(t)\,\vert\, X=x, Z=z}
  &= \prob\pr{Y\in \confReg_{z}(x; \adjfunc{\tau_{x,z}}(t))\,\vert\, X=x, Z=z}
  \\
  &= \distr_{Y \mid X=x}\pr{\confReg_{z}(x; \adjfunc{\tau_{x,z}}(t))},
\end{align*}
where $(X,Y,Z)$ follows the same distribution than $(X_k,Y_k,Z_k)$, $k \in \{1,\dots,n\}$.
Note that $\distr_{Y \mid X=x}(\confReg_{z}(x; \adjfunc{\tau_{x,z}}(t)))\ge 1-\alpha$ if and only if $t\ge \varphi$, which implies that
\begin{equation}\label{eq:prob-lambdaXY}
  \text{$\prob(\adjfunc{\tau_{X,Z}}^{-1}(\lambda_{X,Y,Z}) \le t \mid (X,Z)=(x,z)) \ge 1-\alpha$ \, if and only if \, $t\ge \varphi$.}
\end{equation}
From~\eqref{eq:prob-lambdaXY} it is easily seen that the $(1-\alpha)$-quantile of $\adjfunc{\tau_{X,Z}}^{-1}(\lambda_{X,Y,Z})$ is $\varphi$.
The Glivenko--Cantelli Theorem~\cite[Theorem 19.1]{van2000asymptotic} demonstrates that $\sup_{t\in\R}|\measure_{\tcount}(-\infty,t] - \prob(\adjfunc{\tau_{X,Z}}^{-1}(\lambda_{X,Y,Z})\le t)|\to 0$ almost surely as $\tcount\to\infty$, where $\measure_{\tcount}$ is defined in~\eqref{eq:def:mu}.
Since the convergence of the c.d.f. implies the convergence of the quantile function~\cite[Lemma 21.2]{van2000asymptotic}, we deduce that $\q{1-\alpha}(\measure_{\tcount})\to \varphi$ almost-surely as $\tcount\to\infty$.
Under weak additional conditions this implies that  $\lim_{\tcount\to\infty} p_{\tcount+1}^{(x,z)}=0$, $\bar{\distr}_{Z\mid X} \times \textup{P}_{X}$-almost everywhere, where $\bar{\distr}_{Z \mid X}$ is the Markov kernel used to draw the auxiliary variables $z$; see~\Cref{subsect:proof:quantile-rewritten}. In this case, \Cref{thm:coverage:marginal} implies the asymptotic validity of \algo.

In practice, the oracle is unavailable. In the following theorem, we examine the asymptotic conditional conformal validity as the size of the training dataset, 
$m_{\tcount}$, goes to infinity with $\tcount$. In most cases, $\lim_{\tcount \to \infty} m_{\tcount}/\tcount = \gamma >0$, but this is not required here.
To make the dependency of the estimator on the size of the training set explicit, we will denote the conditional distribution a $\distr_{Y \mid X}^{(m_{\tcount})}$.
Consider the following assumption.
\begin{assumption}\label{ass:tv:cv-prob}
    There exists sequence $(r_{\tcount})$ such that
    $
      \lim\limits_{\tcount\to\infty} \prob(
          \tv(\textup{P}_{X,Y}; \textup{P}_{X}\times\distr_{Y \mid X}^{(m_{\tcount})})
          \le r_{\tcount}
      )
      = 1.
    $
\end{assumption}
In most interesting case, we have $\lim_{\tcount \to \infty} r_{\tcount}= 0$. Such types of bounds can be deduced from \citet[Chapter~9]{devroye2001combinatorial}.
Let $(X,Y,Z)$ and $(X,\hat{Y},Z)$ be random variables distributed according to $\textup{P}_{X,Y}\times \bar{\distr}_{Z\mid X}$ and $\textup{P}_{X}\times \distr_{Y\mid X}^{(m_{\tcount})}\times \bar{\distr}_{Z\mid X}$, respectively.
\thmspace

%%%%%%%%%%%%%%%%%%% REMETTRE %%%%%%%%%%%%%%%%%%%
% \begin{theorem}\label{corr:coverage-conditional:asymptotic}
%   Assume \Cref{ass:confReg}-\Cref{ass:tau}-\Cref{ass:tv:cv-prob} hold.
%   If the distributions of $\adjfunc{\tau_{X,Z}}^{-1}(\lambda_{X,Y,Z})$ and $\adjfunc{\tau_{X,Z}}^{-1}(\lambda_{X,\hat{Y},Z})$ are continuous, then, $\forall \epsilon\in(0,1)$ there exists $(\Lambda_{\tcount}^{(\epsilon)})_{\tcount\in\N}$ such that $\liminf_{\tcount\to\infty}\prob((X_{\tcount+1},Z_{\tcount+1}) \in \Lambda_{\tcount}^{(\epsilon)}) \ge 1 - \epsilon$ and also
%   \begin{equation*}
%       \sup_{(x,z)\in\Lambda_{\tcount}^{(\epsilon)}} \abs{ \prob^{\mathcal{T}}\pr{Y_{\tcount+1}\in\mathcal{C}_{\alpha}(X_{\tcount+1}) \,\vert\,( X_{\tcount+1},Z_{\tcount+1})=(x,z)}
%       - 1 + \alpha }
%       = \Oh_{\prob}\pr{ \sqrt{\tcount^{-1}\log\tcount} + r_{\tcount}}
%       .
%   \end{equation*}
% \end{theorem}
%%%%%%%%%%%%%%%%%%%%%%%%%%%%%%%%%%%%%%%%%%%%%%%
\begin{theorem}\label{corr:coverage-conditional:asymptotic}
  Assume \Cref{ass:confReg}-\Cref{ass:tau}-\Cref{ass:tv:cv-prob} hold.
  If the distributions of $\adjfunc{\tau_{X,Z}}^{-1}(\lambda_{X,Y,Z})$ and $\adjfunc{\tau_{X,Z}}^{-1}(\lambda_{X,\hat{Y},Z})$ are continuous, then, it holds
  \begin{equation*}
      \abs{ \prob^{\mathcal{T}}\pr{Y_{\tcount+1}\in\mathcal{C}_{\alpha}(X_{\tcount+1}) \,\vert\, X_{\tcount+1},Z_{\tcount+1}}
      - 1 + \alpha }
      = \Oh_{\prob}\pr{ \sqrt{\tcount^{-1}\log\tcount} + r_{\tcount}}
      .
  \end{equation*}
\end{theorem}
%%%%%%%%%%%%%%%%%%%%%%%%%%%%%%%%%%%%%%%%%%%%%%%%

In~\citet{lei2018distribution,izbicki2019flexible,sesia2020comparison}, the asymptotic conditional validity is demonstrated by assuming the consistency of their methods' estimators. For instance, \citet{romano2019conformalized} assume that the conditional quantile regressor converges in $L^2$ towards the true quantile with high probability.

\section{Numerical Experiments}
\label{sec:expriments}
% !TEX root = ../main.tex

  \begin{figure}[t!]
    \centering
    \begin{subfigure}{0.305\textwidth}
      \includegraphics[width=\textwidth]{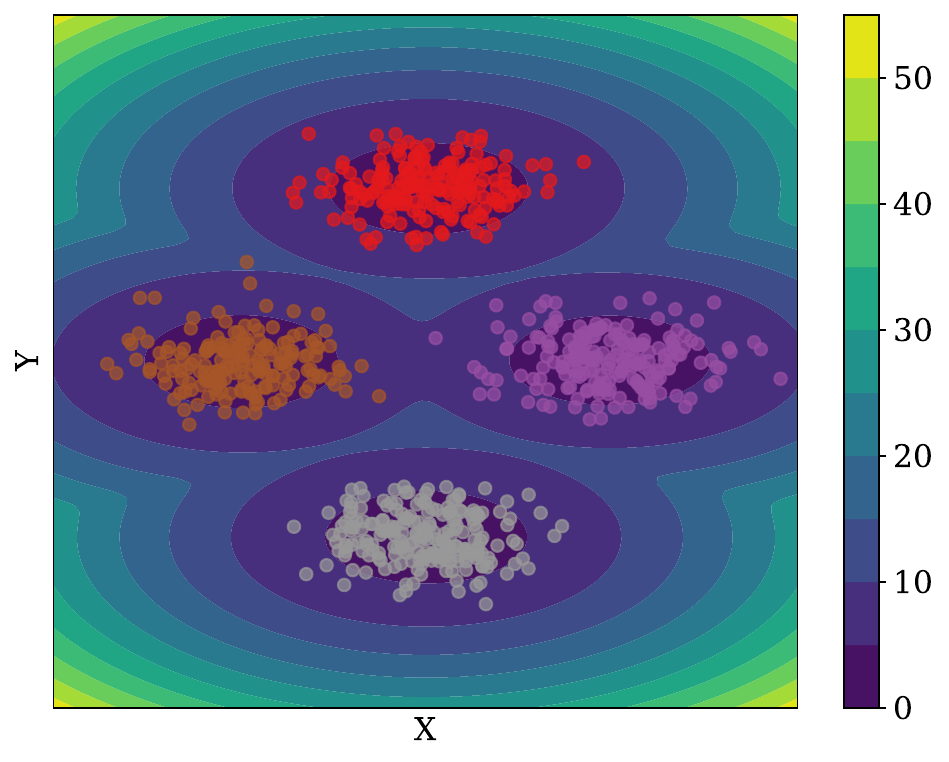}
      \caption{Data and MDN estimate}
      \label{fig:gmm:data}
    \end{subfigure}
    % ~~~~
    \begin{subfigure}{0.334\textwidth}
      \includegraphics[width=\textwidth]{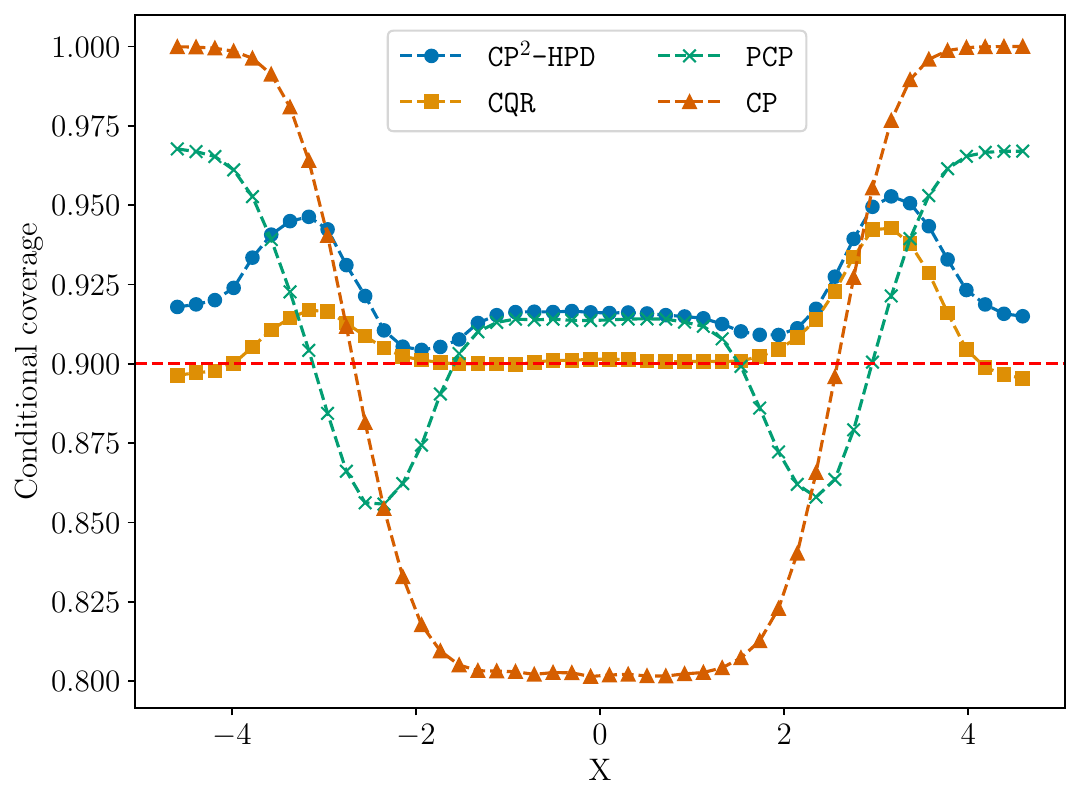}
      \caption{Conditional coverage}
      \label{fig:gmm:cc}
    \end{subfigure}
    % ~~~~
    \begin{subfigure}{0.32\textwidth}
      \includegraphics[width=\textwidth]{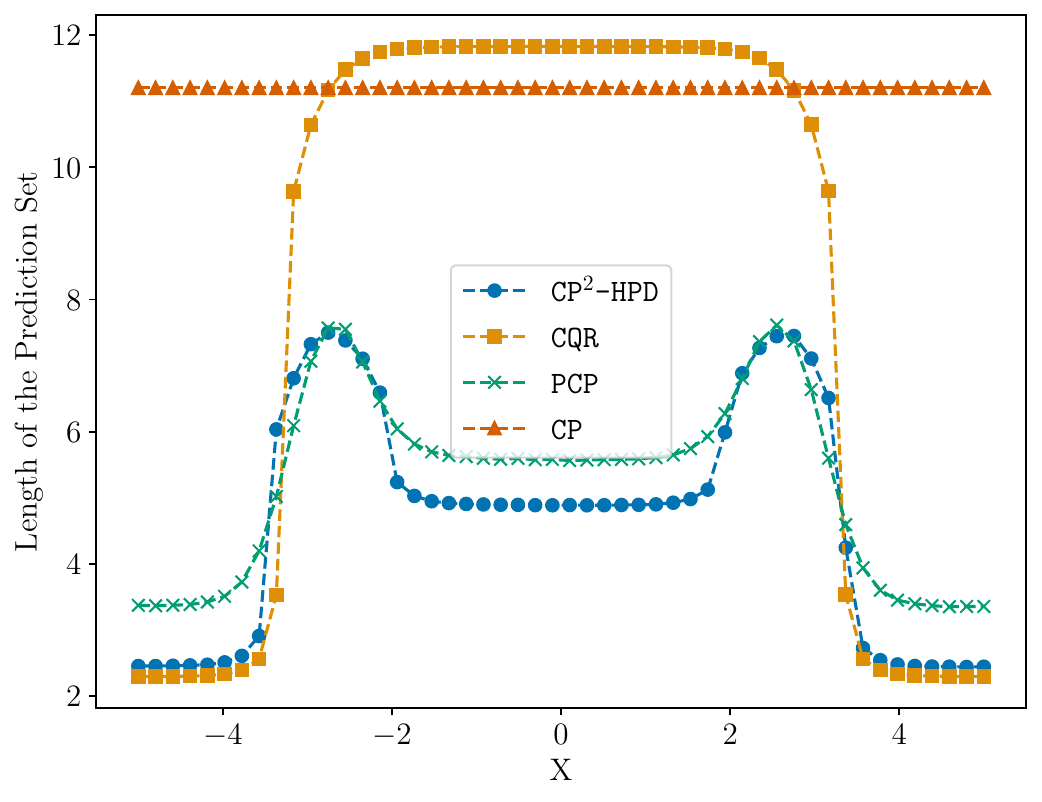}
      \caption{Length of the prediction set}
      \label{fig:gmm:cc:length}
    \end{subfigure}
    \caption{Mixture Density Network: the multimodal case.}
    \label{fig:gmm}
  \end{figure}
  In this section, we conduct a comprehensive analysis demonstrating the advantage  of \algo\ compared to standard and adaptive split conformal algorithms. Specifically, we benchmark our algorithm against several state-of-the-art methods: Conformalized Quantile Regression~\citep{romano2019conformalized}, Conformalized Histogram Regression~\citep{sesia2021conformal} and Probabilistic Conformal Prediction~\citep{wang2023probabilistic}. All these method share some key aspects: they are built on top of the pre-trained models and do not require access to training data or the model's internals on both calibration and prediction steps. We aim to answer these specific questions: how does \algo\ performs in terms of coverage, conditional coverage and predictive set volume when compared to state-of-the-art methods on synthetic and real data.

\subsection{Synthetic Data Experiment}
\label{sec:synthetic_data}
  In this example, $(X_k,Y_k)$ is sampled from a mixture of $P=4$ Gaussians; see \Cref{fig:gmm:data}. %The results for other classical 2-d datasets lead to similar conclusions.
  The number of training and calibration samples is $m = 10^4$ and $\tcount=10^3$, respectively. We fit a Mixture Density Network (MDN) as an explicit generative model, $\pdfdistr_{Y\mid X=x}(y) = \sum_{\ell=1}^P \pi_\ell(x) \gauss(y; \measure_\ell(x), \sigma^2_\ell(x))$, where $\measure_\ell(\cdot)$, $\sigma_\ell(\cdot)$ and $\pi_\ell(\cdot)$ are all modeled by fully connected 2-layers neural networks (the condition $\sum_{\ell=1}^P \pi_\ell(x)=1$ is ensured by using softmax activation functions).
  We use \hpdalgo\ (the calculation of the HPD rates as well as $\tau_x$ and $\lambda_{x,y}$ is explicit in this case) with \( f_{t}(v) = t v \). 
  The parameters of the MDN are trained by maximizing the likelihood on the training set.

  We compare the plain \hpdalgo, \texttt{PCP} (with the same MDN as \hpdalgo\ and $M=50$ draws) and \texttt{CQR}. All methods achieve the desired marginal coverage $1-\alpha=0.9$. We illustrate the conditional coverage in \Cref{fig:gmm:cc} and the lengths of the predictive sets in \Cref{fig:gmm:cc:length}. 
  CP with a fixed-width predictive set performs poorly in this multimodal example, both in terms of the size of the confidence set and the conditional coverage.
  \hpdalgo\ and \texttt{CQR} perform similarly in terms of conditional coverage (which remains close to $1-\alpha=0.9$).
  The conditional coverage of \texttt{PCP} varies between 0.85 and 0.95.
  \hpdalgo\ produces shorter prediction sets compared to \texttt{CQR} and \texttt{PCP}. This is because \hpdalgo\ uses an HPD confidence set that is more suitable for multimodal applications than the interval produced by \texttt{CQR}.

  \begin{figure}[t!]
    \centering
    \includegraphics[width=0.9\textwidth]{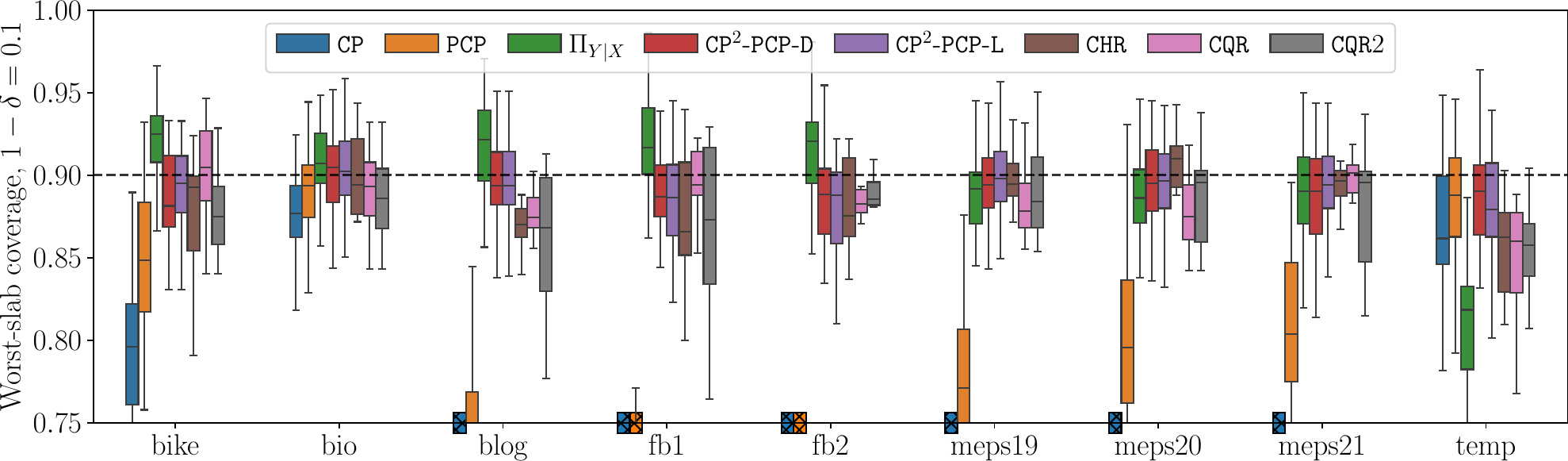}
    \caption{Worst-slab coverage on real data. Results averaged over 50 random splits of each dataset. Calibration and test set sizes set to 2000, 50 conditional samples for \texttt{PCP}, \algo\ and $\gammayx$. Worst-slab coverage parameter $(1-\delta)=0.1$. Nominal coverage level is $(1-\alpha)=0.9$ and is shown in dashed black. Methods with conditional coverage below $0.75$ shown as cross-hatched on horizontal axis.}
    \label{fig:real_wsc}
  \end{figure}

  % \bgroup\color{blue}
  % \begin{verbatim}

  % This toy example illustrates the behavior of some very typical conformal methods:

  % - **CQR Method:**
  %   - The CQR method constructs a prediction set for the output \( Y \) with values in \( \mathbb{R} \). This method generates a confidence interval and is therefore not suited to bimodal distributions of \( Y|X \). As a result, the associated prediction lengths are too wide.

  % - **PCP Method:**
  %   - The PCP method constructs prediction sets as the union of many balls. Unlike the CQR method, this approach has the advantage of tackling multimodality, as the generated prediction sets can contain several disjoint regions. However, the balls constructed have a fixed radius. In some areas of the space, this radius may be too small or too large, leading to under-coverage or over-coverage. We observed this issue in our synthetic data experiments. Specifically, we noted that the heteroscedasticity problem worsens for PCP when the number of balls is very large. This highlights the need for a more adaptive method that can adjust the size of the prediction sets based on the local variability of the data.

  % - **Our Method CP2-HPD:**
  %   - Our method CP2-HPD allows us to readjust the size of the radius according to the covariate \( x \) considered. Thus, we benefit from the PCP technique to deal with multimodality, and moreover, our mechanism resolves the problem of heteroscedasticity.
    
  % \end{verbatim}
  % \egroup

  \begin{figure}[t!]
    \centering
    \includegraphics[width=0.9\textwidth]{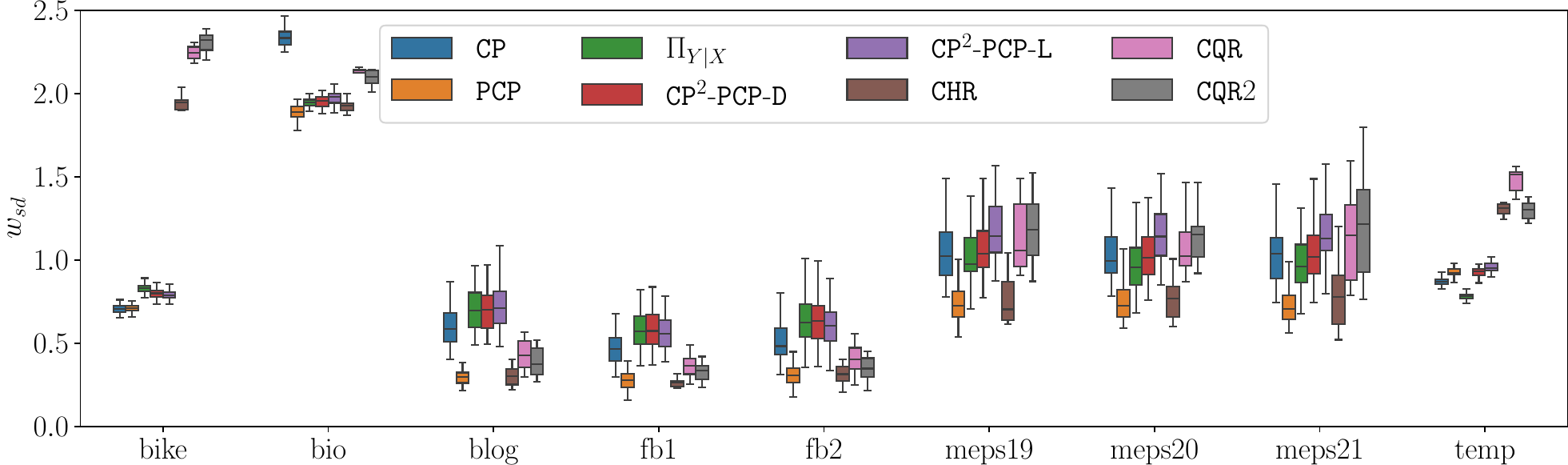}
    \caption{Sizes of the prediction sets on real data. We divide the size of the set by the standard deviation of response to present the results on the same scale.}
    \label{fig:real_size}
    \vskip -1em
  \end{figure}

\subsection{Real-world Regression Data Experiments}
  In this section, we study the performance of \pcpalgo\ on several real world regression datasets. In particular, we used the same datasets as in~\citet{wang2023probabilistic} to facilitate comparisons.

%  \bgroup\color{blue}
%
%    - Table 4 may provide more insight than Table 3. Conditional coverage (worst-slab coverage) in Table 3 is calculated over slabs containing at least 40\% of the data. In this scenario, most methods provide good coverage since the marginal coverage is achieved by all methods and the training data is sufficient.
%    - For Table 4, we have used slabs containing at least 10\% of the data, which should provide a more adequate estimate of the conditional coverage. According to Table 4, competing methods often fail to achieve \( 1-\alpha \) conditional coverage. For example, PCP drops to as low as 0.61 on the Facebook2 dataset. Conformalized Histogram Regression remains a strong baseline but still shows lower conditional coverage than all CP2-family methods on some datasets.
%
%  \egroup

\textbf{Datasets.} 
  We use publicly available regression datasets, which are also considered in~\citet{romano2019conformalized,wang2023probabilistic}. Some of them come from the UCI repository: bike sharing (\texttt{bike}), protein structure (\texttt{bio}), blog feedback (\texttt{blog}), Facebook comments (\texttt{fb1} and \texttt{fb2}). Other datasets come from US Department of Health surveys (\texttt{meps19}, \texttt{meps20} and \texttt{meps21}), and from weather forecasts (\texttt{temp})~\citep{tempdata2020}.

\textbf{Methods.}
  %We consider two main groups of algorithms: conformal-based methods and quantile regression-based methods. 
  We compare the proposed \pcpalgo\ method with Probabilistic Conformal Prediction (\texttt{PCP}; \citet{wang2023probabilistic}), Conformalized Quantile Regression (\texttt{CQR}; \citet{romano2019conformalized}) and Conformalized Histogram Regression (\texttt{CHR}; \citet{sesia2021conformal}). %\texttt{CQR} and \texttt{CHR} are based on the same internal neural network model.
  We also consider \texttt{CQR}$2$ which is a modification of \texttt{CQR} that uses inverse quantile nonconformity score. For our method and \texttt{PCP} we use a Mixture Density Network~\citet{bishop1994mixture} to estimate the conditional distribution $\textup{P}_{Y\mid X}$, since it was chosen in~\citet{wang2023probabilistic} as best-performing. We also consider different choices of $\adjfunc{t}$ for our method: \pcpalgo-\texttt{L} stands for \pcpalgo\ with $\adjfunc{t}(v) = tv$ and \pcpalgo-\texttt{D} stands for \pcpalgo\ with $\adjfunc{t}(v) = t + v$. Our implementation of \pcpalgo\ is summarized in~\Cref{algo:CP2-PCP}. 
  Additionally, we consider $\distr_{Y \mid X}$ which is a special case of \pcpalgo\ with $\adjfunc{t}(v)=t$. %, while \texttt{PCP} can be obtained with $\adjfunc{t}(v)=\lambda$. \vincent{Is it really pertinent?}

\textbf{Metrics.} 
  Empirical coverage (marginal and conditional) is the main quantity of interest for prediction sets. %\pcpalgo\ satisfies the marginal coverage guarantee but also achieves (a version of) conditional coverage.
  We evaluate worst-slab conditional coverage~\citep{cauchois2020knowing,romano2020classification} in our experiments, see details in Appendix~\ref{appendix:worst_slab}. We also measure the total size of the predicted sets, scaled by the standard deviation of the response $Y$.

\textbf{Experimental setup.} 
Our experimental setup largely follows the approach outlined in~\citet{wang2023probabilistic}. Specifically, we split each dataset into training, calibration, and testing sets. A Mixture Density Network (MDN) with 10 components is then trained to approximate the conditional distribution $\textup{P}_{Y \mid X}$. For each calibration and test point, we first compute the Gaussian Mixture parameters, forming $\distr_{Y \mid X}$, and subsequently draw $M = 5, 20, 50$ samples from these distributions, which yield $\mathcal{R}_z(x, t)$. This process is repeated across 50 different random splits of each dataset.
  \textbf{Results} of the experiments for $M=50$ samples are presented in Figures~\ref{fig:real_wsc} and~\ref{fig:real_size}, additional results are available in Appendix~\ref{suppl:expriments}. In terms of marginal coverage, all methods achieve the target $1-\alpha$ value, except for $\distr_{Y\mid X}$. %This demonstrates that estimating the conditional distribution is a challenging problem and the resulting estimates need to be further calibrated. \vincent{Do we let this sentence?}

  Standard conformal prediction fails to maintain the conditional coverage as expected. We can also observe that \texttt{PCP} consistently struggles with conditional coverage. On all the datasets \pcpalgo\ provides valid conditional coverage, while \texttt{CQR} fails on \texttt{blog} and \texttt{temp}. \texttt{CHR} method shows unstable performance not achieving conditional coverage more often than other methods but sometimes providing narrower predictions sets. Additionally, \pcpalgo\ significantly outperforms quantile regression-based methods in terms of size of the prediction sets on \texttt{bike}, \texttt{bio} and \texttt{temp} datasets.

  Finally, we assess conditional coverage with the help of clustering. We apply HDBSCAN~\citep{Campello2013DensityBasedCB, McInnes2017AcceleratedHD} method to cluster the test set and then compute coverage within clusters. Results for \texttt{fb1} dataset are presented in Figure~\ref{fig:ccov_f1_hdb}. We again observe that \texttt{CP} and \texttt{PCP} do not achieve conditional coverage and \texttt{CHR} and \texttt{CQR} performance is unstable. \pcpalgo\ on the other hand maintains valid conditional coverage on all clusters and even on outliers (cluster label \texttt{-1}). Note that these are all outliers combined and they may not lie in the same region of the input space.
  
  \begin{figure}[t!]
    \centering
    \includegraphics[width=0.7\textwidth]{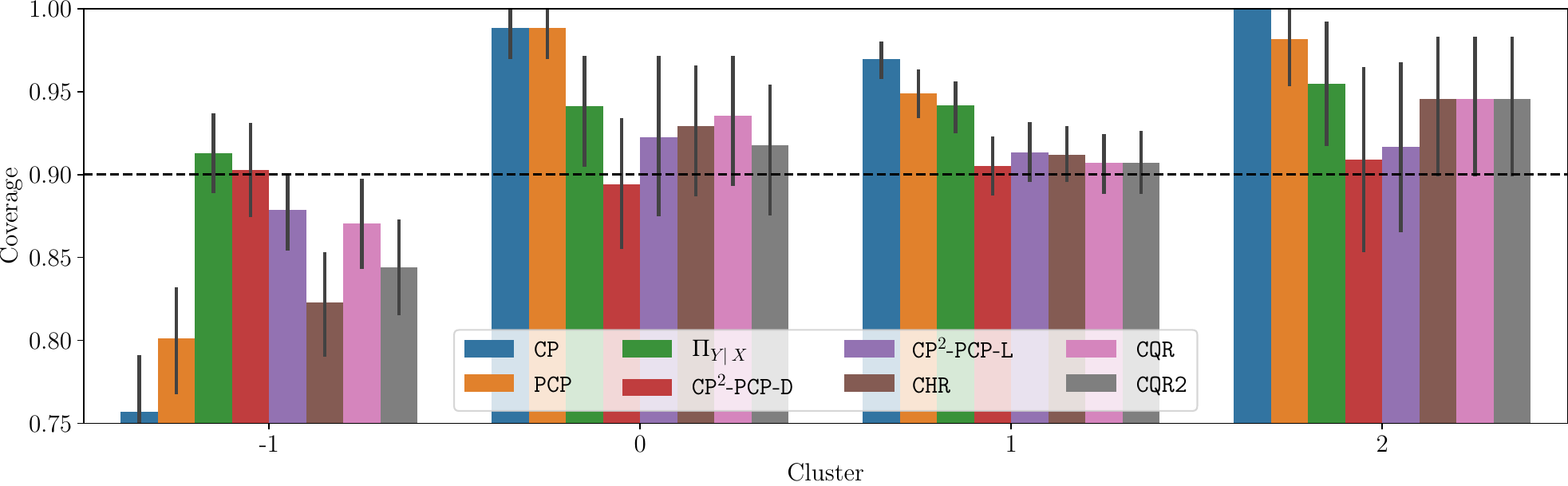}
    \caption{Conditional coverage for different clusters, \texttt{fb1} dataset. We have used HDBSCAN algorithm with minimum cluster size of 100, \texttt{min\_samples} hyper-parameter of 20 and $l_2$ metric. Cluster label \texttt{-1} corresponds to the outliers. Sample size for sampling-based methods was set to 50. Nominal coverage equals $(1-\alpha)=0.9$ and is shown in dashed blacks. }
    \label{fig:ccov_f1_hdb}
  \end{figure}

  \begin{figure}[t!]
    \centering
    \includegraphics[width=0.9\textwidth]{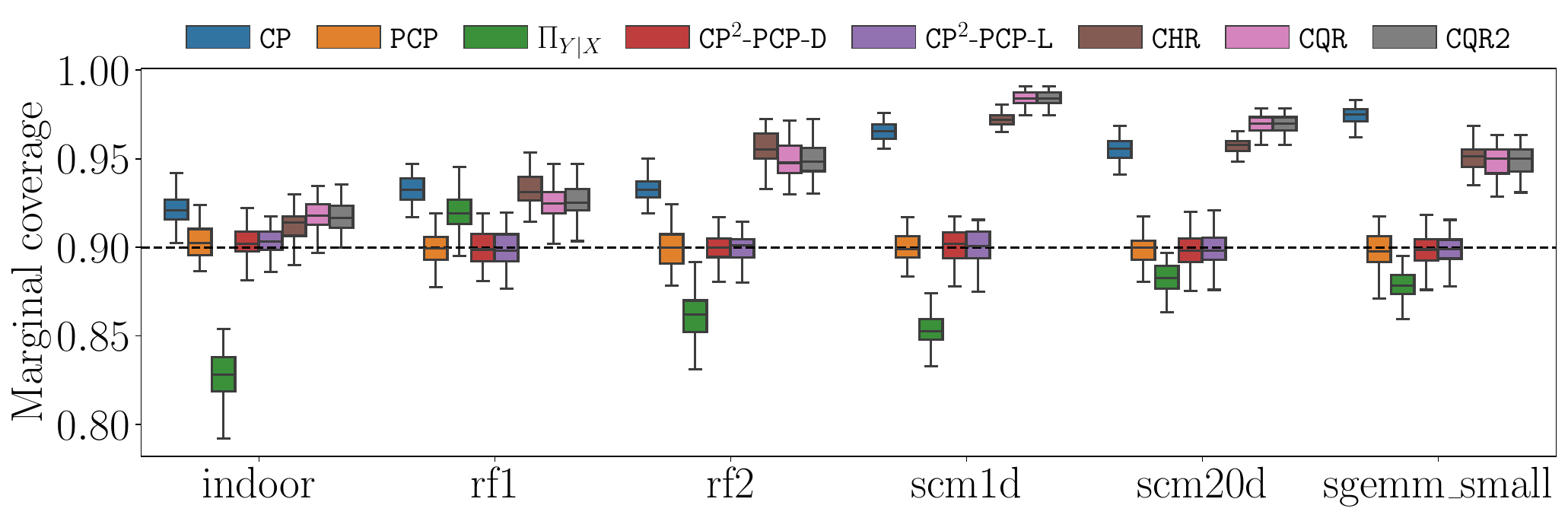}
    \caption{Marginal coverage for multi-target datasets, 50 replications. Sample size was set to 1000. Nominal coverage equals $(1-\alpha)=0.9$ and is shown by dashed black line.}
    \label{fig:md_cov}
  \end{figure}

\subsection{Real-world Regression Data with Multi-dimensional Targets}
  We also study \texttt{CP2} family of algorithms on the multi-target regression problems. Since selecting the threshold $\tau$ for our methods is not dependent on the number of dimensions in $Y$ their application is straightforward. On the other hand, most other methods are inherently one-dimensional thus require the use of the Bonferroni correction~\citep{multiple_coomparisons_1961}. Each coordinate is treated independently with miscoverage level adjusted to $\alpha/d$, where $d$ is the number of targets. As a result, for quantile regression-based methods prediction sets are rectangular cuboids, formed as a product of the corresponding intervals.

  \textbf{Datasets.} We consider open-source multidimensional regression datasets: river flow data \texttt{rf1} and \texttt{rf2}~\citep{Xioufis2012MultitargetRV}, supply chain management \texttt{scm1d} and \texttt{scm20d}~\citep{Xioufis2012MultitargetRV}, indoor localisation \texttt{indoor}~\citep{UJIIndoorLoc}, GPU computation time \texttt{sgemm\_small}\footnote{The full dataset contains 241600 examples. Due to computational constraints we randomly subsample 10000 examples for each replication of our experiment.}~\citep{BallesterRipoll2017SobolTT}.

  We use the same \textbf{metrics} as before: marginal coverage and worst-slab coverage. Evaluating the difference in prediction set size is more complex in case of multiple dimensions. Due to computational constraints we perform pairwise comparisons between our methods and selected baselines, measuring approximate areas of 2D projections of the prediction sets ~\citep{wang2023probabilistic}. These results can be found on Figure~\ref{fig:md_2d}. We approximate areas using a grid and fewer samples.

  Since our methods naturally extend beyond one dimension, the \textbf{experimental setup} is almost identical. We use the same underlying model for $\textup{P}_{Y\mid X}$, the prediction set is now a union of $d$-dimensional balls of the same radius around the sampled centers. The number of samples is increased to 1000. 

  \begin{figure}[t!]
    \centering
    \includegraphics[width=0.8\textwidth]{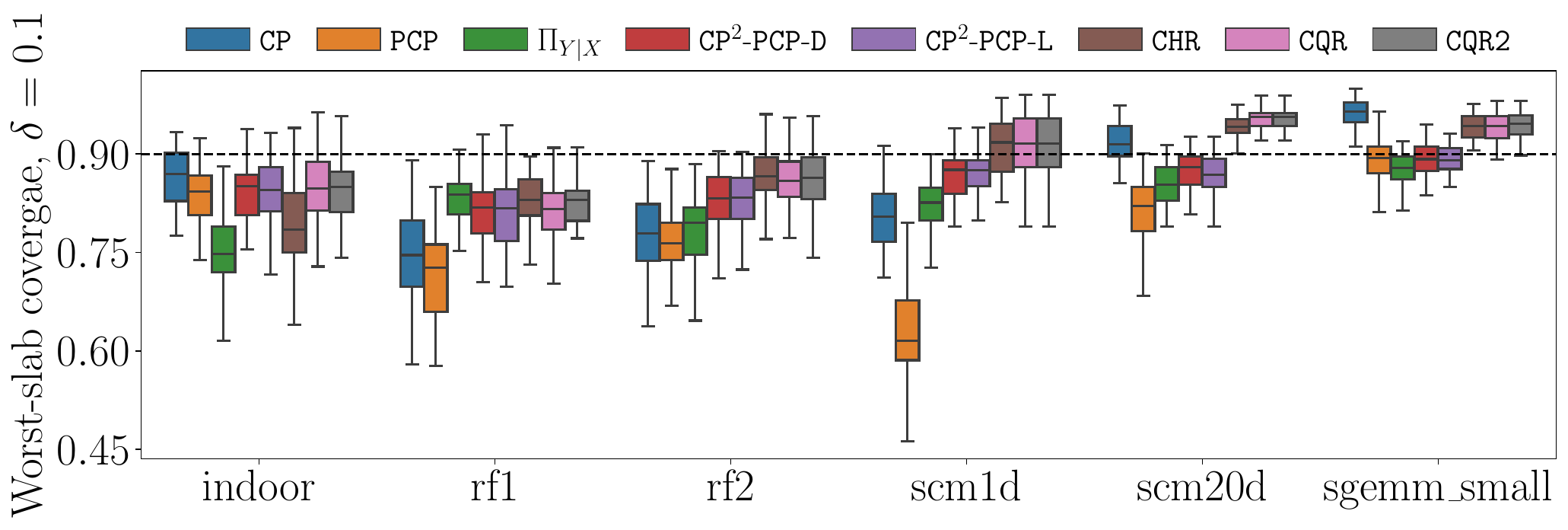}
    \caption{Conditional coverage for multi-target datasets targets, 50 replications. Sample size was set to 1000. Nominal coverage equals $(1-\alpha)=0.9$ and is shown in dashed black. Worst-slab coverage parameter $(1-\delta)=0.1$.}
    \label{fig:md_wsc}
  \end{figure}

  \begin{figure}[t!]
    \centering \includegraphics[width=0.8\textwidth]{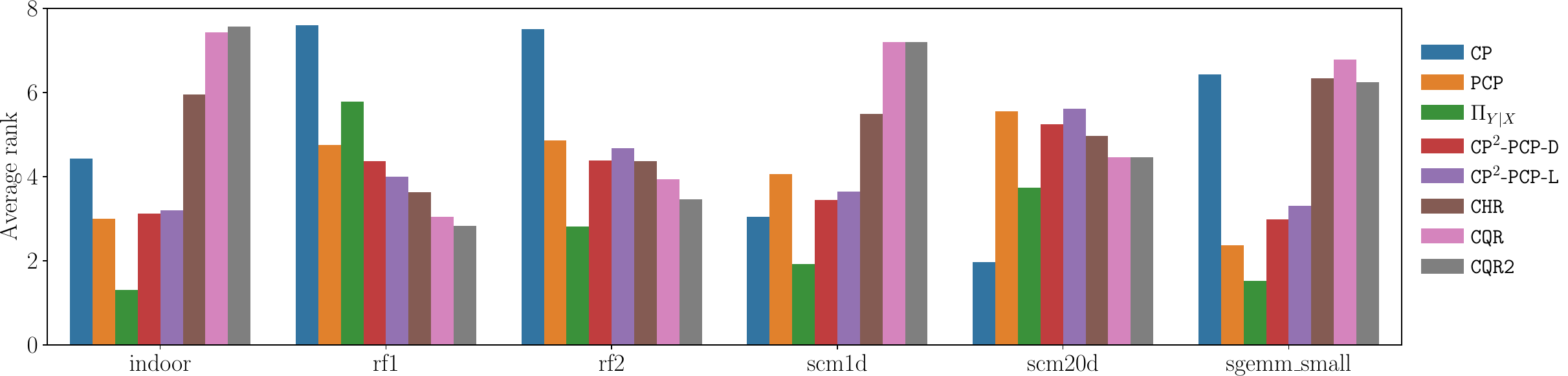}
    \caption{Average rank of the projected set size. For each pair of targets area of the corresponding 2D projection of the prediction set is calculated. For each test point and each pair of targets methods are ranked. Lower rank is smaller area. This graph shows averaged results of 10 replications.}
    \label{fig:md_2d}
  \end{figure}

  \textbf{Results.} In Figure~\ref{fig:md_cov} we show marginal coverage attained by different algorithms. As expected, naive application of 1D techniques CQR, CQR2 and CHR to multiple outputs produces significant overcover. \texttt{PCP} and \algo\ methods naturally extend to multidimensional targets and provide correct marginal coverage.
  
  In Figure~\ref{fig:md_wsc} we present the conditional coverage estimates for multi-target datasets. \texttt{PCP} significantly undercovers on \texttt{rf1}, \texttt{rf2} and \texttt{scm1d} datasets, while \algo\ comes very close to the nominal coverage of $0.9$. In case of CQR, CQR2 and CHR, they still overcover (\texttt{scm20d}, \texttt{sgemm}) or perform comparably to our approach.
  
  Figure~\ref{fig:md_2d} shows the aggregated results of the set size comparisons in multidimensional target setting. For each test point and each pair of axes we rank the methods by the area of the projection of the corresponding prediction set. The plot show average rank for each method, aggregated across all axes pairs and replications. Lower rank corresponds to smaller area, which is our goal. For datasets $\texttt{indoor}$, $\texttt{scm1d}$ and $\texttt{sgemm\_small}$ our approach performs better, while also providing sharper conditional covergae, as was shown earlier. On the remaining datasets \algo performs similarly to the competitors.

\section{Conclusion}
\label{sec:conclusion}
% !TEX root = ../main.tex

We address the challenge of conditional coverage in CP, and overcome previous negative results by assuming the knowledge of a good estimator of $\textup{P}_{Y\mid X}$.
Our proposed mechanism conformalized the conditional estimator $\distr_{Y \mid X}$ to ensure marginal validity while maintaining similar conditional coverage guarantees.
Specifically, if experts can provide an accurate conditional estimator, our algorithm \algo\ generates nearly conditionally valid multidimensional prediction sets. This approach offers a practical solution for tackling heteroscedasticity in various machine learning applications.

\subsubsection*{Acknowledgments}
  Part of this work has been carried out under the auspice of the Lagrange Mathematics and Computing Research Center. 
  E.M. is Funded by the European Union (ERC, Ocean, 101071601). Views and opinions expressed are however those of the author(s) only and do not necessarily reflect those of the European Union or the European Research Council Executive Agency. Neither the European Union nor the granting authority can be held responsible for them.

\bibliography{conformal}

%%%%%%%%%%%%%%%%%%%%%%%%%%%%%%%%%%%%%%%%%%%%%%%%%%%%%%%%%%%%

\appendix

\section{Additional Results and Calculations}
\label{suppl:proofs}
% !TEX root = ../main.tex

In this section, we analyze the theoretical results of~\Cref{sec:theory}.
First, let's recall the definition of the quantile function for any distribution $\measure_{\tcount}$ living in $\R$.
For any $\alpha\in(0,1)$, the quantile $\q{1-\alpha}(\measure_{\tcount})$ is defined by
\begin{equation*}
  \q{1-\alpha}(\measure_{\tcount})
  = \inf\ac{t\in\R \colon \measure_{\tcount}((-\infty, t]) \ge 1-\alpha}.
\end{equation*}
Given a measure $\distr_{Y \mid X=x}$ defined on $\sigma(\YC)$, we consider for all $x\in\XC$, $z\in\ZC$, the parameters $\tau_{x,z}$ and $\lambda_{x,y,z}$ given by
\begin{equation}\label{suppl:eq:def:tau-lambda}
  \begin{aligned}
    &\tau_{x,z} = \inf\ac{\tau \in \mathsf{T}\colon \distr_{Y \mid X=x}(\confReg_{z}(x; \adjfunc{\tau}(\varphi)))\ge 1-\alpha},
    \\
    &\lambda_{x,y,z} = \inf\ac{\lambda\in\mathsf{T}\colon y\in \confReg_{z}(x; \lambda)},
  \end{aligned}
\end{equation}
where $\varphi$ is chosen as in~\Cref{ass:tau}, and by convention we set $\inf\emptyset = \infty$.
We denote by $\delta_{v}$ the Dirac measure at $v\in\R$, and write $\bar{\tau}_{k} = \tau_{X_k,Z_k}$ and $\bar{\lambda}_{k} = \lambda_{X_k,Y_k,Z_k}$.
In this Appendix, we study the coverage of the prediction set given $\forall (x,z)\in\R\times\ZC$ by
\begin{equation*}
  \mathcal{C}_{\alpha}(x)
  = \confReg_{z}\pr{x; \adjfunc{\tau_{x,z}}\bigl(\q{1-\alpha}(\measure_{\tcount})\bigr)},
\end{equation*}
where the distribution $\measure_{\tcount}$ is defined as
\begin{equation*}
  \measure_{\tcount} = \frac{1}{\tcount+1}\sum_{k=1}^{\tcount} \delta_{\adjfunc{\bar{\tau}_{k}}^{-1}(\bar{\lambda}_{k})} + \frac{1}{\tcount+1} \delta_{\infty}.
\end{equation*}
The key idea behind the choice of $\bar{\tau}_{k}$ is to ensure that the conditional coverage of the prediction set $\mathcal{C}_{\alpha}(X_{k})$ is approximately $1-\alpha$ when the empirical distribution $\distr_{Y \mid X=X_k}$ is close to $\textup{P}_{Y\mid X=X_k}$. In other words, $\bar{\tau}_{k}$ is chosen such that the probability of the observed value $Y_k$ given $X_k$ falling inside the prediction set $\mathcal{C}_{\alpha}(X_k)$ is close to $1-\alpha$.
On the other hand, the parameter $\bar{\lambda}_{k}$ is used to ensure that the prediction set $\confReg_{Z_k}(X_k; \bar{\lambda}_{k})$ contains the observed value $Y_k$.
Moreover, note that $\bar{\tau}_{k}$ only depends on the input data $(X_k,Z_k)$, while $\bar{\lambda}_{k}$ depends on $(X_k,Y_k,Z_k)$. Thus, the i.i.d. property of $\{(X_k,Y_k,Z_k)\colon k\in[\tcount+1]\}$ ensures that the $\{(\bar{\tau}_{k},\bar{\lambda}_{k})\}_{k=1}^{\tcount+1}$ are also i.i.d.

\subsection{Proof of \Cref{thm:coverage:marginal,thm:coverage:conditional}}
\label{subsec:proof:marginal-conditional}
% !TEX root = ../main.tex

\begin{lemma}\label{lem:lambda}
  Assume \Cref{ass:confReg} hold. For any $(x,y,z)\in\XC\times\YC\times\ZC$, $\lambda_{x,y,z}$ exists in $\mathsf{T}$, and we have $y\in \confReg_{z}(x; \lambda_{x,y,z})$.
\end{lemma}
\begin{proof}
  Let $(x,y,z)\in\XC\times\YC\times\ZC$ be fixed. Since $\cap_{t\in\mathsf{T}}\confReg_{z}(x;t) = \emptysymbol$ and $\cup_{t\in\mathsf{T}}\confReg_{z}(x;t) = \YC$, we deduce the existence of $t_0$ and $t_1$ such that $y\notin\confReg_{z}(x;t_0)$ and $y\in\confReg_{z}(x;t_1)$. Therefore, $\{t\in\mathsf{T}\colon y\in \confReg_{z}(x; t)\}$ is non-empty and lower-bounded by $t_0$. Thus, the infimum $\lambda_{x,y,z}$ exists. 
  Now, let's prove that $y\in\confReg_{z}(x;\lambda_{x,y,z})$.
  Since $\lambda_{x,y,z} = \inf\{t\in\mathsf{T}\colon y\in \confReg_{z}(x; t)\}$, we deduce the existence of a decreasing sequence $\{\lambda_n\}_{n\in\N}$ such that $y\in \confReg_{z}(x; \lambda_n)$ and $\lim_{n\to\infty}\lambda_n=\lambda_{x,y,z}$. By definition of $\{\lambda_n\}_{n\in\N}$, we have $y\in\cap_{n\in\N}\confReg_{z}(x;\lambda_n)$. However, using~\Cref{ass:confReg}, remark that
  \begin{align*}
    \cap_{n\in\N}\confReg_{z}(x;\lambda_n) 
    &= \cap_{n\in\N}\cap_{t>\lambda_n}\confReg_{z}(x; t)
    \\
    &= \cap_{t>\lim\limits_{n\to\infty}\lambda_n}\confReg_{z}(x; t)
    \\
    &= \cap_{t>\lambda_{x,y,z}}\confReg_{z}(x; t)
    = \confReg_{z}(x; \lambda_{x,y,z}).
  \end{align*}
  Since $y\in \cap_{n\in\N}\confReg_{z}(x;\lambda_n)$, it implies that $y\in\confReg_{z}(x;\lambda_{x,y,z})$.
\end{proof}

We will now present the proof for~\Cref{thm:coverage:marginal}, which establishes the marginal validity of our proposed method.
\thmspace

\begin{theorem}\label{suppl:thm:coverage:marginal,thm:coverage:conditional}
  Assume \Cref{ass:confReg}-\Cref{ass:tau} hold, if $\{\adjfunc{\bar{\tau}_{k}}^{-1}(\bar{\lambda}_{k})\}_{k=1}^{\tcount+1}$ are almost surely distinct, then it follows
  \begin{equation}\label{eq:prob:YinC}
    1 - \alpha
    \le \prob^{\mathcal{T}}\pr{Y_{\tcount+1} \in \mathcal{C}_{\alpha}(X_{\tcount+1})}
    < 1 - \alpha + \frac{1}{\tcount+1}.
  \end{equation}
\end{theorem}

\begin{proof}
  Using~\Cref{lem:lambda}, we have
  \begin{align*}
    \nonumber
    \prob^{\mathcal{T}}\pr{Y_{\tcount+1} \in \mathcal{C}_{\alpha}(X_{\tcount+1})}
    &= \prob^{\mathcal{T}}\pr{Y_{\tcount+1} \in \confReg_{Z_{\tcount+1}}\pr{X_{\tcount+1}, \adjfunc{\bar{\tau}_{\tcount+1}}(\q{1-\alpha}(\measure_{\tcount}))}}
    \\
    &= \prob^{\mathcal{T}}\pr{\lambda_{\tcount+1} \le \adjfunc{\bar{\tau}_{\tcount+1}}(\q{1-\alpha}(\measure_{\tcount}))}.
  \end{align*}
  Since $\lambda \mapsto \adjfunc{\bar{\tau}_{\tcount+1}}(\lambda)$ is increasing by \Cref{ass:tau}, we deduce that
  \begin{equation*}
    \prob^{\mathcal{T}}\pr{\lambda_{\tcount+1} \le \adjfunc{\bar{\tau}_{\tcount+1}}(\q{1-\alpha}(\measure_{\tcount}))}
    = \prob^{\mathcal{T}}\pr{\adjfunc{\bar{\tau}_{\tcount+1}}^{-1}(\lambda_{\tcount+1}) \le \q{1-\alpha}(\measure_{\tcount})}.
  \end{equation*}
  Denote by $V_k = \adjfunc{\bar{\tau}_{k}}^{-1}(\bar{\lambda}_{k})$, the exchangeability of the data $\{(X_k,Y_k,Z_k)\colon k\in[\tcount+1]\}$ implies that
  \begin{multline*}
    \prob^{\mathcal{T}}\pr{V_{\tcount+1} \le \q{1-\alpha}\pr{\sum_{k=1}^{\tcount} \frac{\delta_{V_k}}{\tcount+1} + \frac{\delta_{\infty}}{\tcount+1}}}
    = \prob^{\mathcal{T}}\pr{V_{\tcount+1} \le \q{1-\alpha}\pr{\sum_{k=1}^{\tcount+1} \frac{\delta_{V_k}}{\tcount+1}}}
    \\
    = \frac{1}{\tcount+1} \sum_{k=1}^{\tcount+1} \E^{\mathcal{T}}\br{\1_{V_k} \le \q{1-\alpha}\pr{\frac{1}{\tcount+1}\sum_{k=1}^{\tcount+1} \delta_{V_k}}}
    \\
    = \E^{\mathcal{T}}\br{\E^{\mathcal{T}}\br{\1_{V_I} \le \q{1-\alpha}\pr{\frac{1}{\tcount+1}\sum_{k=1}^{\tcount+1} \delta_{V_k}} \,\bigg\vert\, V_1,\ldots,V_{\tcount+1}}},
  \end{multline*}
  where $I\sim\mathcal{U}nif(1,\ldots,\tcount+1)$. Therefore, the definition of the quantile function implies the lower bound in~\eqref{eq:prob:YinC}. Moreover, if there are no ties between the $\{V_k\}_{k=1}^{\tcount+1}$, then
  \begin{equation*}
    \prob^{\mathcal{T}}\pr{\adjfunc{\bar{\tau}_{\tcount+1}}^{-1}(\lambda_{\tcount+1}) \le \q{1-\alpha}(\measure_{\tcount})}
    < 1 - \alpha + \frac{1}{\tcount+1}.
  \end{equation*}
\end{proof}

The following lemma provides conditions under which $\distr_{Y \mid X=x}(\confReg_{z}(x;\adjfunc{\tau_{x,z}}(\varphi)))\ge 1-\alpha$.
\thmspace

\begin{lemma}\label{lem:tau}
  Assume \Cref{ass:confReg}-\Cref{ass:tau} hold, and let $\alpha\in(0,1$), $x\in\XC$, $z\in\ZC$.
  If $\distr_{Y \mid X=x}$ is a probability measure, then $\tau_{x,z}$ is defined in $\mathsf{T}$ and $\distr_{Y \mid X=x}(\confReg_{z}(x;\adjfunc{\tau_{x,z}}(\varphi)))\ge 1-\alpha$.
\end{lemma}
\begin{proof}
  Let $x\in\XC$ be such that $\distr_{Y \mid X=x}$ is a probability measure, and fix $z\in\ZC$. Since $\tau\mapsto \adjfunc{\tau}(\varphi)$ is increasing and bijective by~\Cref{ass:tau}, we have
  \begin{align*}
    \sup_{\tau\in\mathsf{T}} \distr_{Y \mid X=x}(\confReg_{z}(x; \adjfunc{\tau}(\varphi)))
    &= \distr_{Y \mid X=x}\pr{\cup_{\tau\in\mathsf{T}} \confReg_{z}(x; \adjfunc{\tau}(\varphi))}
    \\
    &= \distr_{Y \mid X=x}\pr{\cup_{t\in\mathsf{T}} \confReg_{z}(x; t)}
    = 1.
  \end{align*}
  The previous equality shows the existence of $\tau\in\mathsf{T}$ such that $\distr_{Y \mid X=x}(\confReg_{z}(x; \adjfunc{\tau}(\varphi)))\ge 1-\alpha$. Therefore $\{\tau \in \mathsf{T}\colon \distr_{Y \mid X=x}(\confReg_{z}(x; \adjfunc{\tau}(\varphi)))\ge 1-\alpha\}$ is non-empty. 
  This proves the existence of $\tau_{x,z} = \inf\{\tau \in \mathsf{T}\colon \distr_{Y \mid X=x}(\confReg_{z}(x; \adjfunc{\tau}(\varphi)))\ge 1-\alpha\}$ in $\mathsf{T}\cup\{-\infty\}$.
  Moreover, $\tau_{x,z}>-\infty$, otherwise we would have
  \begin{equation*}
    1 - \alpha
    \le \inf_{\tau\in\mathsf{T}} \distr_{Y \mid X=x}(\confReg_{z}(x; \adjfunc{\tau}(\varphi)))
    = \distr_{Y \mid X=x}\pr{\cap_{t\in\mathsf{T}} \confReg_{z}(x; t)}
    % = \distr_{Y \mid X=x}(\emptysymbol)
    = 0.
  \end{equation*}
  Therefore, we deduce that $\tau_{x,z}\in\mathsf{T}$.
  Lastly, remark that
  \begin{align*}
    \distr_{Y \mid X=x}(\confReg_{z}(x;\adjfunc{\tau_{x,z}}(\varphi)))
    &= \distr_{Y \mid X=x}(\cap_{\tau > \tau_{x,z}}\confReg_{z}(x;\adjfunc{\tau}(\varphi)))
    \\
    &= \inf_{\tau > \tau_{x,z}} \distr_{Y \mid X=x}(\confReg_{z}(x; \adjfunc{\tau}(\varphi)))
    \ge 1-\alpha.
  \end{align*}
\end{proof}

Now, we prove~\Cref{thm:coverage:conditional}. This result guarantees that the conditional confidence intervals constructed by our method approximately satisfy the desired coverage of $1-\alpha$.
Given $(x,y)\in\XC\times\ZC$, let's introduce
\begin{align*}
  &p_{\tcount+1}^{(x,z)} = \prob^{\mathcal{T}}\pr{\q{1-\alpha}(\measure_{\tcount}) < \adjfunc{\tau_{x,z}}^{-1}\pr{\lambda_{x,Y_{\tcount+1},z}} \le \varphi \,\vert\, X_{\tcount+1}=x, Z_{\tcount+1}=z},
  \\
  &q_{\tcount+1}^{(x,z)} = \prob^{\mathcal{T}}\pr{\varphi < \adjfunc{\tau_{x,z}}^{-1}\pr{\lambda_{x,Y_{\tcount+1},z}} \le \q{1-\alpha}(\measure_{\tcount}) \,\vert\, X_{\tcount+1}=x, Z_{\tcount+1}=z}.
\end{align*}
\thmspace

\begin{theorem}\label{supp:thm:coverage:conditional}
  Assume \Cref{ass:confReg}-\Cref{ass:tau} hold, let $x\in\R^d$ be such that $\distr_{Y \mid X=x}$ is a probability measure. For any $z\in\ZC$, it follows that
  \begin{multline*}
    1 - \alpha - \tv(\textup{P}_{Y\mid X=x}; \distr_{Y \mid X=x}) - p_{\tcount+1}^{(x,z)}
    \le \prob^{\mathcal{T}}\pr{Y_{\tcount+1} \in \mathcal{C}_{\alpha}(X_{\tcount+1}) \mid X_{\tcount+1}=x, Z_{\tcount+1}=z}
    \\
    \le \distr_{Y \mid X=x}(\confReg_{z}(x; \adjfunc{\tau_{x,z}}(\varphi))) + \tv(\textup{P}_{Y\mid X=x}; \distr_{Y \mid X=x}) + q_{\tcount+1}^{(x,z)}.
  \end{multline*}
\end{theorem}
\begin{proof}
  First, recall that $\mathcal{C}_{\alpha}(x)$ is given in~\eqref{eq:pred-set}, and $\lambda_{x,Y_{\tcount+1},z}$ is defined in~\eqref{suppl:eq:def:tau-lambda}. Applying \Cref{lem:lambda}, we know that $\lambda_{x,Y_{\tcount+1},z}$ is defined in $\mathsf{T}$, and also that $Y_{\tcount+1}\in \confReg_{z}(x;\lambda_{x,Y_{\tcount+1},z})$. Hence, it holds
  \begin{multline}\label{eq:eq:cov-cond:1}
    \prob^{\mathcal{T}}\pr{Y_{\tcount+1} \in \mathcal{C}_{\alpha}(X_{\tcount+1}) \,\vert\, X_{\tcount+1}=x, Z_{\tcount+1}=z}
    \\
    = \prob^{\mathcal{T}}\pr{Y_{\tcount+1} \in \confReg_{z}\pr{x; \adjfunc{\tau_{x,z}}(\q{1-\alpha}(\measure_{\tcount}))} \,\vert\, X_{\tcount+1}=x, Z_{\tcount+1}=z}
    \\
    = \prob^{\mathcal{T}}\pr{\lambda_{x,Y_{\tcount+1},z} \le \adjfunc{\tau_{x,z}}\pr{\q{1-\alpha}(\measure_{\tcount})} \,\vert\, X_{\tcount+1}=x, Z_{\tcount+1}=z}.
  \end{multline}
  Let's introduce the term $\prob^{\mathcal{T}}(\lambda_{x,Y_{\tcount+1},z} \le \adjfunc{\tau_{x,z}}(\varphi) \,\vert\, X_{\tcount+1}=x, Z_{\tcount+1}=z)$ as follows:
  \begin{multline}\label{eq:prob-diff}
    \prob^{\mathcal{T}}\pr{\lambda_{x,Y_{\tcount+1},z} \le \adjfunc{\tau_{x,z}}\pr{\q{1-\alpha}(\measure_{\tcount})} \,\vert\, X_{\tcount+1}=x, Z_{\tcount+1}=z}
    \\
    = \prob^{\mathcal{T}}\pr{\lambda_{x,Y_{\tcount+1},z} \le \adjfunc{\tau_{x,z}}\pr{\q{1-\alpha}(\measure_{\tcount})} \,\vert\, X_{\tcount+1}=x, Z_{\tcount+1}=z}
    \\
    \pm \prob^{\mathcal{T}}\pr{\lambda_{x,Y_{\tcount+1},z} \le \adjfunc{\tau_{x,z}}(\varphi) \,\vert\, X_{\tcount+1}=x, Z_{\tcount+1}=z}.
  \end{multline}
  Now, we will control the difference between the two terms of the previous equation.
  Let $A$ and $B$ be defined as
  \begin{align*}
    &A = \prob^{\mathcal{T}}\pr{\adjfunc{\tau_{x,z}}^{-1}(\lambda_{x,Y_{\tcount+1},z}) \le \q{1-\alpha}(\measure_{\tcount}) < \varphi \,\vert\, X_{\tcount+1}=x, Z_{\tcount+1}=z},
    \\
    &B = \prob^{\mathcal{T}}\pr{\adjfunc{\tau_{x,z}}^{-1}(\lambda_{x,Y_{\tcount+1},z}) \le \varphi \le \q{1-\alpha}(\measure_{\tcount}) \,\vert\, X_{\tcount+1}=x, Z_{\tcount+1}=z}.
  \end{align*}
  We have
  \begin{multline*}
    \prob^{\mathcal{T}}\pr{\lambda_{x,Y_{\tcount+1},z} \le \adjfunc{\tau_{x,z}}\pr{\q{1-\alpha}(\measure_{\tcount})} \,\vert\, X_{\tcount+1}=x, Z_{\tcount+1}=z}
    \\
    = A + B
    + \prob^{\mathcal{T}}\pr{\varphi < \adjfunc{\tau_{x,z}}^{-1}\pr{\lambda_{x,Y_{\tcount+1},z}} \le \q{1-\alpha}(\measure_{\tcount}) \,\vert\, X_{\tcount+1}=x, Z_{\tcount+1}=z},
  \end{multline*}
  and also
  \begin{multline*}
    \prob^{\mathcal{T}}\pr{\lambda_{x,Y_{\tcount+1},z} \le \adjfunc{\tau_{x,z}}(\varphi) \,\vert\, X_{\tcount+1}=x, Z_{\tcount+1}=z}
    \\
    = A + B
    + \prob^{\mathcal{T}}\pr{\q{1-\alpha}(\measure_{\tcount}) < \adjfunc{\tau_{x,z}}^{-1}\pr{\lambda_{x,Y_{\tcount+1},z}} \le \varphi \,\vert\, X_{\tcount+1}=x, Z_{\tcount+1}=z}.
  \end{multline*}
  Therefore, the difference between the terms introduced in~\eqref{eq:prob-diff} can be rewritten as
  \begin{multline}\label{eq:prob-diff:2}
    \prob^{\mathcal{T}}\pr{\lambda_{x,Y_{\tcount+1},z} \le \adjfunc{\tau_{x,z}}\pr{\q{1-\alpha}(\measure_{\tcount})} \,\vert\, X_{\tcount+1}=x, Z_{\tcount+1}=z}
    \\
    - \prob^{\mathcal{T}}\pr{\lambda_{x,Y_{\tcount+1},z} \le \adjfunc{\tau_{x,z}}(\varphi) \,\vert\, X_{\tcount+1}=x, Z_{\tcount+1}=z}
    \\
    = \prob^{\mathcal{T}}\pr{\varphi < \adjfunc{\tau_{x,z}}^{-1}\pr{\lambda_{x,Y_{\tcount+1},z}} \le \q{1-\alpha}(\measure_{\tcount}) \,\vert\, X_{\tcount+1}=x, Z_{\tcount+1}=z}
    \\
    - \prob^{\mathcal{T}}\pr{\q{1-\alpha}(\measure_{\tcount}) < \adjfunc{\tau_{x,z}}^{-1}\pr{\lambda_{x,Y_{\tcount+1},z}} \le \varphi \,\vert\, X_{\tcount+1}=x, Z_{\tcount+1}=z}.
  \end{multline}
  \begin{enumerate}[leftmargin=1em,labelsep=.5em]
    \item By definition of the total variation distance, we have
    \begin{multline*}
      \prob^{\mathcal{T}}\pr{\lambda_{x,Y_{\tcount+1},z} \le \adjfunc{\tau_{x,z}}(\varphi) \,\vert\, X_{\tcount+1}=x, Z_{\tcount+1}=z}
      \\
      \ge \prob^{\mathcal{T}}\pr{\lambda_{x,\hat{Y}_{\tcount+1},z}\le \adjfunc{\tau_{x,z}}(\varphi) \,\vert\, X_{\tcount+1}=x, Z_{\tcount+1}=z}
      - \tv(\textup{P}_{Y\mid X=x}; \distr_{Y \mid X=x}).
    \end{multline*}
    Moreover, \Cref{lem:tau} implies that
    \begin{multline*}
      \prob^{\mathcal{T}}\pr{\lambda_{x,\hat{Y}_{\tcount+1},z}\le \adjfunc{\tau_{x,z}}(\varphi) \,\vert\, X_{\tcount+1}=x, Z_{\tcount+1}=z}
      \\
      = \prob^{\mathcal{T}}\pr{\hat{Y}_{\tcount+1}\in\ac{y\in\YC\colon \lambda_{x,y,z}\le \adjfunc{\tau_{x,z}}(\varphi)} \,\vert\, X_{\tcount+1}=x, Z_{\tcount+1}=z}
      \\
      = \prob^{\mathcal{T}}\pr{\hat{Y}_{\tcount+1}\in \confReg_{z}(x; \adjfunc{\tau_{x,z}}(\varphi)) \,\vert\, X_{\tcount+1}=x, Z_{\tcount+1}=z}
      \\
      = \distr_{Y \mid X=x}(\confReg_{z}(x; \adjfunc{\tau_{x,z}}(\varphi)))
      \ge 1 - \alpha.
    \end{multline*}
    Therefore, we deduce that
    \begin{equation*}
      \prob^{\mathcal{T}}\pr{\lambda_{x,Y_{\tcount+1},z} \le \adjfunc{\tau_{x,z}}(\varphi) \,\vert\, X_{\tcount+1}=x, Z_{\tcount+1}=z}
      \ge 1 - \alpha - \tv(\textup{P}_{Y\mid X=x}; \distr_{Y \mid X=x}).
    \end{equation*}
    Combining the previous result with~\eqref{eq:prob-diff} and~\eqref{eq:prob-diff:2} shows that
    \begin{multline*}
      \prob^{\mathcal{T}}\pr{\lambda_{x,Y_{\tcount+1},z} \le \adjfunc{\tau_{x,z}}\pr{\q{1-\alpha}(\measure_{\tcount})} \,\vert\, X_{\tcount+1}=x, Z_{\tcount+1}=z}
      \ge 1 - \alpha - \tv(\textup{P}_{Y\mid X=x}; \distr_{Y \mid X=x})
      \\
      % + \prob^{\mathcal{T}}\pr{\varphi < \adjfunc{\tau_{x,z}}^{-1}\pr{\lambda_{x,Y_{\tcount+1},z}} \le \q{1-\alpha}(\measure_{\tcount}) \,\vert\, X_{\tcount+1}=x, Z_{\tcount+1}=z}
      - \prob^{\mathcal{T}}\pr{\q{1-\alpha}(\measure_{\tcount}) < \adjfunc{\tau_{x,z}}^{-1}\pr{\lambda_{x,Y_{\tcount+1},z}} \le \varphi \,\vert\, X_{\tcount+1}=x, Z_{\tcount+1}=z}.
    \end{multline*}
    Finally, using~\eqref{eq:eq:cov-cond:1} gives a lower bound on $\prob^{\mathcal{T}}\pr{Y_{\tcount+1} \in \mathcal{C}_{\alpha}(X_{\tcount+1}) \mid X_{\tcount+1}=x, Z_{\tcount+1}=z}$.

    \item By definition of the total variation distance, we have
    \begin{multline*}
      \prob^{\mathcal{T}}\pr{\lambda_{x,Y_{\tcount+1},z} \le \adjfunc{\tau_{x,z}}(\varphi) \,\vert\, X_{\tcount+1}=x, Z_{\tcount+1}=z}
      \\
      \le \prob^{\mathcal{T}}\pr{\lambda_{x,\hat{Y}_{\tcount+1},z}\le \adjfunc{\tau_{x,z}}(\varphi) \,\vert\, X_{\tcount+1}=x, Z_{\tcount+1}=z}
      + \tv(\textup{P}_{Y\mid X=x}; \distr_{Y \mid X=x}).
    \end{multline*}
    Moreover, \Cref{lem:tau} implies that
    \begin{equation*}
      \prob^{\mathcal{T}}\pr{\lambda_{x,\hat{Y}_{\tcount+1},z}\le \adjfunc{\tau_{x,z}}(\varphi) \,\vert\, X_{\tcount+1}=x, Z_{\tcount+1}=z}
      = \distr_{Y \mid X=x}(\confReg_{z}(x; \adjfunc{\tau_{x,z}}(\varphi))).
    \end{equation*}
    Therefore, we deduce that
    \begin{multline*}
      \prob^{\mathcal{T}}\pr{\lambda_{x,Y_{\tcount+1},z} \le \adjfunc{\tau_{x,z}}(\varphi) \,\vert\, X_{\tcount+1}=x, Z_{\tcount+1}=z}
      \\
      \le \distr_{Y \mid X=x}(\confReg_{z}(x; \adjfunc{\tau_{x,z}}(\varphi))) + \tv(\textup{P}_{Y\mid X=x}; \distr_{Y \mid X=x}).
    \end{multline*}
    Finally, combining the previous result with~\eqref{eq:prob-diff} and~\eqref{eq:prob-diff:2} shows that
    \begin{multline*}
      \prob^{\mathcal{T}}\pr{\lambda_{x,Y_{\tcount+1},z} \le \adjfunc{\tau_{x,z}}\pr{\q{1-\alpha}(\measure_{\tcount})} \,\vert\, X_{\tcount+1}=x, Z_{\tcount+1}=z}
      \\
      \le \distr_{Y \mid X=x}(\confReg_{z}(x; \adjfunc{\tau_{x,z}}(\varphi)))
      + \tv(\textup{P}_{Y\mid X=x}; \distr_{Y \mid X=x})
      + q_{\tcount+1}^{(x,z)}.
    \end{multline*}
  \end{enumerate}
\end{proof}

\subsection{Bound on $p_{\tcount+1}^{(x,z)}$ and $q_{\tcount+1}^{(x,z)}$}
\label{subsect:proof:cond-coverage}
% !TEX root = ../main.tex

The objective of this section is to study the conditional guarantee obtained in \Cref{supp:thm:coverage:conditional}. Under some assumptions, we have demonstrated that the conditional coverage is controlled as follows:
\begin{multline*}
  1 - \alpha - \tv(\textup{P}_{Y\mid X=x}; \distr_{Y \mid X=x}) - p_{\tcount+1}^{(x,z)}
  \le \prob^{\mathcal{T}}\pr{Y_{\tcount+1} \in \mathcal{C}_{\alpha}(X_{\tcount+1}) \mid X_{\tcount+1}=x, Z_{\tcount+1}=z}
  \\
  \le \distr_{Y \mid X=x}(\confReg_{z}(x; \adjfunc{\tau_{x,z}}(\varphi))) + \tv(\textup{P}_{Y\mid X=x}; \distr_{Y \mid X=x}) + q_{\tcount+1}^{(x,z)},
\end{multline*}
In the following, we consider the cumulative density functions $F\colon t\mapsto \prob^{\mathcal{T}}(\adjfunc{\tau_{X,Z}}^{-1}(\lambda_{X,Y,Z}) \le t)$ and $\hat{F}\colon t\mapsto \prob^{\mathcal{T}}(\adjfunc{\tau_{X,Z}}^{-1}(\lambda_{X,\hat{Y},Z}) \le t)$, where $(X,Y,Z)\sim \textup{P}_X\times \textup{P}_{Y\mid X}\times \PZ$ and $(X,\hat{Y},Z)\sim \textup{P}_X\times \distr_{Y \mid X}\times \PZ$.
We denote by $\mu$ and $\hat{\mu}$ the law of the random variables $\adjfunc{\tau_{X,Z}}^{-1}(\lambda_{X,Y,Z})$ and $\adjfunc{\tau_{X,Z}}^{-1}(\lambda_{X,\hat{Y},Z})$.
Moreover, recall that $\measure_{\tcount}=\frac{1}{\tcount+1}\sum_{k=1}^{\tcount}\delta_{\adjfunc{\bar{\tau}_{X_k}}^{-1}(\bar{\lambda}_k)}+\frac{1}{\tcount+1}\delta_{\infty}$.
% While $\E[p_{\tcount+1}^{(X_{\tcount+1},Z_{\tcount+1})}]\le \alpha$, studying $p_{\tcount+1}^{(x,z)}$ is challenging. However, we control this term in~\Cref{corr:suppl:cond-coverage}. 
Note, the quantile $\q{1-\alpha}(\measure_{\tcount})$ is an order statistic with a known distribution that converges to the true quantile $\q{1-\alpha}(\mu)$. The quantile is defined for any $t\in (0,1)$ by
\begin{equation}\label{eq:def:qepsilon}
  \q{t}(\nu) = \inf\{u\in\R\colon \nu((-\infty,u]) \ge t\},\qquad \text{where $\nu\in\{\mu,\mu_{\tcount},\hat{\mu}$\}}.
\end{equation}
\thmspace

\begin{theorem}\label{corr:suppl:cond-coverage}
  Assume \Cref{ass:confReg}-\Cref{ass:tau} hold, and let $x\in\R^d$ be such that $\distr_{Y \mid X=x}$ is a probability measure.
  For any $\epsilon\in [0,1-\alpha)$, if $p_{\epsilon} = \prob^{\mathcal{T}}(\adjfunc{\tau_{X,Z}}^{-1}(\lambda_{X,Y,Z})<\q{1-\alpha-\epsilon}(\mu))\le 1-\alpha$, then it follows that
  \begin{multline*}
    % \prob^{\mathcal{T}}\pr{Y_{\tcount+1} \in \mathcal{C}_{\alpha}(X_{\tcount+1}) \mid X_{\tcount+1}=x, Z_{\tcount+1}=z}
    % \ge 1 - \alpha - \tv(\textup{P}_{Y\mid X=x}; \distr_{Y \mid X=x})
    p_{\tcount+1}^{(x,z)}
    \le
    \prob^{\mathcal{T}}\pr{\q{1-\alpha-\epsilon}(\mu) < \adjfunc{\tau_{x,y}}^{-1}(\lambda_{x,Y_{\tcount+1},z}) \le \q{1-\alpha}(\hat{\mu}) \,\vert\, X_{\tcount+1}=x, Z_{\tcount+1}=z}
    \\
    + \exp\pr{-\tcount p_{\epsilon} (1-p_{\epsilon}) h\pr{\frac{1-\alpha-p_{\epsilon}}{p_{\epsilon}(1-p_{\epsilon})}}}
    % - \abs{F_x(\q{1-\alpha}(\hat{\mu})) - F_x(\q{1-\alpha-\epsilon}(\mu))}
    ,
  \end{multline*}
  where $h:u\mapsto (1+u) \log(1+u)-u$.
\end{theorem}

\begin{proof}
  Let $\epsilon\in[0,1-\alpha)$, $x\in\XC$, and consider
  \begin{align*}
    &A = \ac{\q{1-\alpha}(\measure_{\tcount}) < \q{1-\alpha-\epsilon}(\mu)},
    \\
    &B_{x,z} = \ac{y\in\YC\colon \adjfunc{\tau_{x,z}}(\q{1-\alpha-\epsilon}(\mu)) < \lambda_{x,y,z} \le \adjfunc{\tau_{x,z}}(\varphi)}.
  \end{align*}
  We have
  \begin{multline*}
    \prob^{\mathcal{T}}\pr{\adjfunc{\tau_{x,z}}\pr{\q{1-\alpha}(\measure_{\tcount})} < \lambda_{x,Y_{\tcount+1},z} \le \adjfunc{\tau_{x,z}}(\varphi) \,\vert\, X_{\tcount+1}=x, Z_{\tcount+1}=z}
    \\
    \le \prob^{\mathcal{T}}\pr{A \mid X_{\tcount+1}=x, Z_{\tcount+1}=z}
    + \prob^{\mathcal{T}}\pr{Y_{\tcount+1}\in B_{x,z} \mid X_{\tcount+1}=x, Z_{\tcount+1}=z}.
  \end{multline*}
  Now, let's upper bound the first term of the right-hand side equation.
  First, remark that
  \begin{equation*}
    \ac{\q{1-\alpha}(\measure_{\tcount}) < \q{1-\alpha-\epsilon}(\mu)}
    \Leftrightarrow \ac{\frac{1}{\tcount+1} \sum_{k=1}^{\tcount} \1_{\adjfunc{\bar{\tau}_k}^{-1}(\bar{\lambda}_k) < \q{1-\alpha-\epsilon}(\mu)} \ge 1 - \alpha}.
  \end{equation*}
  Thus, we deduce that 
  \begin{equation*}
    \prob^{\mathcal{T}}\pr{A \mid X_{\tcount+1}=x, Z_{\tcount+1}=z}
    % \\
    % = \prob^{\mathcal{T}}\pr{\lambda_{x,Y_{\tcount+1},z} \le \adjfunc{\tau_{x,z}}(\q{1-\alpha}(\measure_{\tcount})); \lambda_{x,Y_{\tcount+1},z}\wedge \tau_{x,z} \le \adjfunc{\tau_{x,z}}(\q{1-\alpha-\epsilon}(\mu)) \mid X_{\tcount+1}=x, Z_{\tcount+1}=z}
    % \\
    \le \prob^{\mathcal{T}}\pr{\sum_{k=1}^{\tcount} \1_{\adjfunc{\bar{\tau}_k}^{-1}(\bar{\lambda}_k) < \q{1-\alpha-\epsilon}(\mu)} \ge (\tcount+1) (1 - \alpha)}.
  \end{equation*}
  Recall that $p_{\epsilon} = \prob^{\mathcal{T}}(\adjfunc{\tau_{X,Z}}^{-1}(\lambda_{X,Y,Z})<\q{1-\alpha-\epsilon}(\mu))$, and also that we assume $p_{\epsilon}\le 1-\alpha$.
  Therefore, the Bennett's inequality \cite[Theorem 2]{boucheron2003concentration} implies that
  \begin{equation}\label{eq:prob:YinC:Ax}
    \prob^{\mathcal{T}}\pr{A \mid X_{\tcount+1}=x, Z_{\tcount+1}=z}
    % \\
    \le \exp\pr{-\tcount p_{\epsilon} (1-p_{\epsilon}) \, h\pr{\frac{(\tcount+1) (1 - \alpha) - \tcount p_{\epsilon}}{\tcount p_{\epsilon} (1-p_{\epsilon})}}},
  \end{equation}
  where $h:u\mapsto (1+u)\log(1+u)-u$.
  Moreover, define
  \begin{align*}
    &u_{\epsilon} = \frac{1-\alpha-p_{\epsilon}}{p_{\epsilon}(1-p_{\epsilon})},
    &\tilde{u}_{\epsilon} = \frac{(\tcount+1) (1 - \alpha) - \tcount p_{\epsilon}}{\tcount p_{\epsilon} (1-p_{\epsilon})}.
  \end{align*}
  We have $\tilde{u}_{\epsilon} \le u_{\epsilon}$, from the increasing property of $h$ it follows that
  \begin{equation*}
    \prob^{\mathcal{T}}\pr{A \mid X_{\tcount+1}=x, Z_{\tcount+1}=z}
    \le \exp\pr{-\tcount p_{\epsilon} (1-p_{\epsilon}) h(u_{\epsilon})}.
  \end{equation*}
  Furthermore, the definition of $B_{x,z}$ gives
  \begin{multline*}
    \prob^{\mathcal{T}}\pr{Y_{\tcount+1}\in B_{x,z} \mid X_{\tcount+1}=x, Z_{\tcount+1}=z}
    \\
    = \prob^{\mathcal{T}}\pr{\adjfunc{\tau_{x,z}}(\q{1-\alpha-\epsilon}(\mu)) < \lambda_{x,Y_{\tcount+1},z} \le \adjfunc{\tau_{x,z}}(\varphi) \mid X_{\tcount+1}=x, Z_{\tcount+1}=z}.
  \end{multline*}
  % By definition of $\q{1-\alpha-\epsilon}(\mu)$ provided in~\eqref{eq:def:qepsilon}, we have $\q{1-\alpha-\epsilon}(\mu)=\q{1-\alpha-\epsilon}(\mu)$.
  Moreover, for any $t\in (-\infty, \varphi)$, we have
  \begin{align*}
    \hat{F}(t)
    &= \prob^{\mathcal{T}}\pr{\adjfunc{\tau_{X,Z}}^{-1}(\lambda_{X,\hat{Y},Z})\le t}
    \\
    &= \int \prob^{\mathcal{T}}\pr{\adjfunc{\tau_{X,Z}}^{-1}(\lambda_{X,\hat{Y},Z})\le t \,\Big\vert\, X=x, Z=z} \bar{\distr}_{Z\mid X=x}(\rmd z) \, \textup{P}_X(\rmd x)
    \\
    &= \int \prob^{\mathcal{T}}\pr{\hat{Y} \in \confReg\pr{x, \adjfunc{\tau_{z,z}}(t)} \,\Big\vert\, X=x, Z=z} \bar{\distr}_{Z\mid X=x}(\rmd z) \, \textup{P}_X(\rmd x).
  \end{align*}
  Using~\Cref{ass:tau}, the bijective property of $\tau\mapsto \adjfunc{\tau}(\varphi)$ implies the existence of $\nu\in\mathsf{T}$, such that $\adjfunc{\nu}(\varphi)=\adjfunc{\tau_{z,z}}(t)$. Note that, $\nu<\tau_{x,z}$ otherwise it would lead to $\adjfunc{\nu}(\varphi)\ge \adjfunc{\tau_{x,z}}(\varphi) > \adjfunc{\tau_{x,z}}(t)$. The definition of $\tau_{x,z}$ shows that
  \[
    \prob^{\mathcal{T}}\pr{\hat{Y} \in \confReg\pr{x, \adjfunc{\nu}(\varphi)} \,\Big\vert\, X=x, Z=z}
    < 1-\alpha.
  \]
  Therefore, we deduce that $\q{1-\alpha}(\hat{\mu})\ge \varphi$, and we can conclude that
  \begin{multline}\label{eq:prob:YinC:Bx}
    \prob^{\mathcal{T}}\pr{Y_{\tcount+1}\in B_{x,z} \mid X_{\tcount+1}=x, Z_{\tcount+1}=z}
    \\
    \le \prob^{\mathcal{T}}\pr{\q{1-\alpha-\epsilon}(\mu) < \adjfunc{\tau_{x,y}}^{-1}(\lambda_{x,Y_{\tcount+1},z}) \le \q{1-\alpha}(\hat{\mu}) \mid X_{\tcount+1}=x, Z_{\tcount+1}=z}.
  \end{multline}
  Finally, combining~\eqref{eq:prob:YinC:Ax} and~\eqref{eq:prob:YinC:Bx} concludes the proof.
\end{proof}

Given $\alpha\in(0,1)$, define the threshold
\begin{equation}\label{eq:def:epsilon-n}
  \epsilon_{\tcount} 
  = \sqrt{\frac{8\alpha(1-\alpha) \log \tcount}{\tcount}}.
\end{equation}
\thmspace

\begin{lemma}\label{lem:suppl:bound:cond-coverage}
  If the distribution of $\adjfunc{\tau_{X,Z}}^{-1}(\lambda_{X,Y,Z})$ is continuous, then for all $\epsilon\in[0,1-\alpha)$, we have $p_{\epsilon} = \prob^{\mathcal{T}}(\adjfunc{\tau_{X,Z}}^{-1}(\lambda_{X,Y,Z})<\q{1-\alpha-\epsilon}(\mu))=1-\alpha-\epsilon$.
  Moreover, if $\epsilon_{\tcount}\le \frac{\alpha (1-\alpha)}{8}$, then it follows  
  \begin{equation*}
    \exp\pr{-\tcount p_{\epsilon_{\tcount}} (1-p_{\epsilon_{\tcount}}) h\pr{\frac{1-\alpha-p_{\epsilon_{\tcount}}}{p_{\epsilon_{\tcount}}(1-p_{\epsilon_{\tcount}})}}}
    \le \frac{1}{\tcount}
    ,
  \end{equation*}
  where $h:u\mapsto (1+u) \log(1+u)-u$.
\end{lemma}

\begin{proof}
  First, recall that $\q{1-\alpha-\epsilon}(\mu)$ is defined in~\eqref{eq:def:qepsilon}.
  If the distribution of $\adjfunc{\tau_{X,Z}}^{-1}(\lambda_{X,Y,Z})$ is continuous, then we have 
  \begin{multline*}
    1 - \alpha - \epsilon
    \le F(\q{1-\alpha-\epsilon}(\mu))
    = \sup_{\delta>0} F(\q{1-\alpha-\epsilon}(\mu)-\delta)
    \\
    \le \prob^{\mathcal{T}}\pr{\adjfunc{\tau_{X,Z}}^{-1}(\lambda_{X,Y,Z}) < \q{1-\alpha-\epsilon}(\mu)}
    = p_{\epsilon}
    \le 1 - \alpha - \epsilon.
  \end{multline*}
  Therefore, we deduce that $p_{\epsilon}=1-\alpha-\epsilon$.
  Let's denote
  \begin{align*}
    \delta_{\tcount} 
    &= (\tcount+1)(1-\alpha)-\tcount p_{\epsilon_{\tcount}},
    &
    u_{\tcount}
    &= \frac{(\tcount+1) (1 - \alpha) - \tcount p_{\epsilon_{\tcount}}}{\tcount p_{\epsilon_{\tcount}} (1-p_{\epsilon_{\tcount}})}.
  \end{align*}
  For any $u\ge 0$, remark that $\log(1+u) \ge u - u^2/2$. Thus, we deduce
  \begin{align}
    \nonumber
    \tcount p_{\epsilon_{\tcount}} (1-p_{\epsilon_{\tcount}}) \, h\pr{u_{\tcount}}
    &\ge \delta_{\tcount} \frac{(1+u_{\tcount}) \log(1+u_{\tcount}) - u_{\tcount}}{u_{\tcount}}
    \\
    \label{eq:bound:residu}
    &\ge \delta_{\tcount} \frac{u_{\tcount} (1 - u_{\tcount})}{2}.
  \end{align}
  Now, let's show that $u_{\tcount}\le 1/4$. We have
  \begin{align*}
    u_{\tcount}
    &= \frac{(\tcount+1) (1 - \alpha) - \tcount p_{\epsilon_{\tcount}}}{\tcount p_{\epsilon_{\tcount}} (1-p_{\epsilon_{\tcount}})}
    \\
    &= \frac{1-\alpha}{\tcount p_{\epsilon_{\tcount}} (1-p_{\epsilon_{\tcount}})} + \frac{1 - \alpha - p_{\epsilon_{\tcount}}}{p_{\epsilon_{\tcount}} (1-p_{\epsilon_{\tcount}})}
    \\
    &= \frac{1-\alpha}{\tcount (\alpha + \epsilon_{\tcount}) (1 - \alpha - \epsilon_{\tcount})} + \frac{\epsilon_{\tcount}}{(\alpha + \epsilon_{\tcount}) (1 - \alpha - \epsilon_{\tcount})}.
  \end{align*}
  Therefore, $u_{\tcount}\le 1/4$ if and only if
  \begin{equation*}
    \frac{1-\alpha}{\tcount} + \epsilon_{\tcount}
    \le \frac{(\alpha + \epsilon_{\tcount}) (1 - \alpha - \epsilon_{\tcount})}{4}.
  \end{equation*}
  The function $\epsilon\in[0,1/2-\alpha]\mapsto (\alpha + \epsilon) (1 - \alpha - \epsilon)$ is increasing. Since $\epsilon_{\tcount} \le \alpha(1-\alpha)/8 \le 1/2 - \alpha$, it is sufficient to prove that
  \begin{equation*}
    \frac{1-\alpha}{\tcount} + \epsilon_{\tcount}
    \le \frac{\alpha (1 - \alpha)}{4}.
  \end{equation*}
  Since $\epsilon_{\tcount}\le \alpha (1-\alpha) / 8$, we just need to show that
  \begin{equation}\label{eq:bound:to-check}
    \frac{1-\alpha}{\tcount}
    \le \frac{\alpha (1 - \alpha)}{8}, 
    \qquad \text{i.e.,} \qquad
    \frac{8 \alpha (1-\alpha)}{\tcount}
    \le \alpha^2 (1 - \alpha)
    .
  \end{equation}
  Again, using the fact that $\epsilon_{\tcount}\le \alpha (1-\alpha) / 8$, we deduce that
  \begin{equation*}
    \frac{8 \alpha (1-\alpha)}{\tcount}
    = \frac{\epsilon_{\tcount}^2}{\log \tcount}
    \le \frac{\alpha^2 (1-\alpha)^2}{8 \log \tcount}
    = \alpha^2 (1-\alpha) \times \frac{(1-\alpha)}{8 \log \tcount}.
  \end{equation*}
  Since $\frac{(1-\alpha)}{8 \log \tcount}\le 1$, we deduce that \eqref{eq:bound:to-check} holds. This concludes that $u_{\tcount}\le 1/4$.
  Moreover, for any $u\in[0,0.25]$, we have
  \begin{equation*}
    \delta_{\tcount} \frac{u (1 - u)}{2}
    \ge \frac{u\delta_{\tcount}}{4}.
  \end{equation*}
  Plugging the previous line in~\eqref{eq:bound:residu} implies that
  \begin{align}
    \nonumber
    \exp\pr{-\tcount p_{\epsilon_{\tcount}} (1-p_{\epsilon_{\tcount}}) \, h\pr{u_{\tcount}}}
    &\le \exp\pr{-\frac{\br{(\tcount+1) (1-\alpha) - \tcount p_{\epsilon_{\tcount}}}^2}{4\tcount p_{\epsilon_{\tcount}} (1-p_{\epsilon_{\tcount}})}}
    \\
    \nonumber
    &\le \exp\pr{-\frac{\pr{1 - \alpha + \tcount \epsilon_{\tcount}}^2}{4\tcount (\alpha + \epsilon_{\tcount}) (1-\alpha - \epsilon_{\tcount})}}
    \\
    \label{eq:bound:exp-tcount}
    &\le \exp\pr{-\frac{\tcount \epsilon_{\tcount}^2}{4(\alpha+\epsilon_{\tcount})(1 - \alpha - \epsilon_{\tcount})}}
    .
  \end{align}
  Lastly, since $\epsilon_{\tcount} \le \alpha$, it follows that
  \begin{align*}
    \frac{\tcount \epsilon_{\tcount}^2}{4(\alpha+\epsilon_{\tcount})(1 - \alpha - \epsilon_{\tcount})}
    = \frac{2 \alpha (1-\alpha) \log\tcount}{(\alpha+\epsilon_{\tcount})(1 - \alpha - \epsilon_{\tcount})}
    \ge \log\tcount.
  \end{align*}
  Combining the previous line with~\eqref{eq:bound:exp-tcount} completes the proof.  
\end{proof}

For any $\epsilon\in[0,\alpha)$, define
\begin{equation*}
  q_{\epsilon} 
  = \prob^{\mathcal{T}}(\adjfunc{\tau_{X,Z}}^{-1}(\lambda_{X,Y,Z})<\q{1-\alpha+\epsilon}(\mu)).
\end{equation*}

\begin{theorem}\label{corr:suppl:cond-coverage:qn}
  Assume \Cref{ass:confReg}-\Cref{ass:tau} hold, and let $x\in\R^d$ be such that $\distr_{Y \mid X=x}$ is a probability measure.
  If the distribution of $\adjfunc{\tau_{X,Z}}^{-1}(\lambda_{X,Y,Z})$ is continuous and $\tcount^{-1}\log\tcount\le 8^{-3}\alpha(1-\alpha)$, then, it holds
  \begin{multline}
    q_{\tcount+1}^{(x,z)}
    \le
    \frac{1}{\tcount}
    + \prob^{\mathcal{T}}\pr{\q{1-\alpha}(\hat{\mu}) < \adjfunc{\tau_{x,y}}^{-1}(\lambda_{x,Y_{\tcount+1},z}) \le \q{1-\alpha+\epsilon_{\tcount}}(\mu) \,\vert\, X_{\tcount+1}=x, Z_{\tcount+1}=z}
    ,
  \end{multline}
  where $\epsilon_{\tcount}$ is defined in~\eqref{eq:def:epsilon-n}.
\end{theorem}

\begin{proof}
  Let's consider
  \begin{align*}
    &A = \ac{\q{1-\alpha+\epsilon_{\tcount}}(\measure) < \q{1-\alpha}(\mu_{\tcount})},
    \\
    &B_{x,z} = \ac{y\in\YC\colon \adjfunc{\tau_{x,z}}(\q{1-\alpha}(\hat{\mu})) < \lambda_{x,y,z} \le \adjfunc{\tau_{x,z}}(\q{1-\alpha+\epsilon_{\tcount}}(\mu))}.
  \end{align*}
  We have
  \begin{multline}\label{eq:bound:cond-cov-qn:1}
    \prob^{\mathcal{T}}\pr{\adjfunc{\tau_{x,z}}\pr{\q{1-\alpha}(\hat{\measure})} < \lambda_{x,Y_{\tcount+1},z} \le \adjfunc{\tau_{x,z}}(\q{1-\alpha}(\mu_{\tcount})) \,\vert\, X_{\tcount+1}=x, Z_{\tcount+1}=z}
    \\
    \le \prob^{\mathcal{T}}\pr{A \mid X_{\tcount+1}=x, Z_{\tcount+1}=z}
    + \prob^{\mathcal{T}}\pr{Y_{\tcount+1}\in B_{x,z} \mid X_{\tcount+1}=x, Z_{\tcount+1}=z}.
  \end{multline}
  Now, let's upper bound the first term of the right-hand side equation.
  First, remark that
  \begin{equation*}
    \ac{\q{1-\alpha+\epsilon_{\tcount}}(\measure) < \q{1-\alpha}(\mu_{\tcount})}
    \Leftrightarrow \ac{\frac{1}{\tcount+1} \sum_{k=1}^{\tcount} \1_{\adjfunc{\bar{\tau}_k}^{-1}(\bar{\lambda}_k) < \q{1-\alpha+\epsilon_{\tcount}}(\mu)} < 1 - \alpha}.
  \end{equation*}
  Thus, we deduce that 
  \begin{equation*}
    \prob^{\mathcal{T}}\pr{A \mid X_{\tcount+1}=x, Z_{\tcount+1}=z}
    \le \prob^{\mathcal{T}}\pr{\sum_{k=1}^{\tcount} \1_{\adjfunc{\bar{\tau}_k}^{-1}(\bar{\lambda}_k) < \q{1-\alpha+\epsilon_{\tcount}}(\mu_{\tcount})} < (\tcount+1) (1 - \alpha)}.
  \end{equation*}
  Recall that $q_{\epsilon_{\tcount}} = \prob^{\mathcal{T}}(\adjfunc{\tau_{X,Z}}^{-1}(\lambda_{X,Y,Z})<\q{1-\alpha+\epsilon_{\tcount}}(\mu))$, and also that $q_{\epsilon_{\tcount}}< 1$ since the distribution of $\adjfunc{\tau_{X,Z}}^{-1}(\lambda_{X,Y,Z})$ is continuous with $1-\alpha+\epsilon_{\tcount}<1$.
  Therefore, the Bennett's inequality \cite[Theorem 2]{boucheron2003concentration} implies that
  \begin{equation*}
    \prob^{\mathcal{T}}\pr{A \mid X_{\tcount+1}=x, Z_{\tcount+1}=z}
    % \\
    \le \exp\pr{-\tcount q_{\epsilon_{\tcount}} (1-q_{\epsilon_{\tcount}}) \, h\pr{\frac{(\tcount+1) (1 - \alpha) - \tcount q_{\epsilon_{\tcount}}}{\tcount q_{\epsilon_{\tcount}} (1-q_{\epsilon_{\tcount}})}}},
  \end{equation*}
  where $h:u\mapsto (1+u)\log(1+u)-u$.
  Moreover, define
  \begin{align*}
    &u_{\epsilon_{\tcount}} = \frac{1-\alpha-q_{\epsilon_{\tcount}}}{q_{\epsilon_{\tcount}}(1-q_{\epsilon_{\tcount}})},
    &\tilde{u}_{\epsilon_{\tcount}} = \frac{(\tcount+1) (1 - \alpha) - \tcount q_{\epsilon_{\tcount}}}{\tcount q_{\epsilon_{\tcount}} (1-q_{\epsilon_{\tcount}})}.
  \end{align*}
  We have $\tilde{u}_{\epsilon_{\tcount}} \le u_{\epsilon_{\tcount}}$, from the increasing property of $h$ combined with~\Cref{lem:suppl:bound:cond-coverage}, it follows that
  \begin{equation*}
    \prob^{\mathcal{T}}\pr{A \mid X_{\tcount+1}=x, Z_{\tcount+1}=z}
    \le \exp\pr{-\tcount q_{\epsilon_{\tcount}} (1-q_{\epsilon_{\tcount}}) h(u_{\epsilon_{\tcount}})}
    \le \tcount^{-1}.
  \end{equation*}  
  The previous inequality combined with~\eqref{eq:bound:cond-cov-qn:1} concludes the proof.
\end{proof}

\subsection{Proof of \Cref{corr:coverage-conditional:asymptotic}}
\label{subsect:proof:asymptotic-cond-coverage}
% !TEX root = ../main.tex

\begin{theorem}\label{thm:prob-cov-conf:1}
  Assume \Cref{ass:confReg}-\Cref{ass:tau} and suppose the distributions of $\adjfunc{\tau_{X,Z}}^{-1}(\lambda_{X,Y,Z})$ and $\adjfunc{\tau_{X,Z}}^{-1}(\lambda_{X,\hat{Y},Z})$ are continuous.
  For any $\alpha\in(0,1)$ and $\rho>0$, it holds
  \begin{multline*}
      \prob^{\mathcal{T}}\pr{\abs{ \prob^{\mathcal{T}}\pr{Y_{\tcount+1}\in\mathcal{C}_{\alpha}(X_{\tcount+1}) \,\vert\, X_{\tcount+1},Z_{\tcount+1}}
      - 1 + \alpha} > \rho}
      \\
      \le \frac{ 2\tcount^{-1} + \sqrt{128 \alpha(1-\alpha)\tcount^{-1} \log \tcount} + 4 \tv(\textup{P}_{X,Y}; \textup{P}_{X}\times \distr_{Y \mid X}) }{ \rho }.
  \end{multline*}
\end{theorem}

\begin{proof}
    Let $\rho>0$ be fixed.
    Applying~\Cref{thm:coverage:conditional}, we obtain that
    \begin{multline}\label{eq:bound:cond-cov-asympt:1}
        1 - \alpha - \tv(\textup{P}_{Y\mid X=x}; \distr_{Y \mid X=x}) - p_{\tcount+1}^{(x,z)}
        \le \prob^{\mathcal{T}}\pr{Y_{\tcount+1} \in \mathcal{C}_{\alpha}(X_{\tcount+1}) \mid X_{\tcount+1}=x, Z_{\tcount+1}=z}
        \\
        \le \distr_{Y \mid X=x}(\confReg_{z}(x; \adjfunc{\tau_{x,z}}(\varphi))) + \tv(\textup{P}_{Y\mid X=x}; \distr_{Y \mid X=x}) + q_{\tcount+1}^{(x,z)}.
    \end{multline}

    \paragraph{Step 1: Lower bound.}

    Using the Markov's inequality implies that
    \begin{multline}\label{eq:bound:cond-cov-asympt:2}
        \prob^{\mathcal{T}}\pr{\prob^{\mathcal{T}}\pr{Y_{\tcount+1}\in\mathcal{C}_{\alpha}(X_{\tcount+1}) \,\vert\, X_{\tcount+1},Z_{\tcount+1}}
        < 1 - \alpha - \rho}
        \\
        \le \prob^{\mathcal{T}}\pr{
            \tv(\textup{P}_{Y\mid X}; \distr_{Y \mid X}) + p_{\tcount+1}^{(X,Z)}
            < \rho
        }
        \le \frac{ \E^{\mathcal{T}}\brbig{\tv(\textup{P}_{Y\mid X}; \distr_{Y \mid X})} + \E^{\mathcal{T}}\brbig{p_{\tcount+1}^{(X,Z)}} }{ \rho }.
    \end{multline}
    Moreover, using~\Cref{corr:suppl:cond-coverage} with $\Phi(\epsilon)=\epsilon[(u_\epsilon^{-1}-1) \log(1+u_{\epsilon}) - 1]$ and $u_{\epsilon}=\epsilon(\alpha+\epsilon)^{-1}(1-\alpha-\epsilon)$, it holds
    \begin{multline*}
        % \label{eq:bound:pXZ:1}
        \E^{\mathcal{T}}\brbig{p_{\tcount+1}^{(X,Z)}}
        = \prob^{\mathcal{T}}\pr{\q{1-\alpha}(\measure_{\tcount}) < \adjfunc{\tau_{X,Z}}^{-1}(\lambda_{X,Y,Z}) \le \varphi}
        \\
        % \nonumber
        \le \exp\pr{- \tcount \Phi(\epsilon)}
        + \prob^{\mathcal{T}}\pr{\q{1-\alpha-\epsilon}(\mu) < \adjfunc{\bar{\tau}_{\tcount+1}}^{-1}(\bar{\lambda}_{\tcount+1}) \le \q{1-\alpha}(\hat{\mu}) \,\big\vert\, X_{\tcount+1}=x, Z_{\tcount+1}=z}
        .
    \end{multline*}
    By~\Cref{lem:suppl:bound:cond-coverage}, if $\tcount^{-1}\log\tcount\le 8^{-3}\alpha(1-\alpha)$, then, setting $\epsilon_{\tcount} = \sqrt{8\alpha(1-\alpha)\tcount^{-1} \log \tcount}$ ensures that $\exp(- \tcount \Phi(\epsilon_{\tcount}))\le \tcount^{-1}$. We assume in the following that $\tcount^{-1}\log\tcount\le 8^{-3}\alpha(1-\alpha)$, because, if it not the case, the final upper bound obtained at the end of the proof is still valid. Thus, we get
    \begin{equation}\label{eq:bound:pXZ:2}
        \E^{\mathcal{T}}\brbig{p_{\tcount+1}^{(X,Z)}}
        \le \tcount^{-1}
        + \prob^{\mathcal{T}}\pr{\q{1-\alpha-\epsilon_{\tcount}}(\mu) < \adjfunc{\bar{\tau}_{\tcount+1}}^{-1}(\bar{\lambda}_{\tcount+1}) \le \q{1-\alpha}(\hat{\mu})}
        .
    \end{equation}
    Let's define $\bar{\gamma}$ by
    \begin{equation*}
        \bar{\gamma} 
        = \min(1, 1 - \alpha + \tv(\textup{P}_{X,Y}; \textup{P}_{X}\times\distr_{Y \mid X})).
    \end{equation*}
    We now show that $\q{1-\alpha}(\hat{\mu})\le \q{\bar{\gamma}}(\mu)$.
    By continuity of the cumulative density function of $\adjfunc{\tau_{X,Z}}^{-1}(\lambda_{X,\hat{Y},Z})$, we have
    \begin{align*}
        1 - \alpha
        &= \prob^{\mathcal{T}}\pr{\adjfunc{\tau_{X,Z}}^{-1}(\lambda_{X,\hat{Y},Z}) \le \q{1-\alpha}(\hat{\mu})}
        \\
        &\ge \prob^{\mathcal{T}}\pr{\adjfunc{\tau_{X,Z}}^{-1}(\lambda_{X,Y,Z}) \le \q{1-\alpha}(\hat{\mu})}
        - \tv(\textup{P}_{X,Y}; \textup{P}_{X}\times\distr_{Y \mid X}).
    \end{align*}
    Hence, it follows that
    \begin{align*}
        \prob^{\mathcal{T}}\pr{\adjfunc{\tau_{X,Z}}^{-1}(\lambda_{X,Y,Z}) \le \q{1-\alpha}(\hat{\mu})}
        \le \bar{\gamma}
        \le \prob^{\mathcal{T}}\pr{\adjfunc{\tau_{X,Z}}^{-1}(\lambda_{X,Y,Z}) \le \q{\bar{\gamma}}(\mu)}.
    \end{align*}
    Thus, the previous line implies that $\q{1-\alpha}(\hat{\mu}) \le \q{\bar{\gamma}}(\mu)$.
    Once again, using the continuity of the distribution of $\adjfunc{\tau_{X,Z}}^{-1}(\lambda_{X,Y,Z})$, we can write
    \begin{multline*}
        \prob^{\mathcal{T}}\pr{\q{1-\alpha}(\hat{\mu}) < \adjfunc{\tau_{x,y}}^{-1}(\lambda_{x,Y_{\tcount+1},z}) \le \q{1-\alpha+\epsilon_{\tcount}}(\mu)}
        \le \prob^{\mathcal{T}}\pr{\q{1-\alpha-\epsilon_{\tcount}}(\mu) < \adjfunc{\bar{\tau}_{\tcount+1}}^{-1}(\bar{\lambda}_{\tcount+1}) \le \q{\bar{\gamma}}(\mu)}
        \\
        = F\pr{ \q{\bar{\gamma}}(\mu) } - F\pr{ \q{1-\alpha-\epsilon_{\tcount}}(\mu) }
        = \bar{\gamma} + \epsilon_{\tcount} - 1 + \alpha
        \\
        = \tcount^{-1} + \sqrt{8\alpha(1-\alpha)\tcount^{-1} \log \tcount} + \tv(\textup{P}_{X,Y}; \textup{P}_{X}\times\distr_{Y \mid X}).
    \end{multline*}
    Plugging the previous inequality inside~\eqref{eq:bound:pXZ:2} yields
    \begin{equation*}
        \E^{\mathcal{T}}\brbig{p_{\tcount+1}^{(X,Z)}}
        \le \tcount^{-1} + \sqrt{8\alpha(1-\alpha)\tcount^{-1} \log \tcount} + \tv(\textup{P}_{X,Y}; \textup{P}_{X}\times\distr_{Y \mid X}).
    \end{equation*}
    Therefore, \eqref{eq:bound:cond-cov-asympt:2} implies that
    \begin{multline}\label{eq:bound:cond-cov-asympt:3}
        \prob^{\mathcal{T}}\pr{\prob^{\mathcal{T}}\pr{Y_{\tcount+1}\in\mathcal{C}_{\alpha}(X_{\tcount+1}) \,\vert\, X_{\tcount+1},Z_{\tcount+1}}
        < 1 - \alpha - \rho}
        \\
        \le \frac{ \tcount^{-1} + \sqrt{8\alpha(1-\alpha)\tcount^{-1} \log \tcount} + \tv(\textup{P}_{X,Y}; \textup{P}_{X}\times\distr_{Y \mid X}) + \E^{\mathcal{T}}\brbig{\tv(\textup{P}_{Y\mid X}; \distr_{Y \mid X})} }{ \rho }.
    \end{multline}

    \paragraph{Step 2: Upper bound.}
    
    Using~\eqref{eq:bound:cond-cov-asympt:1}, we obtain
    \begin{multline*}
        \prob^{\mathcal{T}}\pr{\prob^{\mathcal{T}}\pr{Y_{\tcount+1}\in\mathcal{C}_{\alpha}(X_{\tcount+1}) \,\vert\, X_{\tcount+1},Z_{\tcount+1}}
        > 1 - \alpha + \rho}
        \\
        \le \prob^{\mathcal{T}}\pr{
            \distr_{Y \mid X=X}(\confReg_{Z}(X; \adjfunc{\tau_{X,Z}}(\varphi))) + \tv(\textup{P}_{Y\mid X=X}; \distr_{Y \mid X=X}) + q_{\tcount+1}^{(X,Z)}
            > 1 - \alpha + \rho
        }.
    \end{multline*}
    The continuity of the distribution of $\adjfunc{\tau_{X,Z}}^{-1}(\lambda_{X,\hat{Y},Z})$ implies
    \begin{equation*}
        1 - \alpha
        = \prob^{\mathcal{T}}\pr{ \adjfunc{\tau_{X,Z}}^{-1}(\lambda_{X,\hat{Y},Z}) \le \varphi}
        = \int \distr_{Y \mid X=x}(\confReg_{z}(x; \adjfunc{\tau_{x,z}}(\varphi))) \bar{\distr}_{Z \mid X=x}(\rmd z) \textup{P}_X(\rmd x).
    \end{equation*}
    Since $\distr_{Y \mid X=x}(\confReg_{z}(x; \adjfunc{\tau_{x,z}}(\varphi)))\ge 1-\alpha$, we deduce that $\distr_{Y \mid X=x}(\confReg_{z}(x; \adjfunc{\tau_{x,z}}(\varphi)))=1-\alpha$ almost surely.
    Therefore, using the Markov's inequality gives
    \begin{multline}\label{eq:bound:qXZ:1}
        \prob^{\mathcal{T}}\pr{\prob^{\mathcal{T}}\pr{Y_{\tcount+1}\in\mathcal{C}_{\alpha}(X_{\tcount+1}) \,\vert\, X_{\tcount+1},Z_{\tcount+1}}
        > 1 - \alpha + \rho}
        \le \frac{ \E^{\mathcal{T}}\brbig{\tv(\textup{P}_{Y\mid X}; \distr_{Y \mid X})} + \E^{\mathcal{T}}\brbig{q_{\tcount+1}^{(X,Z)}} }{ \rho }.
    \end{multline}
    Moreover, applying \Cref{corr:suppl:cond-coverage:qn} shows that
    \begin{equation}\label{eq:bound:qXZ:2}
        q_{\tcount+1}^{(x,z)}
        \le \tcount^{-1}
        + \prob^{\mathcal{T}}\pr{\q{1-\alpha}(\hat{\mu}) < \adjfunc{\tau_{x,y}}^{-1}(\lambda_{x,Y_{\tcount+1},z}) \le \q{1-\alpha+\epsilon_{\tcount}}(\mu) \,\vert\, X_{\tcount+1}=x, Z_{\tcount+1}=z}
        .
    \end{equation}
    Let's define $\underline{\gamma}$ by
    \begin{equation*}
        \underline{\gamma} 
        = \min(1, 1 - \alpha - \tv(\textup{P}_{X,Y}; \textup{P}_{X}\times\distr_{Y \mid X})).
    \end{equation*}
    We now show that $\q{\underline{\gamma}}(\mu)\le \q{1-\alpha}(\hat{\mu})$.
    By continuity of the cumulative density function of $\adjfunc{\tau_{X,Z}}^{-1}(\lambda_{X,\hat{Y},Z})$, we have
    \begin{align*}
        1 - \alpha
        &= \prob^{\mathcal{T}}\pr{\adjfunc{\tau_{X,Z}}^{-1}(\lambda_{X,\hat{Y},Z}) \le \q{1-\alpha}(\hat{\mu})}
        \\
        &\le \prob^{\mathcal{T}}\pr{\adjfunc{\tau_{X,Z}}^{-1}(\lambda_{X,Y,Z}) \le \q{1-\alpha}(\hat{\mu})}
        + \tv(\textup{P}_{X,Y}; \textup{P}_{X}\times\distr_{Y \mid X}).
    \end{align*}
    Hence, it follows that
    \begin{align*}
        \underline{\gamma}
        \le \prob^{\mathcal{T}}\pr{\adjfunc{\tau_{X,Z}}^{-1}(\lambda_{X,Y,Z}) \le \q{1-\alpha}(\hat{\mu})}.
    \end{align*}
    Thus, we deduce that $\q{1-\alpha}(\hat{\mu}) \ge \q{\underline{\gamma}}(\mu)$.
    Using the continuity of the distribution of $\adjfunc{\tau_{X,Z}}^{-1}(\lambda_{X,Y,Z})$, we can write
    \begin{multline*}
        \prob^{\mathcal{T}}\pr{\q{1-\alpha}(\hat{\mu}) < \adjfunc{\tau_{x,y}}^{-1}(\lambda_{x,Y_{\tcount+1},z}) \le \q{1-\alpha+\epsilon_{\tcount}}(\mu)}
        \le \prob^{\mathcal{T}}\pr{\q{\underline{\gamma}}(\mu) < \adjfunc{\tau_{x,y}}^{-1}(\lambda_{x,Y_{\tcount+1},z}) \le \q{1-\alpha+\epsilon_{\tcount}}(\mu)}
        \\
        = F\pr{ \q{1-\alpha+\epsilon_{\tcount}}(\mu) } - F\pr{ \q{\underline{\gamma}}(\mu) }
        = \epsilon_{\tcount} - 1 + \alpha - \underline{\gamma}
        \\
        = \tcount^{-1} + \sqrt{8\alpha(1-\alpha)\tcount^{-1} \log \tcount} + \tv(\textup{P}_{X,Y}; \textup{P}_{X}\times\distr_{Y \mid X}).
    \end{multline*}
    Plugging the previous inequality inside~\eqref{eq:bound:qXZ:2} yields
    \begin{equation*}
        \E^{\mathcal{T}}\brbig{q_{\tcount+1}^{(X,Z)}}
        \le \tcount^{-1} + \sqrt{8\alpha(1-\alpha)\tcount^{-1} \log \tcount} + \tv(\textup{P}_{X,Y}; \textup{P}_{X}\times\distr_{Y \mid X}).
    \end{equation*}
    Therefore, \eqref{eq:bound:qXZ:1} implies that
    \begin{multline}\label{eq:bound:qXZ:3}
        \prob^{\mathcal{T}}\pr{\prob^{\mathcal{T}}\pr{Y_{\tcount+1}\in\mathcal{C}_{\alpha}(X_{\tcount+1}) \,\vert\, X_{\tcount+1},Z_{\tcount+1}}
        < 1 - \alpha - \rho}
        \\
        \le \frac{ \tcount^{-1} + \sqrt{8\alpha(1-\alpha)\tcount^{-1} \log \tcount} + \tv(\textup{P}_{X,Y}; \textup{P}_{X}\times\distr_{Y \mid X}) + \E^{\mathcal{T}}\brbig{\tv(\textup{P}_{Y\mid X}; \distr_{Y \mid X})} }{ \rho }.
    \end{multline}

    \paragraph{Step 3: Bound on $\E^{\mathcal{T}}\brbig{\tv(\textup{P}_{Y\mid X}; \distr_{Y \mid X})}$.}

    Let's denote $\nu_{Y\mid X=x} = 2^{-1} (\textup{P}_{Y\mid X=x} + \distr_{Y \mid X=x})$.
    Since $\textup{P}_{Y\mid X=x}\ll \nu_{Y\mid X=x}$ and $\distr_{Y\mid X=x}\ll \nu_{Y\mid X=x}$, there exists two Radon--Nikodym derivatives $g_1(x,\cdot)$ and $g_1(x,\cdot)$ of $\textup{P}_{Y\mid X=x}$ and $\distr_{Y\mid X=x}$ with respect to $\nu_{Y\mid X=x}$.
    Moreover, $g_1$ and $g_2$ are also the Radon--Nikodym derivatives of $\textup{P}_{X,Y}$ and $\textup{P}_X \times \distr_{Y\mid X}$ with respect to $\textup{P}_X \times \nu_{Y\mid X}$.
    By definition of the total variation distance, we have
    \begin{align}
        \nonumber
        \E^{\mathcal{T}}\brbig{\tv(\textup{P}_{Y\mid X}; \distr_{Y \mid X})}
        &= \int \tv(\textup{P}_{Y\mid X}; \distr_{Y \mid X}) \textup{P}_X(\rmd x)
        \\
        \nonumber
        &= \frac{1}{2} \int \abs{g_1(x,y) - g_2(x,y)} \nu_{Y\mid X=x} \textup{P}_X(\rmd x)
        \\
        \label{eq:bound:tv:1}
        &= \tv(\textup{P}_{X,Y}; \textup{P}_{X}\times\distr_{Y \mid X}).
    \end{align}

    \paragraph{Step 4: Combination.}

    Finally, using~\eqref{eq:bound:cond-cov-asympt:3}-\eqref{eq:bound:qXZ:3} and~\eqref{eq:bound:tv:1}, it follows that
    \begin{multline*}
        \prob^{\mathcal{T}}\pr{\abs{ \prob^{\mathcal{T}}\pr{Y_{\tcount+1}\in\mathcal{C}_{\alpha}(X_{\tcount+1}) \,\vert\, X_{\tcount+1},Z_{\tcount+1}}
        - 1 + \alpha} > \rho}
        \\
        \le \frac{ 2\tcount^{-1} + \sqrt{128 \alpha(1-\alpha)\tcount^{-1} \log \tcount} + 4 \tv(\textup{P}_{X,Y}; \textup{P}_{X}\times \distr_{Y \mid X}) }{ \rho }.
    \end{multline*}
    Note that the proof assumes $\tcount^{-1}\log\tcount \leq 8^{-3}\alpha(1-\alpha)$. To ensure the validity of the previous bound even when this assumption does not hold, we increased the term $\sqrt{32 \alpha(1-\alpha)\tcount^{-1} \log \tcount}$ to $\sqrt{128 \alpha(1-\alpha)\tcount^{-1} \log \tcount}$.
\end{proof}

\begin{theorem}
    Assume \Cref{ass:confReg}-\Cref{ass:tau}-\Cref{ass:tv:cv-prob} hold.
    If the distributions of $\adjfunc{\tau_{X,Z}}^{-1}(\lambda_{X,Y,Z})$ and $\adjfunc{\tau_{X,Z}}^{-1}(\lambda_{X,\hat{Y},Z})$ are continuous, then, $\forall \epsilon\in(0,1)$ there exists $(\Lambda_{\tcount}^{(\epsilon)})_{\tcount\in\N}$ such that $\liminf_{\tcount\to\infty}\prob((X_{\tcount+1},Z_{\tcount+1}) \in \Lambda_{\tcount}^{(\epsilon)}) \ge 1 - \epsilon$ and also
    \begin{equation*}
        \sup_{(x,z)\in\Lambda_{\tcount}^{(\epsilon)}} \abs{ \prob^{\mathcal{T}}\pr{Y_{\tcount+1}\in\mathcal{C}_{\alpha}(X_{\tcount+1}) \,\vert\,( X_{\tcount+1},Z_{\tcount+1})=(x,z)}
        - 1 + \alpha }
        = \Oh_{\prob}\pr{ \sqrt{\tcount^{-1}\log\tcount} + r_{\tcount}}
        .
    \end{equation*}
\end{theorem}

\begin{proof}
    First of all, define the following variables
    \begin{align*}
        &c_{\tcount+1}(x,z)
        = \abs{ \prob^{\mathcal{T}}\pr{Y_{\tcount+1}\in\mathcal{C}_{\alpha}(X_{\tcount+1}) \,\vert\,( X_{\tcount+1},Z_{\tcount+1})=(x,z)}
            - 1 + \alpha }
        ,
        \\
        &d_{\tcount}
        = \tv(\textup{P}_{X,Y}; \textup{P}_{X}\times \distr_{Y \mid X}^{(m_{\tcount})}).
    \end{align*}
    Applying~\Cref{thm:prob-cov-conf:1}, we obtain
    \begin{multline*}
        \prob\pr{
            c_{\tcount+1}(X_{\tcount+1},Z_{\tcount+1}) > \rho
        }
        \le \prob\pr{ d_{\tcount} > r_{\tcount} }
        + \prob\pr{
            c_{\tcount+1}(X_{\tcount+1},Z_{\tcount+1}) > \rho;
            d_{\tcount} \le r_{\tcount}
        }
        \\
        \le \prob\pr{ d_{\tcount} > r_{\tcount} }
        + \E\br{
            \1_{d_{\tcount} \le r_{\tcount}}
            \prob^{\mathcal{T}}\pr{
                c_{\tcount+1}(X_{\tcount+1},Z_{\tcount+1}) > \rho
            }
        }
        \\
        \le \prob\pr{ d_{\tcount} > r_{\tcount} }
        + \frac{ 2\tcount^{-1} + \sqrt{128 \alpha(1-\alpha)\tcount^{-1} \log \tcount} + 4 r_{\tcount} }{ \rho }.
    \end{multline*}
    Finally, using~\Cref{ass:tv:cv-prob},  we get $\lim_{\tcount\to\infty} \prob( d_{\tcount} > r_{\tcount} ) = 0$.
    Therefore, for any $\epsilon>0$, there exist $M_{\epsilon}>0$ and $\tilde{\tcount}_{\epsilon}\in\N$ such that, $\forall \tcount\ge \tilde{\tcount}_{\epsilon}$, it holds
    \begin{equation}\label{eq:bound:c-epsilon}
        \prob\pr{
            c_{\tcount+1}(X_{\tcount+1},Z_{\tcount+1}) > M_{\epsilon}\times\pr{ \sqrt{\tcount^{-1}\log\tcount} + r_{\tcount}}
        }
        \le \epsilon.
    \end{equation}
    Given $\epsilon\in(0,1)$, let's consider the following set
    \begin{equation*}
        \Lambda_{\tcount}^{(\epsilon)}
        = \ac{(X_{\tcount+1}(\omega), Z_{\tcount+1}(\omega)) \colon \omega\in\Omega, c_{\tcount+1}(X_{\tcount+1},Z_{\tcount+1})(\omega) \le M_{\epsilon} \times \pr{ \sqrt{\tcount^{-1}\log\tcount} + r_{\tcount}}}.
    \end{equation*}
    \Cref{eq:bound:c-epsilon} implies that
    \begin{equation*}
        \liminf_{\tcount\to\infty} \prob\pr{(X_{\tcount+1},Z_{\tcount+1}) \in \Lambda_{\tcount}^{(\epsilon)}}
        \ge 1 - \epsilon,
    \end{equation*}
    and by definition of $\Lambda_{\tcount}^{(\epsilon)}$, we also have
    \begin{equation*}
        \sup_{(x,z)\in\Lambda_{\tcount}^{(\epsilon)}} c_{\tcount+1}(x,z)
        = \Oh_{\prob}\pr{ \sqrt{\tcount^{-1}\log\tcount} + r_{\tcount}}
        .
    \end{equation*}
\end{proof}

Note that, \eqref{eq:bound:c-epsilon} also shows that
\begin{equation*}
    \abs{ \prob^{\mathcal{T}}\pr{Y_{\tcount+1}\in\mathcal{C}_{\alpha}(X_{\tcount+1}) \,\vert\, X_{\tcount+1}, Z_{\tcount+1}} - 1 + \alpha }
    = \Oh_{\prob}\pr{\tcount^{-1/2} \sqrt{\log \tcount}  + r_{\tcount}}
    .
  \end{equation*}

\subsection{Additional results}
\label{subsect:proof:quantile-rewritten}
% !TEX root = ../main.tex

Let's denote the conditional c.d.f of $\adjfunc{\tau_{X,Z}}^{-1}(\lambda_{X,Y,Z})$ by
\begin{equation*}
  F_{x,z}(\cdot)
  = \int_{\XC\times\ZC} \prob\pr{\adjfunc{\tau_{x,z}}^{-1}(\lambda_{x,Y,z}) \le \cdot \mid (X,Z)=(x,z)} \bar{\distr}_{Z\mid X=x}(\rmd z) \textup{P}_{X}(\rmd x).
\end{equation*}

\begin{lemma}
  Assume that $\q{1-\alpha}(\measure_{\tcount})\to \varphi$ almost-surely as $\tcount\to\infty$.
  If $F_{X,Z}$ is continuous almost-surely, then $\lim_{\tcount\to\infty} p_{\tcount+1}^{(x,z)}=0$, $\bar{\distr}_{Z\mid X} \times \textup{P}_{X}$-almost everywhere.
\end{lemma}

\begin{proof}
  First, define the following sets:
  \begin{align*}
    &A = \ac{\omega\in\Omega\colon \lim_{\tcount\to\infty} \q{1-\alpha}(\measure_{\tcount}(\omega)) = \varphi},
    \\
    &B = \ac{\omega\in\Omega\colon F_{X(\omega),Z(\omega)} \text{ is continuous}}.
  \end{align*}
  For all $\omega\in A\cap B$, it holds
  \begin{equation*}
    \lim_{\tcount\to\infty} F_{X(\omega),Z(\omega)}\pr{\q{1-\alpha}\pr{\measure_{\tcount}(\omega)} \wedge \varphi}
    = F_{X(\omega),Z(\omega)}\pr{\varphi}.
  \end{equation*}
  Moreover, note that we can write
  \begin{equation*}
    p_{\tcount+1}^{(x,z)}
    = F_{x,z}(\varphi) - F_{x,z}(\varphi\wedge\q{1-\alpha}(\measure_{\tcount})).
  \end{equation*}
  Hence, we deduce that
  \begin{align*}
    1
    = \prob\pr{A\cap B}
    &\le \prob\pr{
      \omega\in\Omega \colon
      \lim_{\tcount\to\infty} F_{X(\omega),Z(\omega)}\pr{\q{1-\alpha}\pr{\measure_{\tcount}(\omega)}\wedge \varphi}
      = F_{X(\omega),Z(\omega)}\pr{\varphi}
    }
    \\
    &= \prob\pr{
      \lim_{\tcount\to\infty} p_{\tcount+1}^{(X,Z)}
      = 0
    }
    \\
    &= \int_{\XC\times\ZC} \prob\pr{
      \lim_{\tcount\to\infty} p_{\tcount+1}^{(x,z)}
      = 0 \,\big\vert\, (X,Z)=(x,z)
    } \, \textup{P}_{Z\mid X=x}(\rmd z) \, \textup{P}_{X}(\rmd x).
  \end{align*}
  The last line implies that $p_{\tcount+1}^{(x,z)}\to 0$ almost $\textup{P}_{Z\mid X} \times \textup{P}_{X}$-everywhere.
\end{proof}

The prediction set, defined in~\eqref{eq:pred-set}, is derived from the $(1-\alpha)$-quantile of the conformity scores $\{\adjfunc{\bar{\tau}_{k}}^{-1}(\bar{\lambda}_{k})\}_{k=1}^{\tcount}\cup\{\infty\}$.
However, $\{\infty\}$ can be removed from these conformity scores.
Inspired by~\cite{romano2019conformalized,sesia2020comparison},  we prove a corollary of~\Cref{thm:coverage:marginal}. Its result demonstrates the marginal validity of the prediction set defined as
\begin{equation}\label{eq:def:C-bar-alpha}
  \bar{\mathcal{C}}_{\alpha}(x)
  = \confReg_{z}\pr{x; \adjfunc{\tau_{x,z}} \prBig{
    \q{(1-\alpha)(1+\tcount^{-1})} \prbig{\textstyle \frac{1}{\tcount} \sum_{k=1}^{\tcount} \delta_{\adjfunc{\bar{\tau}_{k}}^{-1}(\bar{\lambda}_{k})}}
  }}.
\end{equation}
While the prediction set $\bar{\mathcal{C}}_{\alpha}(x)$ relies on the quantile of the distribution $\frac{1}{\tcount} \sum_{k=1}^{\tcount} \delta_{\adjfunc{\bar{\tau}_{k}}^{-1}(\bar{\lambda}_{k})}$, its proof reveals that this prediction set is equivalent to $\mathcal{C}_{\alpha}(x)$.
\thmspace

\begin{corollary}\label{cor:coverage:marginal}
  Under the same assumptions as in \Cref{thm:coverage:marginal}, for any $\alpha\in [1/(\tcount+1), 1]$, we have
  \begin{equation*}
    1 - \alpha
    \le \prob\pr{Y_{\tcount+1} \in \bar{\mathcal{C}}_{\alpha}(X_{\tcount+1})}
    < 1 - \alpha + \frac{1}{\tcount+1},
  \end{equation*}
  where the upper bound only holds if the conformity scores $\{\adjfunc{\bar{\tau}_{k}}^{-1}(\bar{\lambda}_{k})\}_{k=1}^{\tcount+1}$ are almost surely distinct.
\end{corollary}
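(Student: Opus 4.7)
}
The strategy is to reduce the corollary directly to \Cref{thm:coverage:marginal} by verifying that, under the condition $\alpha \in [1/(\tcount+1), 1]$, the prediction set $\bar{\mathcal{C}}_{\alpha}(X_{\tcount+1})$ coincides pointwise (up to an almost-sure event) with the set $\mathcal{C}_{\alpha}(X_{\tcount+1})$ defined in~\eqref{eq:pred-set}. Once this identification is established, both bounds are inherited from \Cref{thm:coverage:marginal}.

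The core step is a bookkeeping argument comparing two quantiles. I would introduce the shorthand $S_k = \adjfunc{\bar{\tau}_{k}}^{-1}(\bar{\lambda}_{k})$ and let $S_{(1)}\le \cdots \le S_{(\tcount)}$ denote their order statistics. Using the standard definition $Q_p(\nu)=\inf\{t\colon \nu((-\infty,t])\ge p\}$, the quantile inside $\bar{\mathcal{C}}_{\alpha}$ equals $S_{(m)}$ with $m=\lceil \tcount(1-\alpha)(1+\tcount^{-1})\rceil=\lceil (1-\alpha)(\tcount+1)\rceil$. On the other hand, the quantile $\q{1-\alpha}(\mu)$ appearing in~\eqref{eq:pred-set}, where $\mu=\frac{1}{\tcount+1}\sum_{k=1}^\tcount \delta_{S_k}+\frac{1}{\tcount+1}\delta_{\infty}$, is the smallest point in $\R\cup\{\infty\}$ whose cumulative $\mu$-mass reaches $1-\alpha$; a direct count shows this equals $S_{(m)}$ whenever $m\le \tcount$ and equals $\infty$ otherwise. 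The assumption $\alpha\ge 1/(\tcount+1)$ is exactly the condition $(1-\alpha)(\tcount+1)\le \tcount$, hence $m\le \tcount$ and the two quantiles agree.

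Once the quantiles are shown to be equal, I would invoke the monotonicity of $\adjfunc{\bar{\tau}_{\tcount+1}}$ (granted by \Cref{ass:tau}) and of $t\mapsto \confReg_{Z_{\tcount+1}}(X_{\tcount+1};t)$ (granted by \Cref{ass:confReg}) to conclude $\bar{\mathcal{C}}_{\alpha}(X_{\tcount+1})=\mathcal{C}_{\alpha}(X_{\tcount+1})$ almost surely. Applying \Cref{thm:coverage:marginal} to the right-hand side then yields the lower bound $1-\alpha\le \prob(Y_{\tcount+1}\in \bar{\mathcal{C}}_{\alpha}(X_{\tcount+1}))$, and, under the assumption that the $S_k$ are almost surely distinct, the matching upper bound $\prob(Y_{\tcount+1}\in \bar{\mathcal{C}}_{\alpha}(X_{\tcount+1}))<1-\alpha+1/(\tcount+1)$.

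The only delicate point is the ceiling-function calculation that identifies the two quantiles; this is where I expect the main (mild) obstacle to lie, since it requires careful handling of the $\delta_{\infty}$ atom and of the boundary case $(1-\alpha)(\tcount+1)=\tcount$. Beyond this discrete bookkeeping, the argument is purely a reduction to the previously established \Cref{thm:coverage:marginal} and requires no new probabilistic input.
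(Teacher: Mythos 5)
Your proposal is correct and follows essentially the same route as the paper's proof: both reduce the statement to \Cref{thm:coverage:marginal} by showing that $\q{1-\alpha}(\mu)$ and $\q{(1-\alpha)(1+\tcount^{-1})}\prn{\frac{1}{\tcount}\sum_{k=1}^{\tcount}\delta_{V_k}}$ coincide as the $\lceil(1-\alpha)(\tcount+1)\rceil$-th order statistic whenever $\alpha \ge 1/(\tcount+1)$. One minor over-step: once the two quantile values are equal, the sets $\bar{\mathcal{C}}_{\alpha}$ and $\mathcal{C}_{\alpha}$ are identical by definition, so the appeal to monotonicity of $\adjfunc{\tau}$ and $\confReg$ at the end is unnecessary.
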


\begin{proof}
  Let $\alpha\in\R$ such that $(\tcount+1)^{-1} \le \alpha \le 1$, and recall that
  \begin{equation*}
    \measure_{\tcount} = \frac{1}{\tcount+1}\sum_{k=1}^{\tcount} \delta_{\adjfunc{\bar{\tau}_{k}}^{-1}(\bar{\lambda}_{k})} + \frac{1}{\tcount+1} \delta_{\infty}.
  \end{equation*}
  Since $\alpha\ge(\tcount+1)^{-1}$, the quantile $\q{1-\alpha}(\measure_{\tcount})$ is the $k_{\alpha}$th order statistic of $V_1,\ldots,V_{\tcount}$, where
  \begin{equation*}
    V_k = \adjfunc{\bar{\tau}_{k}}^{-1}(\bar{\lambda}_{k}),
    \quad \text{and} \quad
    k_\alpha = \lceil (1-\alpha) (\tcount+1) \rceil.
  \end{equation*}
  However, $\forall \beta\in (\frac{k_{\alpha}-1}{\tcount}, \frac{k_{\alpha}}{\tcount}]$, we have
  \begin{equation*}
    \q{\beta}\pr{\textstyle \frac{1}{\tcount} \sum_{k=1}^{\tcount} \delta_{V_k}} = V_{(k_{\alpha})}.
  \end{equation*}
  Since $\mathcal{C}_{\alpha}(X_{\tcount+1}) =  \confReg_{Z_{\tcount+1}}\prn{X_{\tcount+1}; \adjfunc{\bar{\tau}_{\tcount+1}}\prn{V_{(k_{\alpha})}}}$, \Cref{thm:coverage:marginal} implies that
  \begin{equation*}
    1 - \alpha
    \le 
    \prob\pr{Y_{\tcount+1} \in \confReg_{Z_{\tcount+1}}\pr{X_{\tcount+1}; \adjfunc{\bar{\tau}_{\tcount+1}}\pr{\q{\beta}\pr{\textstyle \frac{1}{\tcount} \sum_{k=1}^{\tcount} \delta_{V_k}}}}}
    < 1 - \alpha + \frac{1}{\tcount + 1}.
  \end{equation*}
  Setting $\beta = (1-\alpha)(1+\tcount^{-1})$ in the previous inequality and using the definition of $\bar{\mathcal{C}}_{\alpha}(X_{\tcount+1})$ given in~\eqref{eq:def:C-bar-alpha} concludes the proof.
\end{proof}

\section{Experimental Setup and Results}
\label{suppl:expriments}
% !TEX root = ../main.tex

  This section aims to provide a comprehensive understanding of the \algo\ algorithm. We want to further explore the \algo\ approach and to better explain the key concepts.

\paragraph{Choice of $\adjfunc{t}$.}
  We present examples of mappings $\adjfunc{t}$ and their inverses $\adjfunc{\tau}^{-1}$ in~\Cref{table:adjustment}. 
  The choice of the mapping $\adjfunc{t}$ is crucial for the performance of the method, and we investigate their impact in~\Cref{sec:expriments}.
  For instance, choosing $\adjfunc{\tau}(\lambda)=\tau\lambda$ results in approximately conditionally valid prediction sets, as long as $\distr_{Y \mid X=x}$ accurately estimates the conditional distribution $\textup{P}_{Y \mid X=x}$; see~\Cref{thm:coverage:conditional}-\Cref{corr:coverage-conditional:asymptotic}. Initially we also considered other adjustment functions based on exponent, sigmoid and $\tanh$ functions, but they all performed worse than linear and sum. As we show later in~\ref{table:combined_real}, these two selected adjustment function perform similarly, showing only marginal differences on some datasets. Designing new adjustment functions is a possible future research direction.

\subsection{Highest predictive density (HPD) regions}
\label{sec:hpd}
\paragraph{\algo\ with Explicit Conditional Density estimate: \hpdalgo.}
Assume that an estimator the conditional density function is known, denoted by $\pdfdistr_{Y\mid X=x}$.
The confidence set is defined as $\confReg(x; t)= \{ y \in \YC\colon \pdfdistr_{Y\mid X=x}(y) \ge -t\}$. We omit the variable $z$ from the notation, as we do not consider exogenous randomization in this case.
The parameter $\tau_x$ is obtained by solving 
\begin{equation}
\label{eq:tau-hpalgo}
\textstyle
 \tau_{x} = \argmin\left\{\tau\in\R\colon \int_{\confReg(x; \tau)} \pdfdistr_{Y \mid X=x}(y) \, \rmd y \ge 1 - \alpha\right\}.
\end{equation}
We then compute $\lambda_{x,y}=-\pdfdistr_{Y\mid X=x}(y)$ and derive the prediction set as
\[
  \textstyle 
  \mathcal{C}_\alpha(x)
  = \ac{ y \in \YC\colon \pdfdistr_{Y\mid X=x}(y) \ge -\adjfunc{\tau_x}\pr{\q{1-\alpha}\pr{\measure_{\tcount}}}}.
\]
If we take $\adjfunc{\tau}(\lambda)=\lambda$ and $\varphi=1$, the method shares similarity with the \texttt{CD-split} method, proposed in~\citet{izbicki2019flexible}. 
While \texttt{CD-split} uses $\lambda_{x,y}$ as the conformity score, our method uses $\adjfunc{\tau_x}^{-1}(\lambda_{x,y})$, which incorporates the information from $\tau_x$ to modify $\pdfdistr_{Y\mid X=x}(y)$. The \hpdalgo\ workflow is summarized in Algorithm~\ref{algo:CP2-HPD}.

Of course, the computation of~\eqref{eq:tau-hpalgo} is in general highly non-trivial. \cite{izbicki2019flexible} suggested to use binning, therefore approximating the conditional predictive distribution with histograms. The method is restricted to the case where the dimension of $\YC$ the response is small; see~\citet{izbicki2019flexible} for the case of $\YC= \R$. When the dimension becomes larger, then the estimation of HPD is typically based on Monte Carlo methods, thus requiring the introduction of auxiliary variables.

\begin{algorithm}[H]
  \caption{\hpdalgo}
  \label{algo:CP2-HPD}
  \begin{algorithmic}[]
      \State {\bfseries Input:} dataset $\{(X_k,Y_k)\}_{k\in[\tcount]}$, significance $\alpha$, conditional density $\pdfdistr_{Y \mid X}$, function $\adjfunc{t}$.
      \State \algorithmiccomment{Compute the $(1-\alpha)$-quantile}
      \For{$k=1$ {\bfseries to} $\tcount$}
          \State Set $\bar{\lambda}_{k} = -\pdfdistr_{Y \mid X=X_k}(Y_k)$
          \State Set $\bar{\tau}_k = \tau_{X_k}$ as given in~\eqref{eq:def:tau}
      \EndFor
      \State $\q{1-\alpha}\pr{\measure_{\tcount}} \gets \lceil (1-\alpha) (\tcount+1) \rceil$-th smallest value in $\{\adjfunc{\bar{\tau}_{k}}^{-1}(\bar{\lambda}_k)\}_{k\in[\tcount]}\cup\{\infty\}$
      \State \algorithmiccomment{Compute the prediction set for a new point $x\in\XC$}
      \State Compute $\tau_{x}$ in \eqref{eq:def:tau}.
      \State {\bfseries Output:} $\mathcal{C}_{\alpha}(x) = \{ y\in\YC \colon \pdfdistr_{Y \mid X=x}(y) \ge -\adjfunc{\tau_{x}}\prt{\q{1-\alpha}\prt{\measure_{\tcount}}} \}$.
  \end{algorithmic}
\end{algorithm}

%  \begin{table}[t]
%    \centering
%    \begin{tabular}{lccccc}
%      \toprule
%      \textsc{Adjustment} & Linear & Diff & Exp & Tanh & Sigmoid \\
%      \midrule
%      $\adjfunc{\tau}(\lambda)$ & $\tau \lambda$ & $\tau + \lambda$ & $\exp(\tau \lambda)$ & $\tan(\tau \lambda)$ & $(1 + \exp(-\lambda \tau))^{-1}$ \\
%      $\adjfunc{\tau}^{-1}(\lambda)$ & $\tau^{-1} \lambda$ & $\lambda - \tau$ & $\tau^{-1} \log \lambda$ & $ \tau^{-1} \mathrm{arctan}\,\lambda$ & $\tau^{-1} \log( (1 - \lambda)^{-1} \lambda)$ \\
%      \bottomrule
%    \end{tabular}
%    \vskip 0.08in
%    \caption{Adjustment Functions $\adjfunc{t}$ and their inverses $\adjfunc{\tau}^{-1}$.}
%    \label{table:adjustment}
%  \end{table}

  % \begin{table}[t]
  %   \centering
  %   \begin{tabular}{lccccc}
  %     \toprule
  %     \textsc{Adjustment} & Linear & Diff \\
  %     \midrule
  %     Linear & $\tau \lambda$ & $\tau + \lambda$ \\
  %     $\adjfunc{\tau}(\lambda)$ & $\tau \lambda$ & $\tau + \lambda$ \\
  %     $\adjfunc{\tau}^{-1}(\lambda)$ & $\tau^{-1} \lambda$ & $\lambda - \tau$ \\
  %     \bottomrule
  %   \end{tabular}
  %   \vskip 0.08in
  %   \caption{Adjustment Functions $\adjfunc{t}$ and their inverses $\adjfunc{\tau}^{-1}$.}
  %   \label{table:adjustment}
  % \end{table}

  \begin{table}[t]
    \centering
    \begin{tabular}{lccc}
      \toprule
      \textsc{Name} & $\adjfunc{\tau}(\lambda)$ & $\adjfunc{\tau}^{-1}(\lambda)$ & $\varphi$ \\
      \midrule
      Linear & $\tau \lambda$ & $\tau^{-1} \lambda$ & 1 \\
      Difference & $\tau + \lambda$ & $\lambda - \tau$ & 0 \\
      \bottomrule
    \end{tabular}
    \vskip 0.08in
    \caption{Adjustment Functions $\adjfunc{t}$, their inverses $\adjfunc{\tau}^{-1}$ and $\varphi$ values used in our experiments.}
    \label{table:adjustment}
  \end{table}

\subsection{Details of the experimental setup}
  We use the Mixture Density Network~\citep{bishop1994mixture} implementation from CDE~\citep{rothfuss2019conditional} Python package\footnote{\url{https://github.com/freelunchtheorem/Conditional_Density_Estimation}} as a base model for \texttt{CP}, \texttt{PCP} and \algo. The underlying neural network contains two hidden layers of 100 neurons each and was trained for 1000 epochs for each split of the data. Number of components of the Gaussian Mixture was set to 10 for all datasets.

  For the \texttt{CQR}~\citep{romano2019conformalized} and \texttt{CHR}~\citep{sesia2021conformal} we use the original authors' implementation\footnote{\url{https://github.com/msesia/chr}}. The underlying neural network that outputs conditional quantiles consists of two hidden layers with 64 neurons each. Training was performed for 200 epochs for batch size 250. 

  We replicate the experiments for 50 random splits of all nine datasets. To lower noise in calculated performance metrics we reuse trained networks and samples across different top-level algorithms for each replication.

\subsection{Worst-slab coverage}
\label{appendix:worst_slab}
  Here we present some additional experiments related to conditional coverage achieved by different methods. We have used Worst Slab Coverage metric, which is sensitive to the set of labs considered during the search. Following~\citet{cauchois2020knowing, romano2020classification}, recall that a slab is defined as
  \begin{equation*}
    S_{v, a, b} = \left\{ x \in \mathbb{R}^p: a < v^Tx < b \right\},
  \end{equation*}
  where $v \in \mathbb{R}^p$ and $a,b \in\mathbb{R}$, such that $a<b$. Now, given the prediction set $\mathcal{C}(x)$ and $\delta \in [0, 1]$, the \textit{worst-slab coverage} is defined as:
  \begin{equation*}
    \mathrm{WSC}(\mathcal{C}, \delta) =  \inf_{v \in \mathbb{R}^p, a<b\in\mathbb{R}} \prob\left( Y \in \mathcal{C}(X) | X \in S_{v, a, b} \right) \: s.t. \: \prob(X \in S_{v, a, b}) \ge 1 - \delta. 
  \end{equation*}

 In our experiments we follow~\cite{romano2020classification} in our implementation of this metric. Namely, we use $25\%$ of the data to find the worst slab and the use the remaining $75\%$ to calculate the final value on this slab. We use $5000$ randomly sampled directions, that are the same for each algorithm and change for each replication.

 % In \Cref{sec:expriments}, we presented the results obtained for $(1-\delta)=0.1$. In this case, the considered slabs must contain at least $10\%$ of the data. In Figure~\ref{fig:real_wsc04} we report results obtained for $(1-\delta)=0.4$. We can see that performance improves a lot compared to $\delta=0.1$, and most results become indistinguishable.

 % \begin{figure}[t!]
 %   \centering
 %   \includegraphics[width=\textwidth]{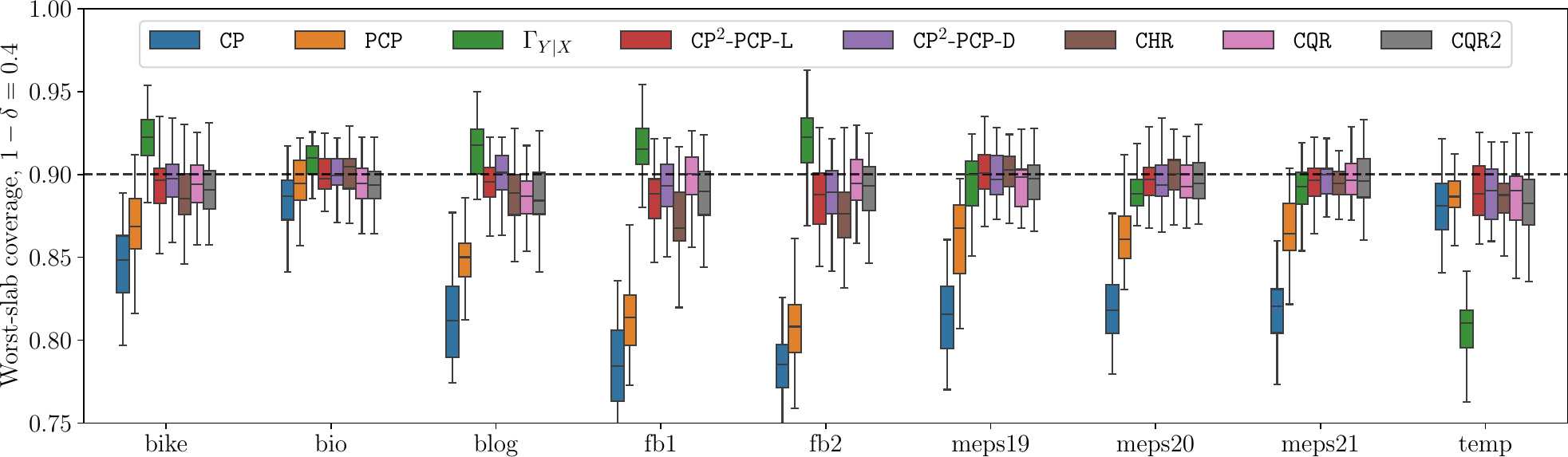}
 %   \caption{Worst-slab coverage on real data, $(1-\delta) = 0.4$.}
 %   \label{fig:real_wsc04}
 % \end{figure}

\subsection{Extended results of real data experiments}

  Table~\ref{table:combined_real} we summarize all metrics from our real-world data experiments. For conditional coverage we report worst-slab coverage with $(1-\delta)=0.1$. On six out of nine datasets \algo\ method achieves the best result in conditional coverage. In terms of interval width PCP method produces the narrowest intervals.As we can see, it happens at the expense of conditional coverage: PCP often achieves significantly lower values.

  We also present a more detailed view of set size differences between the methods. In the main part we reported average rank of each method in Figure~\ref{fig:md_2d}. We ranked the algorithms by their projected area at each test point and averaged the ranks. Here we show raw areas of the projections onto each pairs of axes for $\texttt{sgemm\_small}$ dataset in table~\ref{table:areas_sgemm}. All targets were standardised to zero mean and unit standard deviation so that different projections will be in the same scale. We see that $\texttt{PCP}$ produces smaller set sizes like in one-dimensional case. Quantile-regression based methods have the largest sets, even larger than the fixed-sized sets of $\texttt{CP}$. Our approach demonstrates only modest increase in prediction set size compared to $\texttt{PCP}$ while achieving sharper conditional coverage.

\begin{table}[h]
\centering
\caption{Summary results of experiments on real data. ``M. Cov.'' stands for marginal coverage, ``C. Cov.'' is the worst-slab coverage (here $(1-\delta)=0.1$), and $w_{sd}$ is average total length of the prediction sets, scaled by standard deviation of $Y$. Nominal coverage level is set to $(1 - \alpha) = 0.9$. For $\gammayx$, $\texttt{PCP}$, \pcpalgo\ we use the same underlying mixture density network model with $50$ samples. $\texttt{CHR}$ and $\texttt{CQR}(2)$ also share the same base neural network model. We average results of $50$ random data splits. For each dataset, we highlighted the algorithm achieving conditional coverage closest to the nominal level.}
\label{table:combined_real}
\begin{tabular}{l|l|r|r|r|r|r|r|r|r}
\toprule
& & & & & \multicolumn{2}{c|}{$\texttt{CP}^2$} & & \\
Dataset & Metric & $\texttt{CP}$ & $\texttt{PCP}$ & $\Pi_{Y|X}$ & $\texttt{PCP}$-$\texttt{L}$ & $\texttt{PCP}$-$\texttt{D}$ & $\texttt{CHR}$ & $\texttt{CQR}$ & $\texttt{CQR2}$ \\
\midrule
\multirow[|c|]{3}{*}{bike} & M. Cov. & 0.90 & 0.90 & 0.93 & 0.90 & 0.90 & 0.90 & \cellcolor{lightgray} 0.90 & 0.90 \\
 & C. Cov. & 0.79 & 0.85 & 0.92 & 0.89 & 0.89 & 0.88 & \cellcolor{lightgray} 0.90 & 0.87 \\
 & $w_{sd}$ & \bfseries 0.71 & 0.71 & 0.83 & 0.79 & 0.80 & 1.94 & \cellcolor{lightgray} 2.25 & 2.31 \\
\midrule
\multirow[|c|]{3}{*}{bio} & M. Cov. & 0.90 & 0.90 & 0.91 & 0.90 & 0.90 & \cellcolor{lightgray} 0.90 & 0.90 & 0.90 \\
 & C. Cov. & 0.88 & 0.89 & 0.91 & 0.90 & 0.90 & \cellcolor{lightgray} 0.90 & 0.89 & 0.89 \\
 & $w_{sd}$ & 2.34 & \bfseries 1.89 & 1.95 & 1.97 & 1.95 & \cellcolor{lightgray} 1.92 & 2.13 & 2.10 \\
\midrule
\multirow[|c|]{3}{*}{blog} & M. Cov. & 0.90 & 0.90 & 0.91 & 0.90 & \cellcolor{lightgray} 0.90 & 0.90 & 0.90 & 0.90 \\
 & C. Cov. & 0.60 & 0.74 & 0.91 & 0.89 & \cellcolor{lightgray} 0.90 & 0.87 & 0.87 & 0.86 \\
 & $w_{sd}$ & 0.60 & \bfseries 0.30 & 0.72 & 0.72 & \cellcolor{lightgray} 0.71 & 0.31 & 0.44 & 0.39 \\
\midrule
\multirow[|c|]{3}{*}{fb1} & M. Cov. & 0.90 & 0.90 & 0.93 & 0.90 & 0.90 & 0.90 & \cellcolor{lightgray} 0.90 & 0.90 \\
 & C. Cov. & 0.49 & 0.64 & 0.92 & 0.88 & 0.89 & 0.87 & \cellcolor{lightgray} 0.90 & 0.87 \\
 & $w_{sd}$ & 0.47 & 0.28 & 0.58 & 0.56 & 0.59 & \bfseries 0.26 & \cellcolor{lightgray} 0.37 & 0.33 \\
\midrule
\multirow[|c|]{3}{*}{fb2} & M. Cov. & 0.90 & 0.90 & \cellcolor{lightgray} 0.93 & 0.90 & 0.90 & 0.90 & 0.90 & 0.90 \\
 & C. Cov. & 0.50 & 0.61 & \cellcolor{lightgray} 0.91 & 0.88 & 0.88 & 0.88 & 0.89 & 0.89 \\
 & $w_{sd}$ & 0.53 & \bfseries 0.32 & \cellcolor{lightgray} 0.65 & 0.62 & 0.65 & 0.33 & 0.43 & 0.37 \\
\midrule
\multirow[|c|]{3}{*}{meps19} & M. Cov. & 0.90 & 0.90 & 0.89 & \cellcolor{lightgray} 0.90 & 0.90 & 0.90 & 0.89 & 0.90 \\
 & C. Cov. & 0.54 & 0.78 & 0.89 & \cellcolor{lightgray} 0.90 & 0.89 & 0.90 & 0.88 & 0.89 \\
 & $w_{sd}$ & 1.05 & \bfseries 0.73 & 1.02 & \cellcolor{lightgray} 1.19 & 1.07 & 0.76 & 1.14 & 1.19 \\
\midrule
\multirow[|c|]{3}{*}{meps20} & M. Cov. & 0.90 & 0.90 & 0.89 & 0.90 & \cellcolor{lightgray} 0.90 & 0.90 & 0.90 & 0.90 \\
 & C. Cov. & 0.58 & 0.80 & 0.89 & 0.90 & \cellcolor{lightgray} 0.90 & 0.91 & 0.88 & 0.89 \\
 & $w_{sd}$ & 1.06 & \bfseries 0.75 & 0.98 & 1.15 & \cellcolor{lightgray} 1.04 & 0.77 & 1.09 & 1.17 \\
\midrule
\multirow[|c|]{3}{*}{meps21} & M. Cov. & 0.90 & 0.90 & 0.89 & 0.90 & 0.90 & \cellcolor{lightgray} 0.90 & 0.90 & 0.90 \\
 & C. Cov. & 0.54 & 0.81 & 0.89 & 0.89 & 0.89 & \cellcolor{lightgray} 0.90 & 0.89 & 0.88 \\
 & $w_{sd}$ & 1.04 & \bfseries 0.72 & 0.99 & 1.16 & 1.04 & \cellcolor{lightgray} 0.79 & 1.13 & 1.21 \\
\midrule
\multirow[|c|]{3}{*}{temp} & M. Cov. & 0.90 & 0.90 & 0.82 & 0.90 & \cellcolor{lightgray} 0.90 & 0.90 & 0.90 & 0.90 \\
 & C. Cov. & 0.87 & 0.89 & 0.81 & 0.88 & \cellcolor{lightgray} 0.89 & 0.86 & 0.85 & 0.86 \\
 & $w_{sd}$ & 0.87 & 0.92 & \bfseries 0.78 & 0.96 & \cellcolor{lightgray} 0.93 & 1.31 & 1.48 & 1.30 \\
\bottomrule
\end{tabular}
\end{table}

\begin{table}[h]
\centering
\caption{Prediction set size comparison for $\texttt{sgemm\_small}$ dataset. Rows correspond to different pairs of targets (dataset has 4 targets). For each method the reported value is the mean area of the 2D projection of the prediction set to the corresponding axes pair.}
\label{table:areas_sgemm}
\begin{tabular}{l|cccccccc}
\toprule
& & & & \multicolumn{2}{|c|}{$\texttt{CP}^2$} & & & \\
Axes & $\texttt{CP}$ & $\texttt{PCP}$ & $\Pi_{Y|X}$ & $\texttt{PCP}$-$\texttt{L}$ & $\texttt{PCP}$-$\texttt{D}$ & $\texttt{CHR}$ & $\texttt{CQR}$ & $\texttt{CQR2}$ \\
\midrule
(0, 1) & 2.137 & 0.435 & 0.517 & 0.576 & 0.560 & 2.290 & 2.550 & 2.436 \\
(0, 2) & 2.145 & 0.435 & 0.518 & 0.577 & 0.561 & 2.267 & 2.506 & 2.358 \\
(0, 3) & 2.145 & 0.436 & 0.519 & 0.578 & 0.561 & 2.086 & 2.366 & 2.172 \\
(1, 2) & 2.146 & 0.435 & 0.517 & 0.576 & 0.560 & 2.388 & 2.622 & 2.546 \\
(1, 3) & 2.146 & 0.435 & 0.517 & 0.576 & 0.560 & 2.166 & 2.461 & 2.314 \\
(2, 3) & 2.154 & 0.436 & 0.519 & 0.578 & 0.562 & 2.153 & 2.430 & 2.255 \\
\end{tabular}

\end{table}

\subsection{Other perspective on conditional coverage}
  The worst-slab coverage metric used in the previous section is not always helpful: (1) it provides a single number for each method, and (2) the selected slab is different for each algorithm. In practice we might be interested in how sharp the coverage is along the portion of the input space spanned by the test data. To explore this, we used two approaches: dimensionality reduction and clustering. Results for clustering with HDBSCAN are presented in the main part in Figure~\ref{fig:ccov_f1_hdb}, here turn to dimensionality reduction.

  First we apply UMAP algorithm to project data to two dimensions and then construct a heatmap plot to show coverage in each bin of the histogram. Results for \texttt{meps\_19} dataset are presented in Figure~\ref{fig:ccov_meps19hist}. Nominal coverage is set to $(1-\alpha)=0.9$ and corresponds to gray part of the color scale. We can see that our method and baseline $\gammayx$ perform better than \texttt{CP} and \texttt{PCP} across the space.

  \begin{figure}[t!]
    \centering
    \includegraphics[width=\textwidth]{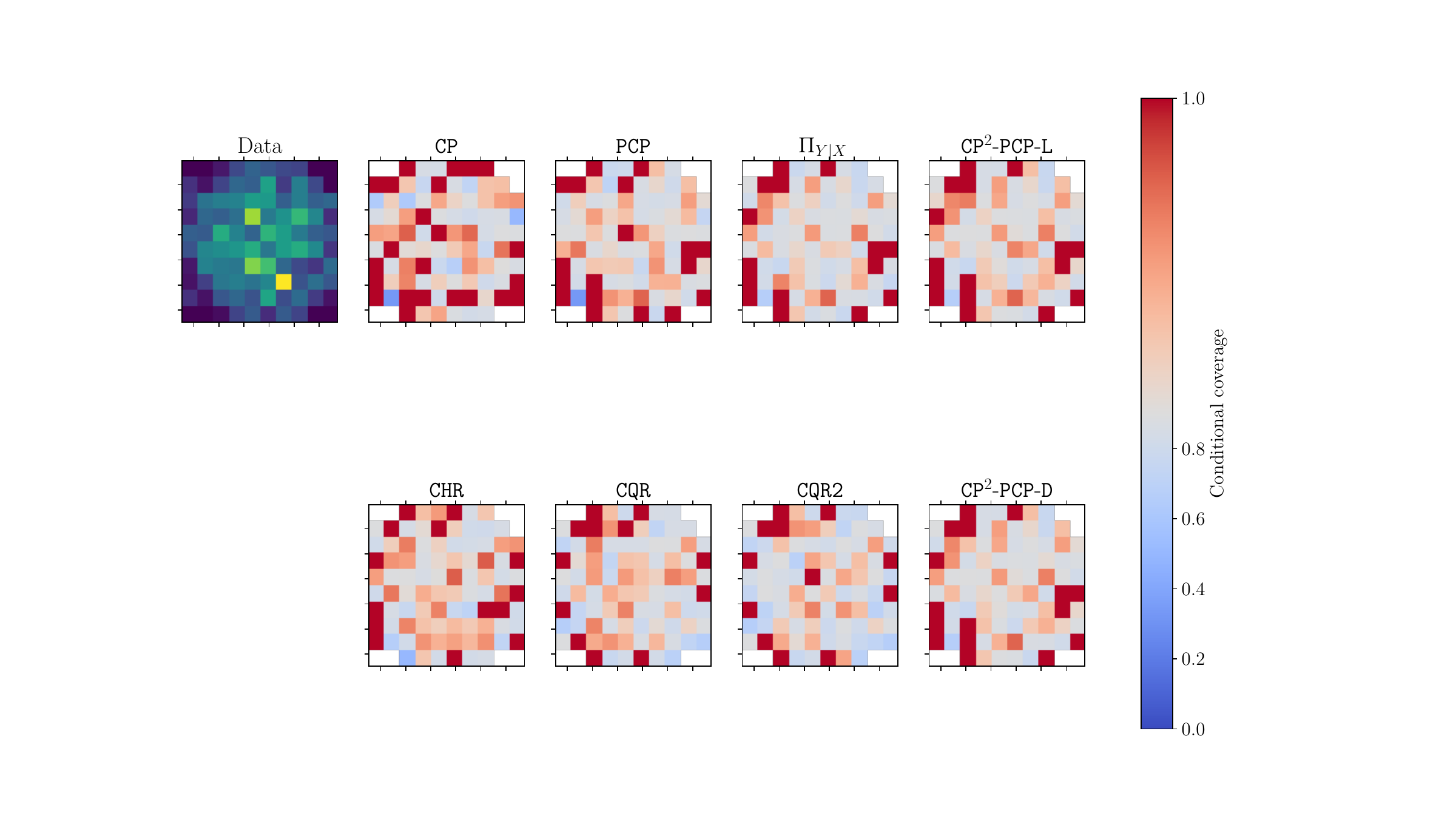}
    \caption{Conditional coverage after dimensionality reduction, \texttt{meps\_21} dataset. Data projected to two dimensions using UMAP algorithm with Canberra metric, with the \texttt{n\_neighbors} hyperparameter set to 2. Nominal coverage is set to $(1-\alpha)=0.1$, it corresponds to gray on the color scale.}
    \label{fig:ccov_meps19hist}
  \end{figure}

\end{document}